\documentclass{article}

\usepackage{microtype}
\usepackage{graphicx}
\usepackage{subfigure}
\usepackage{booktabs} %

\usepackage{hyperref}
\usepackage{multirow}
\usepackage{makecell}
\usepackage{siunitx}
\usepackage{enumitem}
\usepackage{tcolorbox}
\usepackage{tabularx}
\usepackage{array}
\usepackage{booktabs}

\usepackage{placeins}
\usepackage{stfloats}

\usepackage{xcolor}
\tcbuselibrary{skins,breakable}

\newcommand{\bW}{\mathbf{W}}
\newcommand{\bD}{\mathbf{\Delta}}
\newcommand{\bK}{\mathbf{K}}

\newcommand{\bV}{\mathbf{V}}

\newcommand{\bR}{\mathbf{R}}
\newcommand{\bP}{\mathbf{P}}
\newcommand{\errbar}[1]{\textsubscript{$\pm$#1}}
\newcommand{\R}{\mathbb{R}}
\newcommand{\norm}[1]{\lVert #1 \rVert}
\newcommand{\inner}[2]{\langle #1, #2 \rangle}
\newcommand{\grad}{\nabla}

\usepackage[accepted]{icml2026}

\usepackage{amsmath}
\usepackage{amssymb}
\usepackage{mathtools}
\usepackage{amsthm}
\usepackage{cancel}
\usepackage{algpseudocode}
\usepackage[capitalize,noabbrev]{cleveref}

\theoremstyle{plain}
\newtheorem{theorem}{Theorem}[section]
\newtheorem{proposition}[theorem]{Proposition}
\newtheorem{lemma}[theorem]{Lemma}
\newtheorem{corollary}[theorem]{Corollary}
\newtheorem*{proposition*}{\textbf{Proposition}}
\newtheorem*{lemma*}{\textbf{Lemma}}
\theoremstyle{definition}

\theoremstyle{remark}
\newtheorem{remark}[theorem]{Remark}

\usepackage[textsize=tiny]{todonotes}

\begin{document}

\twocolumn[
\icmltitle{The Labyrinth and the Thread: Rethinking Regularizations in Sequential Knowledge Editing for Large Language Models}

\icmlsetsymbol{equal}{*}

\begin{icmlauthorlist}
\icmlauthor{Zheng Wang}{comp,comp2}
\icmlauthor{Kaixuan Zhang}{comp,comp2}
\icmlauthor{Wanfang Chen}{comp,comp2}
\icmlauthor{Jingwen Zhang}{comp2,school}
\icmlauthor{Xiaonan Lu}{comp,comp2}
\end{icmlauthorlist}

\icmlaffiliation{comp}{Bosch Center for Artificial Intelligence (BCAI).}
\icmlaffiliation{comp2}{Bosch (China) Investment Ltd.}
\icmlaffiliation{school}{School of Statistics, East China Normal University}
\icmlcorrespondingauthor{Zheng Wang}{david.wang3@cn.bosch.com}

\icmlcorrespondingauthor{Kaixuan Zhang}{kaixuan.zhang@cn.bosch.com}

\icmlkeywords{Knowledge Editing, LLM}

\vskip 0.3in
]

\printAffiliationsAndNotice{} 

\begin{abstract}

Sequential editing of structured knowledge in large language models allows targeted factual updates without retraining, yet existing methods often rely on complex regularization or constraint mechanisms whose necessity remains unclear. In this work, we systematically investigate the mechanisms underlying effective and stable sequential editing. Specifically, we first analyze the empirical success of AlphaEdit and establish, via a rigorous optimization analysis, the formal equivalence between one-time and sequential editing. Building on this insight, we generalize the equivalence to a broader class of editing objectives, demonstrating that stability emerges naturally from properly accounting for accumulated editing constraints, rather than from specialized regularization or null-space operations. We empirically confirm that many commonly used regularization strategies are unnecessary for reliable sequential updates. Furthermore, we extend our framework to handle conflicting edits, ensuring robust and consistent behavior under contradictory updates. Ultimately, our work provides Ariadne’s thread through the labyrinth of sequential editing, charting a path toward simpler, more interpretable, and dependable knowledge updates. Our code is available at \href{https://github.com/Wangzzzzzzzz/OTE-SE-Alignment}{https://github.com/Wangzzzzzzzz/OTE-SE-Alignment}.

\end{abstract}

\begin{figure*}[htbp]
  \centering
    \begin{minipage}[c]{0.76\linewidth}
    \centering  \includegraphics[width=\linewidth]{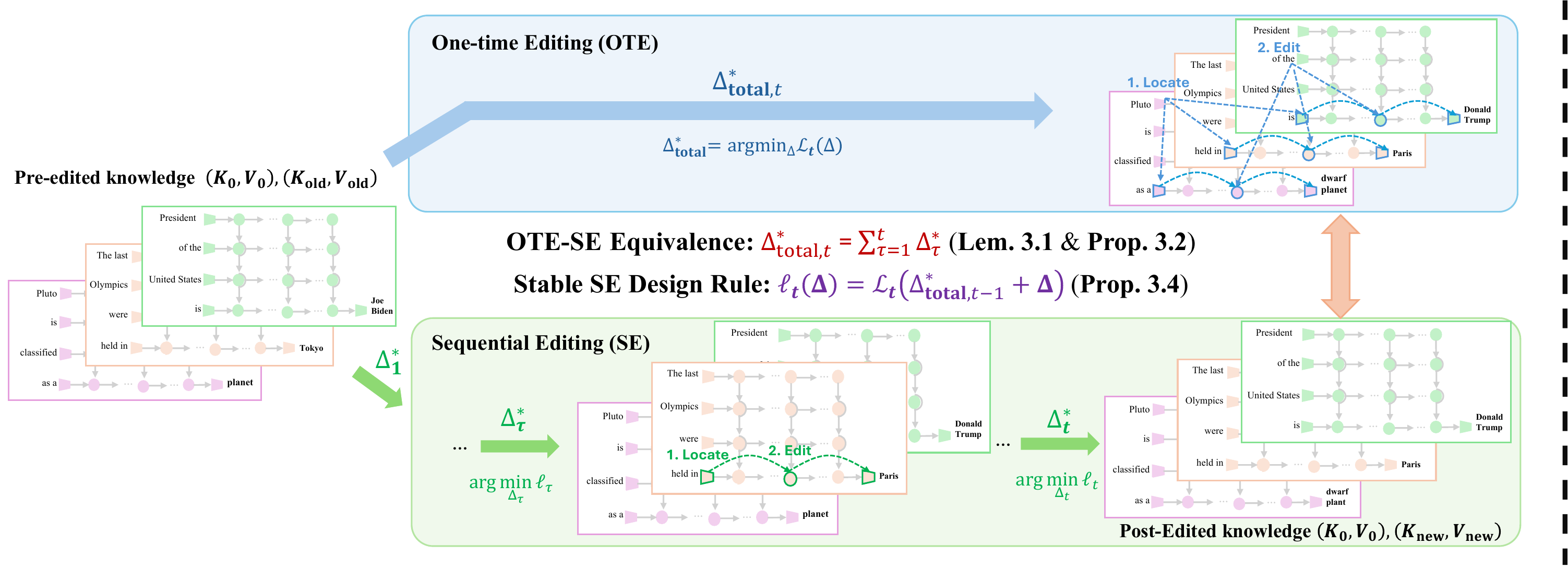}
  \end{minipage}
  \hfill
    \begin{minipage}[c]{0.21\linewidth}
    \centering
    \includegraphics[width=\linewidth]{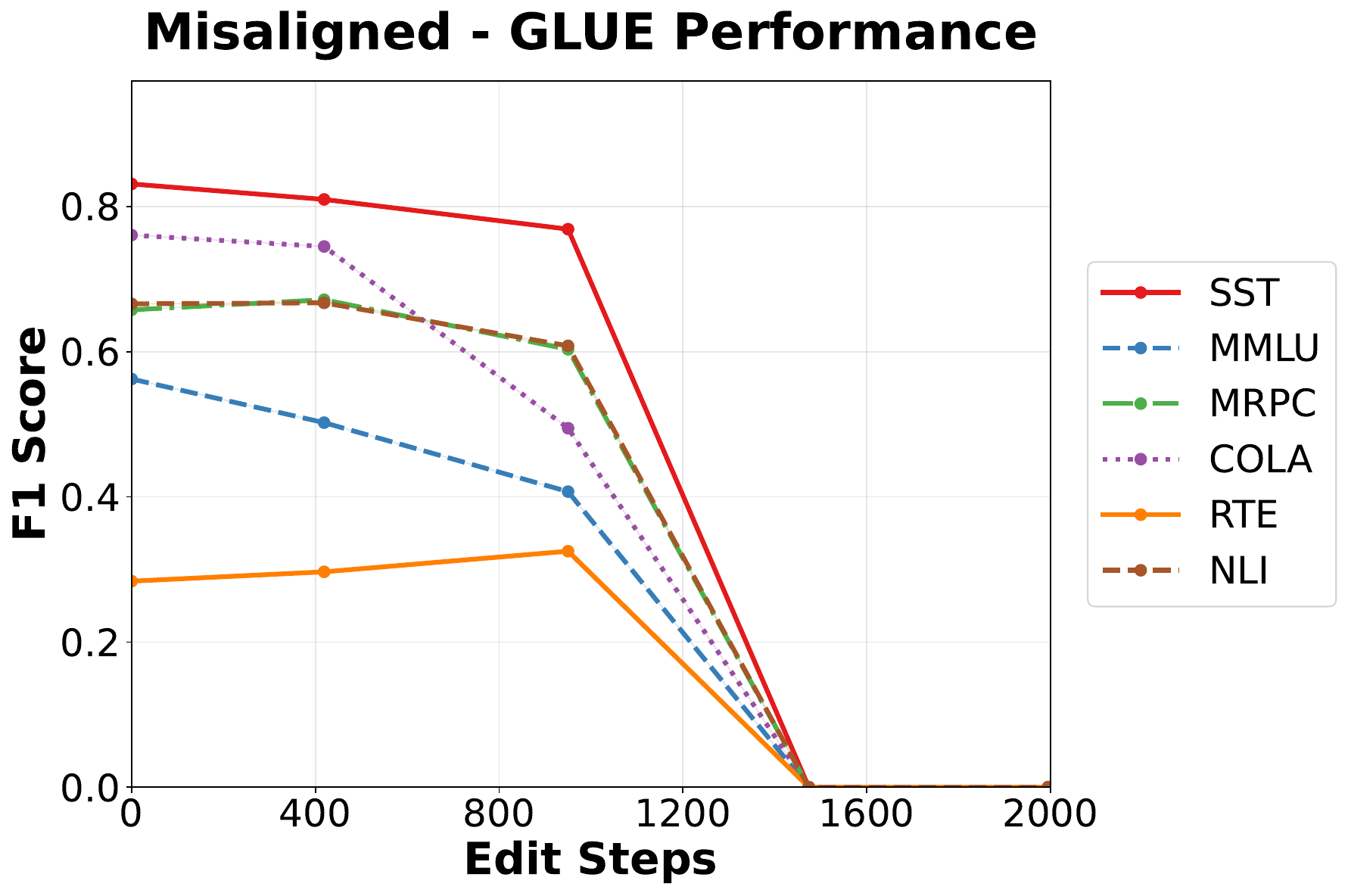}
    \vspace{0.3em}
    \includegraphics[width=\linewidth]{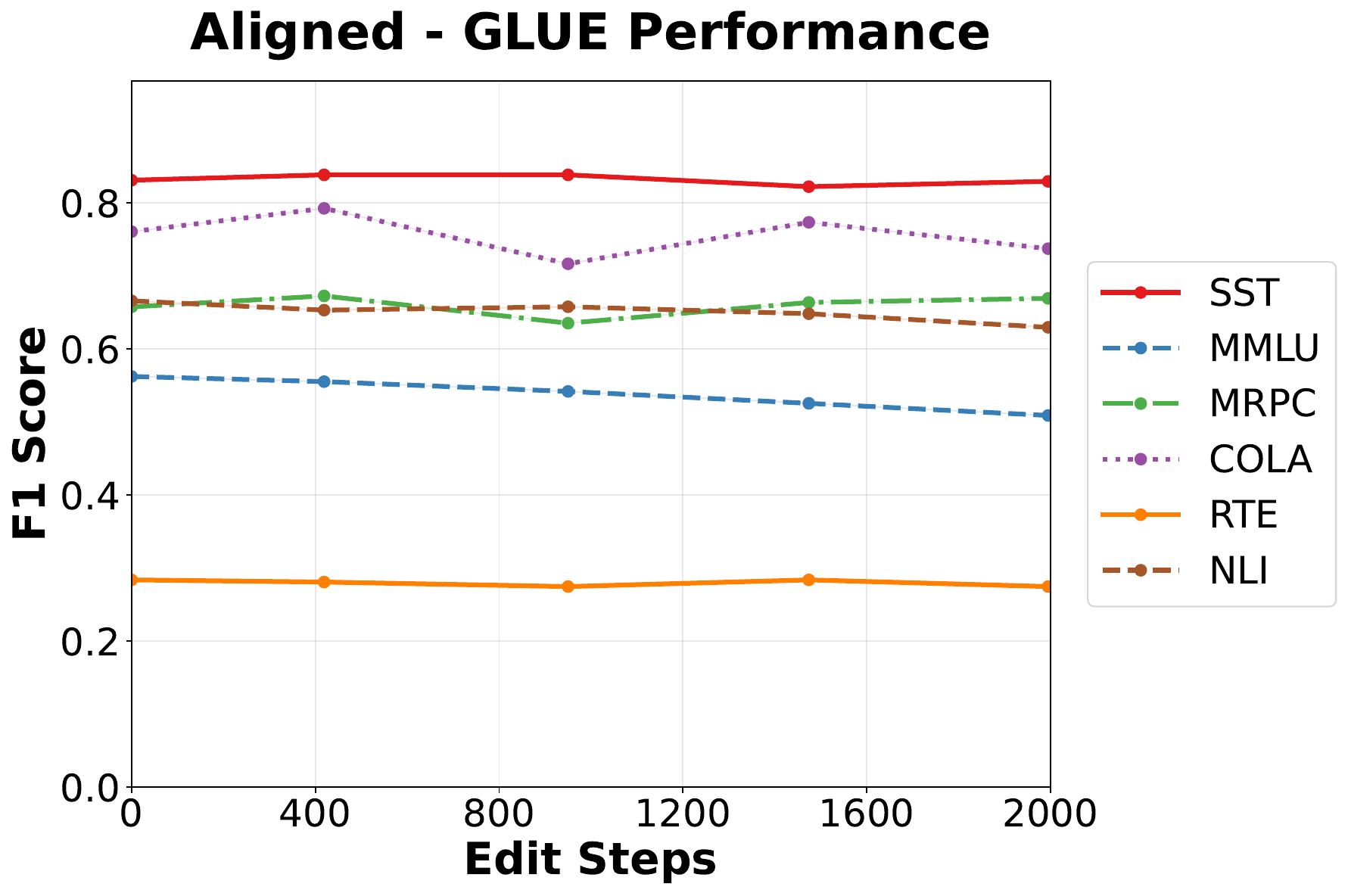}
  \end{minipage}
  \caption{(Left) Theoretical insight: equivalence between one-time editing (OTE) and sequential editing (SE) is the key to stable sequential updates; (Right) Empirical evidence on GLUE benchmark: OTE-aligned SE maintains performance, whereas OTE misalignment causes substantial degradation.}
    \label{fig:illustration-main-idea}
\end{figure*}

\section{Introduction}
\label{sec:intro}

Large language models (LLMs) increasingly serve as repositories of factual knowledge, much of which can be represented as structured triples of the form (subject, relation, object)~\cite{meng2022rome}. As these models are deployed in dynamic and evolving environments, the ability to precisely update or correct specific structured knowledge~\cite{sinitsin2020editable} without retraining from scratch has become a central challenge. This has spurred the growth of structured knowledge editing, a field dedicated to modifying a model’s behavior on targeted factual triples while preserving its general capabilities~\cite{de2021editing, wang2024knowledge, hu2025memory}.

Among existing approaches, parameter modifying methods based on the locate-and-edit paradigm have been particularly influential~\cite{meng2023memit}. These methods first identify parameters associated with a target fact and then apply constrained updates to induce the desired change. 
To enable sequential or lifelong editing, recent methods incorporate a variety of mechanisms, including null-space projections in
AlphaEdit~\cite{fang2024alphaedit}, implicit regularization via post-processing operators~\cite{gu2024rect,ma2024prune}, and constrained optimization objectives~\cite{meng2022rome}. Collectively, these designs aim to mitigate interference between edits and preserve previously edited knowledge.
Notably, AlphaEdit~\cite{fang2024alphaedit} stands out for its strong empirical stability in sequential editing. However, the growing diversity and complexity of these mechanisms also raise a fundamental question:

\begin{tcolorbox}[colback=blue!10!white, colframe=blue!50!black, arc=3mm, boxrule=1pt]
\textit{What are the essential ingredients that ensure successful and reliable sequential model editing?}
\end{tcolorbox}

Despite some prior attempts to address this question, existing studies~\cite{li2024consecutive, gupta2024unified, li2024continual} are either primarily empirical, lack a unified theoretical foundation, or fail to fully explain the empirical success of methods like AlphaEdit~\cite{fang2024alphaedit}. 
In this work, we aim to provide a principled answer by addressing the following research questions:
\begin{enumerate}[label=\textbf{RQ\arabic*:}, leftmargin=*, align=left, itemsep=0.3em, parsep=0.2em, topsep=0.5em]
    \item To what extent does AlphaEdit’s empirical success stem from its null-space projection mechanism, and how critical is this component in practice?
    
    \item Beyond AlphaEdit, what general principles govern effective sequential editing, and can they be distilled into a theoretically grounded design criterion?

    \item 
    What is the actual impact of complex regularization strategies when  applied on top of an already stable and reliable model update? Are they still beneficial?

\end{enumerate}

Answering these questions requires looking beyond individual mechanisms and regularization tricks. Our work shows that much of the apparent complexity in sequential editing is illusory. Beneath the labyrinth of regularization strategies and sequential update rules lies a surprisingly simple and unifying structure. Specifically, we demonstrate that a broad class of locate-and-edit methods can be formulated as ordinary-least-squares (OLS) problems, often admitting closed-form solutions. Within this unified view, sequential editing and one-time  editing are mathematically equivalent when past editing constraints are properly incorporated. This equivalence reveals that stability across edits is not an emergent property of specialized regularizers or null-space operations, but a direct consequence of solving the same underlying optimization problem while properly accounting for accumulated constraints. 
Building on this insight, we further extend the unified formulation to handle competing or contradictory edits, where naive sequential updates often fail.
More broadly, our work provides the community with a principled framework for understanding sequential editing, clarifying which mechanisms are essential and which are superfluous. This perspective lays the groundwork for designing simpler, more reliable, and more interpretable methods for updating knowledge in large language models.

\section{Preliminary}
\label{sec:preliminary}

\subsection{Auto-regressive Large Language Models}

A decoder-only large language model (LLM) updates its internal representations through a stack of transformer layers. At each layer, the next hidden state is formed by adding the previous residual stream to the outputs of the self-attention and feed-forward network (FFN) modules. Since prior work has shown that FFN modules play a central role in storing factual associations~\cite{dai2022knowledge,geva2021transformer-kv,meng2022rome}, we focus on the FFN output projection. For simplicity, throughout the following sections we write $\mathbf{W} \in \mathbb{R}^{d_o \times d_i}$ for the layer-specific matrix $\bW_{\mathrm{out}}^{(l)}$, where $d_i$ and $d_o$ denote the input and output dimensions of the FFN layer. Concretely, the layer update can be written as
\begin{align*}
    h^{(l)} &= h^{(l-1)} + a^{(l)} + m^{(l)}, \\
    m^{(l)} &= \bW_{\mathrm{out}}^{(l)} \,
               \sigma\!\left(\bW_{\mathrm{in}}^{(l)} \,
               \gamma\!\left(h^{(l-1)}\right)\right),
\end{align*}
where \( \gamma \) denotes layer normalization, \( \sigma \) is a non-linear activation function, and \( \bW_{\mathrm{in}}^{(l)} \) and \( \bW_{\mathrm{out}}^{(l)} \) are the FFN weight matrices.

\subsection{Structured Knowledge Editing}

A growing body of evidence suggests that structured factual knowledge, typically expressed as subject–relation–object triples $(s, r, o)$, is predominantly stored in the FFN layers of large language models~\cite{meng2022rome}. 
This perspective naturally motivates the locate-and-edit paradigm, where targeted knowledge updates are performed by directly modifying $\mathbf{W}$ rather than retraining the entire model. Conceptually, each fact can be interpreted as an implicit key–value association: the hidden representation of the prompt $(s, r)$ acts as a key $k$, while the FFN output leading to the object $o$ serves as the corresponding value $v$. Updating a fact thus corresponds to introducing or adjusting these key–value associations through careful parameter modifications.

To update a fact, we construct key--value pairs $(\mathbf{K}_\text{new}, \mathbf{V}_\text{new})$ representing the desired new knowledge.
We seek an optimized perturbation $\mathbf{\Delta}^*$ to the weights $\mathbf{W} \in \mathbb{R}^{d_o \times d_i}$ such that $(\mathbf{W}+\mathbf{\Delta}^*)\mathbf{K}_\text{new} = \mathbf{V}_\text{new}$, thereby incorporating the new associations. Meanwhile, we preserve existing knowledge captured by a set of key--value pairs $(\mathbf{K}_0, \mathbf{V}_0= \mathbf{W}\mathbf{K}_0)$.

Combining both objectives, the editing problem can be written as a single ordinary-least-squares (OLS) problem, whose solution admits the following closed form\footnote{In practice, $\mathbf{K}_0$ and $\mathbf{V}_0$ are often estimated by randomly sampling $100{,}000$ $(s, r, o)$ triples from the Wikitext dataset~\cite{merity2016wiki}.}:
\begin{align*}
\bD^*
&= \arg\min_{\bD}
\left\|
(\bW+\bD)
\,[\bK_0 \mid \bK_{\text{new}}]
-
[\bV_0 \mid \bV_{\text{new}}]
\right\|_F^2
\nonumber \\
&=
(\bV_{\text{new}} - \bW \bK_{\text{new}})
\bK_{\text{new}}^\top
\left(
\bK_0 \bK_0^\top
+
\bK_{\text{new}} \bK_{\text{new}}^\top
\right)^{-1}.
\label{eq:edit-normal}
\end{align*}

\subsection{From One Time Editing to Sequential Editing}

While the classical locate-and-edit paradigm typically assumes that knowledge updates are applied in a single batch, real-world deployment of LLMs often necessitates sequential editing, where the model continuously incorporates new or evolving facts. Compared to one-time updates, sequential edits pose non-trivial challenges, as naive incremental editing strategies may struggle to preserve consistency across edits and maintain overall model behavior~\cite{hartvigsen2023aging, gu2024rect, ma2024prune, fang2024alphaedit, hu2024wilke}. A notable advance in this direction is AlphaEdit~\cite{fang2024alphaedit}, which enables thousands of stable sequential edits by leveraging sophisticated null-space projection techniques.

Formally, let $\bK_0$ denote the key matrix corresponding to the preserved knowledge, and let $\bP$ be the orthogonal projector onto its left null space. At editing step $t$, $(\bK_t, \bV_t)$ denotes the key--value pair associated with the newly injected knowledge. We also define $\bK_{1:t-1} := [\bK_1 \mid \cdots \mid \bK_{t-1}]$ as the concatenation of all previously edited keys, and the residual of the current edit as $\bR_t := \bV_t - \bW_{t-1}\bK_t$, where $\bW_{t-1}$ represents the model parameters before step $t$.

Under this notation, AlphaEdit performs sequential editing by updating the model
parameters additively as
\begin{equation}
\label{eq:weight-recurrence}
\bW_t = \bW_{t-1} + \bD_t^* .
\end{equation}
At each step $t$, the update $\bD_t$ is computed in closed form as
\begin{equation}
\label{eq:alphaedit-normal-eq-unified}
\bD_t^* = 
\bR_t \bK_t^\top \bP
\Big(
\bK_{1:t-1}\bK_{1:t-1}^\top \bP
+
\bK_t \bK_t^\top \bP
+
\mathbf{I}
\Big)^{-1}.
\end{equation}

We adopt this unified notation to revisit the empirical success of AlphaEdit and analyze the mechanisms that enable stable sequential model editing.

\section{Thread Toward Successful Sequential Edits}
\label{sec:method}

In this section, we aim to address the central question: what fundamentally enables stable sequential model editing?
We first propose a novel sequential editing task to falsify the claim that the superior performance of AlphaEdit primarily originates from null-space projection. Instead, we establish a theoretical equivalence between one-time editing (OTE) and sequential editing (SE) with past editing constraints properly incorporated, identifying it as the core factor underpinning editing stability. Building on this result, we further generalize the OTE–SE equivalence to a broader formulation, and analyze the necessity and impact of different regularization strategies in sequential editing.

\subsection{Practical Effectiveness of Null-Space Projection in AlphaEdit (RQ1)}
\label{sec:method:effective-null-space}

\paragraph{Empirical Breakdown of Null-Space Projection.}

To critically assess the role of null-space projection in sequential knowledge editing, we introduce a new setting termed the \emph{Memorize-the-Latest} task. In this scenario, the model is required to retain only the knowledge introduced at the current editing step, with no requirement to preserve information from previous steps. Formally, at each step $t$, we consider the following optimization problem:
\begin{equation}
\nonumber
    \bD^*_t = \arg\min_{\bD_t}
    \left\| (\mathbf{W}_{t-1}+\bD_t)[\bK_0 \mid \bK_t] - [\bV_0 \mid \bV_t] \right\|_F^2,
    \label{eq:loss-last-edit}
\end{equation}
where $(\bK_0, \bV_0)$ represents the preserved knowledge, and $(\bK_t,\bV_t)$ corresponds to the newly edited fact at step $t$.

Following the core principle of AlphaEdit, we incorporate a null-space projector $\mathbf{P}$ (defined with respect to $\bK_0$) to constrain the update, resulting in the updating rule:
\begin{equation}
\bD_t^* = \mathbf{R}_t \mathbf{K}_t^\top  \mathbf{P}
\big( 
\mathbf{K}_t \mathbf{K}_t^\top \mathbf{P} 
+ \mathbf{I} \big)^{-1}.
\label{eq:update-eq-last-edit}
\end{equation}
If null-space projection were indeed the key mechanism enabling stable sequential editing, one would expect this formulation to perform reliably in the Memorize-the-Latest task. However, empirical results reveal the opposite: when applied to the Counterfact dataset~\cite{meng2022rome} using LLaMA-3~\cite{llama3modelcard}, the model exhibits a severe degradation of basic language generation capabilities, as illustrated in the example below.

\noindent
\begin{tcolorbox}[
  colback=gray!10,
  colframe=black,
  boxrule=0.8pt,
  arc=2pt,
  left=6pt,
  right=6pt,
  top=6pt,
  bottom=6pt,
  fontupper=\footnotesize
]

\begin{tabular}{p{0.34\linewidth}p{0.65\linewidth}}
\textbf{Editing Prompt:} & Karl Lachmann speaks \\
\textbf{Edit Target:} & \textcolor{orange}{English} \\
\end{tabular}

\vspace{0.5em}
\hrule
\vspace{0.5em}

\begin{tabular}{p{0.9\linewidth}}
\centering\textbf{Generation Output} \\
\end{tabular}

\vspace{0.3em}
\begin{minipage}{\linewidth}
\ttfamily\scriptsize
Karl Lachmann was born in Berlin ( Canada ( ( ( ( Toronto ( ( ( ( ( Canada ( Belgium ( Australia ( Canada Canada Belgium ( Germany ( Berlin ( Melbourne ( Canada ( Melbourne (afka ...
\end{minipage}
\end{tcolorbox}

The observed breakdown suggests that null-space constraints are not the fundamental factor governing successful edits, motivating a closer examination of the underlying optimization structure driving sequential model editing.

\paragraph{Failure Analysis in the Memorize-the-Latest Task.}
Before introducing our theoretical framework, we analyze why null-space projection fails in the Memorize-the-Latest setting. Revisiting the updating rule in Equation~\eqref{eq:update-eq-last-edit}, we observe that it is derived from the following normal equation:
\begin{equation}
\begin{split}
\mathbf{\Delta}_t^* 
\left(\mathbf{K}_0 \mathbf{K}_0^\top \mathbf{P} + \mathbf{K}_t \mathbf{K}_t^\top \mathbf{P} + \mathbf{I}\right)
=\\
(\mathbf{V}_0 - \mathbf{W}_{t-1}\mathbf{K}_0)\mathbf{K}_0^\top \mathbf{P} 
+ (\mathbf{V}_t - \mathbf{W}_{t-1}\mathbf{K}_t)\mathbf{K}_t^\top \mathbf{P}.
\end{split}
\label{eq:normal-eq-last-edit}
\end{equation}
Due to the null-space property $\mathbf{K}_0^\top \mathbf{P} = \mathbf{0}$, the $\mathbf{K}_0\mathbf{K}_0^\top\mathbf{P}$ term on the left-hand side and the first term on the right-hand side are eliminated in AlphaEdit's formulation, yielding Equation~\eqref{eq:update-eq-last-edit}. However, these terms are not intrinsically negligible. As confirmed empirically in Table~\ref{tab:glue-last-edit}, reintroducing them restores the model's linguistic capabilities, indicating that null-space projection induces a non-trivial approximation error that cannot be safely ignored.

To better understand this effect, we explicitly expand the discarded terms. Moving $\mathbf{\Delta}_t^*\mathbf{K}_0\mathbf{K}_0^\top\mathbf{P}$ from the left-hand side to the right and combining it with the first right-hand-side term, the total omission is expressed as
\begin{align*}
&-\mathbf{\Delta}_t^*\mathbf{K}_0\mathbf{K}_0^\top\mathbf{P}
+ (\mathbf{V}_0 - \mathbf{W}_{t-1}\mathbf{K}_0)\mathbf{K}_0^\top \mathbf{P}\\
= &- \sum_{\tau=1}^{t} \mathbf{\Delta}_\tau^*  \mathbf{K}_0 \mathbf{K}_0^\top \mathbf{P},
\end{align*}
where the term $\mathbf{V}_0 - \mathbf{W}_{t-1}\mathbf{K}_0$ is expanded by iteratively substituting Equation~\ref{eq:weight-recurrence}.

This expression indicates that the combined omission accumulates contributions from all updates up to and including the current step. While null-space projection limits each individual update, the cumulative effect increases with the number of editing steps, causing the solution to gradually deviate from the sequential OLS optimum and resulting in performance degradation.
This analysis leads to a conclusion opposite to the conventional interpretation of AlphaEdit: sequential editing stability does not stem from null-space projection. Instead, null-space projection introduces approximation errors whose accumulation ultimately undermines stability. The empirical success of AlphaEdit is therefore more plausibly explained by a different, more fundamental mechanism.

\begin{table}[t]
\centering
\caption{GLUE benchmark performance of LLaMA-3 after sequential editing via iterative normal-equation updates (Eq.~\eqref{eq:normal-eq-last-edit}). The \textit{Full} setting retains the complete update rule, while the \textit{Null-Space Simplified} variant applies $\mathbf{K}_0^\top\mathbf{P}=\mathbf{0}$, dropping $\mathbf{K}_0\mathbf{K}_0^\top\mathbf{P}$ from the left-hand side and $(\mathbf{V}_0-\mathbf{W}_{t-1}\mathbf{K}_0)\mathbf{K}_0^\top\mathbf{P}$ from the right-hand side. This simplification leads to a complete collapse of language performance, whereas the full update preserves model capabilities.}

\label{tab:glue-last-edit}

\scalebox{0.9}{
\setlength{\tabcolsep}{4pt} %
\begin{tabular}{@{}l|cccccc@{}}
\toprule
 & SST & MMLU & MRPC & CoLA & RTE & NLI \\
\midrule
Pre-edit & 0.831 & 0.562 & 0.658 & 0.761 & 0.284 & 0.666 \\
Full Version & 0.846 & 0.548 & 0.643 & 0.779 & 0.292 & 0.668 \\
Null Space & 0.000 & 0.014 & 0.000 & 0.000 & 0.000 & 0.000 \\
\bottomrule
\end{tabular}
}
\end{table}

\subsection{General Principle for Stable Sequential Edit (RQ2)}
\label{sec:method:principle-stable-se}

The failure analysis in Section~\ref{sec:method:effective-null-space} reveals that instability arises when approximation errors occur in per-step objectives, causing accumulated updates to deviate from any coherent global objective---not from the absence of null-space constraints per se.
To formalize the criterion that governs stable sequential editing, we start from the task-level semantics of model editing itself. Conceptually, a sequence of edits are applied over time, and a single edit that injects the same set of knowledge in one shot are intended to induce \emph{the same transformation of the model}. That is, regardless of whether knowledge is injected incrementally or jointly, the model should encode an equivalent set of updated facts.

From this perspective, a fundamental requirement for stable sequential editing (SE) is the
\emph{consistency with the corresponding one-time editing (OTE) result}.
Next, we explicitly test and formalize this requirement.
Formally, we consider a reference one-time editing objective that injects all knowledge observed up to time step $t$ in a single update. Under the AlphaEdit formulation, the optimal batch update is given by
\[
\mathbf{\Delta}_{\text{total},t}^*
=
\sum_{\tau=1}^{t}
\mathbf{R}_\tau^{\prime} \mathbf{K}_\tau^\top \mathbf{P}
\left(
\sum_{\tau=1}^{t}
\mathbf{K}_\tau \mathbf{K}_\tau^\top \mathbf{P}
+ \mathbf{I}
\right)^{-1},
\]
where $\mathbf{R}_\tau^{\prime} = \mathbf{V}_\tau - \mathbf{W}_{0}\mathbf{K}_\tau$.
We now relate this batch solution to sequential updates.
\begin{lemma}[AlphaEdit's Equivalence of OTE and SE]
\label{lem:ote-se-equivalence}
Let $\mathbf{\Delta}_\tau^*$ denote the update obtained at step $\tau$ by solving the sequential editing
problem using Equation~\eqref{eq:alphaedit-normal-eq-unified}. Then, after $t$ edits, the accumulated sequential update satisfies
\[
\sum\nolimits_{\tau=1}^{t} \mathbf{\Delta}_\tau^*
=
\mathbf{\Delta}_{\text{total},t}^*.
\]
\end{lemma}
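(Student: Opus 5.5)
The plan is to prove a slightly stronger statement by induction on $t$: the partial sum $\mathbf{S}_t := \sum_{\tau=1}^{t}\bD_\tau^*$ satisfies the same normal equation as the one-time edit, namely
\[
\mathbf{S}_t\,\mathbf{A}_t = \sum\nolimits_{\tau=1}^{t}\bR_\tau^{\prime}\bK_\tau^\top\bP ,
\qquad
\mathbf{A}_t := \sum\nolimits_{\tau=1}^{t}\bK_\tau\bK_\tau^\top\bP + \mathbf{I}.
\]
If $\mathbf{A}_t$ is invertible, this gives $\mathbf{S}_t=\bD^*_{\text{total},t}$ at once.

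Note that Equation~\eqref{eq:alphaedit-normal-eq-unified} says exactly $\bD_t^*\mathbf{A}_t = \bR_t\bK_t^\top\bP$. This is because $\bK_{1:t-1}\bK_{1:t-1}^\top = \sum_{\tau<t}\bK_\tau\bK_\tau^\top$, so $\bK_{1:t-1}\bK_{1:t-1}^\top\bP+\bK_t\bK_t^\top\bP+\mathbf{I}=\mathbf{A}_t$.

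\textbf{Base case.} For $t=1$ there are no previous keys. Also $\bW_0$ is the original weight, so $\bR_1=\bR_1^{\prime}$, and the two expressions coincide.

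\textbf{Inductive step.} Assume the identity holds for $t-1$.
\begin{enumerate}
\item Use $\mathbf{A}_t=\mathbf{A}_{t-1}+\bK_t\bK_t^\top\bP$ to write
\[
\mathbf{S}_t\mathbf{A}_t=\mathbf{S}_{t-1}\mathbf{A}_{t-1}+\mathbf{S}_{t-1}\bK_t\bK_t^\top\bP+\bD_t^*\mathbf{A}_t .
\]
\item Replace the first term by $\sum_{\tau<t}\bR_\tau^{\prime}\bK_\tau^\top\bP$ (induction hypothesis).
\item Replace the last term by $\bR_t\bK_t^\top\bP$ (per-step normal equation).
\item Unroll the recurrence~\eqref{eq:weight-recurrence} to get $\bW_{t-1}=\bW_0+\mathbf{S}_{t-1}$. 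Hence the sequential residual relates to the batch residual by $\bR_t=\bR_t^{\prime}-\mathbf{S}_{t-1}\bK_t$.
\item Substituting this, the leftover cross term is $\mathbf{S}_{t-1}\bK_t\bK_t^\top\bP + (\bR_t^{\prime}-\mathbf{S}_{t-1}\bK_t)\bK_t^\top\bP = \bR_t^{\prime}\bK_t^\top\bP$. The $\mathbf{S}_{t-1}$ contributions cancel and the induction closes.
\end{enumerate}
This cancellation is the heart of the argument. Including the accumulated past keys $\bK_{1:t-1}$ on the left-hand side exactly compensates for the residual being measured against the already-edited $\bW_{t-1}$ rather than $\bW_0$. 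It is precisely this mechanism that the Memorize-the-Latest variant lacks.

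\textbf{Main obstacle: invertibility of $\mathbf{A}_t$.} The one step needing genuine care is that $\mathbf{A}_t=\mathbf{M}\bP+\mathbf{I}$, with $\mathbf{M}=\sum_{\tau\le t}\bK_\tau\bK_\tau^\top$ positive semidefinite, is invertible; this is what both closed forms implicitly assume. My first attempt is a spectral argument.
\begin{itemize}
\item Since $\bP$ is idempotent, $\mathbf{M}\bP=(\mathbf{M}\bP)\bP$.
\item $\mathbf{X}\mathbf{Y}$ and $\mathbf{Y}\mathbf{X}$ share their nonzero eigenvalues. Taking $\mathbf{X}=\mathbf{M}\bP$ and $\mathbf{Y}=\bP$, the nonzero eigenvalues of $\mathbf{M}\bP$ coincide with those of $\bP\mathbf{M}\bP$.
\item $\bP\mathbf{M}\bP$ is symmetric positive semidefinite, so all eigenvalues of $\mathbf{M}\bP$ are real and nonnegative.
\item Therefore every eigenvalue of $\mathbf{A}_t$ is at least $1$, and $\mathbf{A}_t$ is nonsingular.
\end{itemize}
The same argument applies to $\mathbf{A}_{t-1}$ and to each per-step matrix, so every inverse used in the lemma is well defined.
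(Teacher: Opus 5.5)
Your proof is correct and follows the paper's argument. Both proceed by induction on $t$, rewriting the per-step normal equation via $\bW_{t-1}=\bW_0+\sum_{\tau<t}\bD_\tau^*$ and adding it to the inductive hypothesis so that the $\mathbf{S}_{t-1}\bK_t\bK_t^\top\bP$ terms cancel; your spectral check that $\sum_{\tau\le t}\bK_\tau\bK_\tau^\top\bP+\mathbf{I}$ is invertible, using the shared nonzero spectrum of $\mathbf{M}\bP$ and $\bP\mathbf{M}\bP$, is a sound addition that the paper leaves implicit.
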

This lemma reveals the core reason for AlphaEdit’s effectiveness: 
stability arises from  
implicitly preserving equivalence to the global one-time editing solution. 
The success of AlphaEdit exemplifies that OTE-SE equivalence is the fundamental criterion for stable edits. 
Conversely, this criterion also explains the failure observed in Section~\ref{sec:method:effective-null-space}: in the Memorize-the-Latest task, discarding past editing constraints at each step prevents the accumulated sequential updates from corresponding to any single OTE solution, resulting in the progressive deviation and performance collapse identified in our analysis.
We further extend this equivalence to a broader class of editing formulations.

\begin{proposition}[Generalized Equivalence of OTE and SE]
\label{prop:general-ote-se}
Consider a one-time editing objective that jointly incorporates all knowledge updates collected from edit steps $1$ through 
$t$. The resulting batch solution is given by
\begin{equation}
\label{eq:batch-general}
\nonumber
\scalebox{0.93}{$
\displaystyle
\mathbf{\Delta}_{\text{total}, t}^*
=
\sum_{\tau=1}^{t}
\mathbf{R}_\tau^\prime \mathbf{K}_\tau^\top \mathbf{P}
\left(
\mathbf{K}_0 \mathbf{K}_0^\top \mathbf{P}
+
\sum_{\tau=1}^{t}
\mathbf{K}_\tau \mathbf{K}_\tau^\top \mathbf{P}
+
\lambda \mathbf{I}
\right)^{-1}.
$}
\end{equation}

Let the sequential update at step $t$ be defined as
\begin{equation}
\label{eq:sequential-general}
\nonumber
\mathbf{\Delta}_t^*
=
\mathbf{R}_t \mathbf{K}_t^\top \mathbf{P}
\left(
\mathbf{K}_0 \mathbf{K}_0^\top \mathbf{P}
+
\sum_{\tau=1}^{t}
\mathbf{K}_\tau \mathbf{K}_\tau^\top \mathbf{P}
+
\lambda \mathbf{I}
\right)^{-1}.
\end{equation}
Then, for any choice of $\mathbf{P}$ and $\lambda \ge 0$, the accumulated sequential updates satisfy
\[
\sum\nolimits_{\tau=1}^{t} \mathbf{\Delta}_\tau^*
=
\mathbf{\Delta}_{\text{total},t}^* .
\]
\end{proposition}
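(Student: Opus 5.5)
The plan is to prove the identity by induction on $t$, without ever touching the inverse directly: I will show that the accumulated update satisfies the batch \emph{normal equation}, and then multiply by the inverse once at the end. Write
\[
\mathbf{A}_t := \mathbf{K}_0\mathbf{K}_0^\top\mathbf{P} + \sum\nolimits_{\tau=1}^{t}\mathbf{K}_\tau\mathbf{K}_\tau^\top\mathbf{P} + \lambda\mathbf{I},
\qquad
\mathbf{B}_t := \sum\nolimits_{\tau=1}^{t}\mathbf{R}_\tau^\prime\mathbf{K}_\tau^\top\mathbf{P},
\qquad
\mathbf{S}_t := \sum\nolimits_{\tau=1}^{t}\mathbf{\Delta}_\tau^*,
\]
with $\mathbf{S}_0=\mathbf{0}$ and $\mathbf{B}_0=\mathbf{0}$. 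Then $\mathbf{\Delta}_{\text{total},t}^* = \mathbf{B}_t\mathbf{A}_t^{-1}$ and $\mathbf{\Delta}_t^* = \mathbf{R}_t\mathbf{K}_t^\top\mathbf{P}\,\mathbf{A}_t^{-1}$. It therefore suffices to prove $\mathbf{S}_t\mathbf{A}_t = \mathbf{B}_t$ for all $t\ge 0$.

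The key preliminary observation links the sequential residual to the original one. Unrolling the recurrence \eqref{eq:weight-recurrence} gives $\mathbf{W}_{t-1} = \mathbf{W}_0 + \mathbf{S}_{t-1}$. Hence
\[
\mathbf{R}_t = \mathbf{V}_t - \mathbf{W}_{t-1}\mathbf{K}_t = \mathbf{R}_t^\prime - \mathbf{S}_{t-1}\mathbf{K}_t .
\]
This is the same mechanism used in the failure analysis of Section~\ref{sec:method:effective-null-space}. There, however, the past-edit terms were missing from the matrix being inverted; here they are present in $\mathbf{A}_t$.

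For the inductive step, assume $\mathbf{S}_{t-1}\mathbf{A}_{t-1}=\mathbf{B}_{t-1}$. The step $t$ update satisfies $\mathbf{\Delta}_t^*\mathbf{A}_t = \mathbf{R}_t\mathbf{K}_t^\top\mathbf{P}$, and $\mathbf{A}_t = \mathbf{A}_{t-1} + \mathbf{K}_t\mathbf{K}_t^\top\mathbf{P}$. Then
\[
\mathbf{S}_t\mathbf{A}_t = \mathbf{S}_{t-1}\mathbf{A}_{t-1} + \mathbf{S}_{t-1}\mathbf{K}_t\mathbf{K}_t^\top\mathbf{P} + (\mathbf{R}_t^\prime - \mathbf{S}_{t-1}\mathbf{K}_t)\mathbf{K}_t^\top\mathbf{P} = \mathbf{B}_{t-1} + \mathbf{R}_t^\prime\mathbf{K}_t^\top\mathbf{P} = \mathbf{B}_t .
\]
The cross terms cancel exactly, and no commutation of $\mathbf{P}$ with anything is needed. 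This is why the identity holds for an arbitrary $\mathbf{P}$ and any $\lambda\ge 0$. For $t=1$ the step is valid from $\mathbf{S}_0=\mathbf{0}$, $\mathbf{B}_0=\mathbf{0}$: we get $\mathbf{S}_1\mathbf{A}_1 = \mathbf{R}_1\mathbf{K}_1^\top\mathbf{P}$ with $\mathbf{R}_1=\mathbf{R}_1^\prime$, so the induction hypothesis is never applied to the undefined $\mathbf{A}_0^{-1}$. Right-multiplying by $\mathbf{A}_t^{-1}$ then gives $\mathbf{S}_t = \mathbf{\Delta}_{\text{total},t}^*$. Lemma~\ref{lem:ote-se-equivalence} is recovered as the special case in which $\mathbf{K}_0^\top\mathbf{P}=\mathbf{0}$ and $\lambda=1$.

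The algebra itself is routine, so I expect the main obstacles to be bookkeeping points rather than the computation.

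\textbf{Invertibility.} Each $\mathbf{A}_t$ for $t\ge 1$ must be invertible; the statement implicitly assumes this, since it writes these inverses. For $\lambda=0$ or a singular $\mathbf{P}$ this can fail, and I would state it explicitly as a standing assumption.

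\textbf{The batch right-hand side.} One should justify why $\mathbf{B}_t$ contains no $\mathbf{K}_0$ term. In the batch OLS objective, the preserved-knowledge residual is $\mathbf{V}_0-\mathbf{W}_0\mathbf{K}_0=\mathbf{0}$ by the definition $\mathbf{V}_0=\mathbf{W}\mathbf{K}_0$, so that term vanishes. By contrast, the sequential rule keeps $\mathbf{K}_0\mathbf{K}_0^\top\mathbf{P}$ in $\mathbf{A}_t$ while dropping $(\mathbf{V}_0-\mathbf{W}_{t-1}\mathbf{K}_0)\mathbf{K}_0^\top\mathbf{P}$ from its right-hand side. The induction shows this is consistent, because $\mathbf{S}_{t-1}$ already carries the corresponding $\mathbf{K}_0$ contribution through $\mathbf{S}_{t-1}\mathbf{A}_{t-1}$.
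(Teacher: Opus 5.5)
Your proof is correct and follows essentially the same route as the paper: the paper proves Lemma~\ref{lem:ote-se-equivalence} by induction on the normal equation, rewriting $\mathbf{R}_t=\mathbf{R}_t^\prime-\mathbf{S}_{t-1}\mathbf{K}_t$ via the weight recurrence and adding the step-$t$ equation to the inductive hypothesis, and then observes that nothing in the argument depends on $\mathbf{P}$ or on the $\mathbf{K}_0\mathbf{K}_0^\top\mathbf{P}$ and $\lambda\mathbf{I}$ terms. Your version is somewhat cleaner: you start the induction from the trivial $\mathbf{S}_0=\mathbf{B}_0=\mathbf{0}$, and you state the invertibility of each $\mathbf{A}_t$ as an explicit assumption, which the paper leaves implicit.
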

Proposition~\ref{prop:general-ote-se} generalizes the OTE--SE equivalence beyond the specific formulation of AlphaEdit. In the special case where $\mathbf{P}=\mathbf{I}$ and $\lambda=0$, the formulation reduces to MEMIT, implying that sequential MEMIT
implicitly reconstructs the corresponding OTE (details are provided in Appendix~\ref{sec:appdx:proof-alphaedit-se-ote-equality-general}).

\begin{tcolorbox}[
    breakable,
    colback=gray!5,
    colframe=gray!40,
    coltitle=white,
    fonttitle=\bfseries,
    title=Key Insight: a unifying criterion for stable SE,
    colbacktitle=teal!70!black, 
    enhanced,
    boxrule=0.8pt,
    arc=2pt,
    left=6pt,
    right=6pt,
    top=6pt,
    bottom=6pt
]
\textbf{As long as the sequential editing procedure remains equivalent to a well-defined one-time edit objective, 
stable and consistent edits are guaranteed. 
}
\end{tcolorbox}
\vspace{0.5em}

\noindent
This observation immediately suggests a constructive principle:
if stability hinges on preserving equivalence to a one-time editing objective, then a stable sequential editing rule
should be \emph{derived directly from that objective}, rather than imposed through ad-hoc constraints or heuristics.
This leads to the following design guideline.
\vspace{0.3em}

\begin{remark}[\textbf{Designing Stable Sequential Editing}]
\label{rm:design-se}

A stable SE algorithm can be systematically constructed as:
\begin{enumerate}[leftmargin=*]
    \item Define an OTE objective with clear physical meaning, optionally incorporating regularization.
    \item Solve the corresponding normal equation to obtain the cumulative batch update $\mathbf{\Delta}_{\text{total}, t}^*$ and the previous cumulative update $\mathbf{\Delta}_{\text{total}, t-1}^*$.
    \item Construct the sequential update at step $t$ as the difference between these cumulative solutions:
    \[
        \tilde{\bD}_t = \mathbf{\Delta}_{\text{total}, t}^* - \mathbf{\Delta}_{\text{total}, t-1}^*.
    \]
\end{enumerate}
    This ensures that the sequential updates exactly reconstruct the OTE solution while controlling error propagation.
\end{remark}
To close the loop on our framework, we show that the sequential updates $\tilde{\bD}_t$ constructed in Remark~\ref{rm:design-se} correspond to solutions of an underlying ordinary-least-squares (OLS) problem. 
Formally, we establish the following result, which generalizes the previous formulation to accommodate a broad class of objectives and regularization strategies.

\begin{proposition}[Constructed Sequential Updates as OLS Solutions]
\label{thm:ote-se-diff}
   Consider a sequential editing task with steps $t = 1, 2, \dots, T$.  
Let the 
OTE objective at step $t$ be
\[
\bD_{\mathrm{total},t}^* = \arg\min_{\bD} \; \mathcal{L}_t(\bD) + \mathcal{R}(\bD),
\]
where $\mathcal{R}$ is a closed convex regularization function.

Define the sequential update at step $t$ via the difference of consecutive OTE solutions:
\[
\tilde{\bD}_t := \bD_{\mathrm{total},t}^* - \bD_{\mathrm{total},t-1}^*, 
\ \  \text{with } \bD_{\mathrm{total},0}^* := \mathbf{0}.
\]
Assume that the loss $\mathcal{L}_t(\cdot)$ satisfies the following \textbf{shifted quadratic representability} condition: for any $t$ and $\bD$,
\[
\mathcal{L}_t(\bD_{\mathrm{total},t-1}^* + \bD) 
= \ell_t(\bD) + \inner{\nabla \mathcal{L}_t(\bD_{\mathrm{total},t-1}^*)}{\bD} + c_t,
\]
where: $\ell_t(\bD)$ is a convex quadratic function of $\bD$,
$c_t \in \R$ and $\nabla \mathcal{L}_t$ denotes the Fr\'echet derivative of $\mathcal{L}_t$.

Then, the sequential update $\tilde{\bD}_t$ is the unique solution to the following \textbf{sequential optimization problem}:
\begin{align*}
\tilde{\bD}_t = \arg\min_{\bD} \; &
\ell_t(\bD) + \inner{\nabla \mathcal{L}_t(\bD_{\mathrm{total},t-1}^*)}{\bD} \\
& + \mathcal{R}(\bD_{\mathrm{total},t-1}^* + \bD).
\end{align*}
\end{proposition}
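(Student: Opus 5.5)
The plan is a change of variables: the sequential problem in Proposition~\ref{thm:ote-se-diff} is the OTE problem at step $t$ re-centred at the previous cumulative solution $\bD_{\mathrm{total},t-1}^*$. Define the affine bijection $\phi_t(\bD) := \bD_{\mathrm{total},t-1}^* + \bD$ on $\R^{d_o\times d_i}$, and let $F_t(\bD') := \mathcal{L}_t(\bD') + \mathcal{R}(\bD')$ be the OTE objective. By the shifted quadratic representability assumption,
\[
F_t(\phi_t(\bD)) = \ell_t(\bD) + \inner{\nabla \mathcal{L}_t(\bD_{\mathrm{total},t-1}^*)}{\bD} + \mathcal{R}(\bD_{\mathrm{total},t-1}^* + \bD) + c_t =: G_t(\bD) + c_t .
\]
So the sequential objective $G_t$ differs from $F_t\circ\phi_t$ only by the constant $c_t$. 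Constants do not change the set of minimizers, so $\arg\min G_t = \arg\min (F_t\circ\phi_t)$.

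Next I transfer minimizers through $\phi_t$. Since $\phi_t$ is a bijection with inverse $\bD'\mapsto \bD' - \bD_{\mathrm{total},t-1}^*$, we have $\bD$ minimizing $F_t\circ\phi_t$ if and only if $\phi_t(\bD)$ minimizes $F_t$. Hence
\[
\arg\min G_t = \phi_t^{-1}\big(\arg\min F_t\big) = \arg\min F_t - \bD_{\mathrm{total},t-1}^* .
\]
By definition $\bD_{\mathrm{total},t}^*\in\arg\min F_t$, so $\tilde{\bD}_t = \bD_{\mathrm{total},t}^* - \bD_{\mathrm{total},t-1}^*$ minimizes $G_t$. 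The case $t=1$ is the same argument with $\bD_{\mathrm{total},0}^*=\mathbf{0}$.

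The main subtle point is uniqueness. Because $\phi_t$ is a bijection, $G_t$ has a unique minimizer exactly when $F_t$ does. I would therefore read the statement's notation $\bD_{\mathrm{total},t}^* = \arg\min(\cdot)$ as asserting that the OTE solution is well defined and unique, which is implicit in the definition of $\tilde{\bD}_t$. Uniqueness then carries over immediately. Alternatively, uniqueness can be obtained directly from strict convexity of $G_t$: $\ell_t$ is a convex quadratic, the linear term is harmless, and $\mathcal{R}$ is closed and convex. If the Hessian of $\ell_t$ is positive definite, then $G_t$ is strictly convex and coercive, so it has exactly one minimizer. In the OLS settings of Proposition~\ref{prop:general-ote-se}, this Hessian corresponds to $\bK_0\bK_0^\top \bP + \sum_{\tau\le t}\bK_\tau\bK_\tau^\top\bP + \lambda\mathbf{I}$, which is positive definite when $\lambda>0$ or the key Gram matrix is nonsingular. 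I would state this as a remark explaining when the uniqueness hypothesis holds.

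Finally, as a sanity check I would verify the representability condition for the least-squares loss. Expanding $\|(\bW_0 + \bD_{\mathrm{total},t-1}^* + \bD)\bK - \bV\|_F^2$ gives exactly a quadratic $\|\bD\bK\|_F^2$, plus the inner product of the gradient at $\bD_{\mathrm{total},t-1}^*$ with $\bD$, plus a constant. This confirms the hypothesis is natural, and that the sequential problem is an OLS problem whose normal equation recovers the updates of Proposition~\ref{prop:general-ote-se}.
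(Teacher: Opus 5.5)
Your proof is correct, and it takes a genuinely different route from the paper's.

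The paper argues through first-order optimality. It starts from $\mathbf{0}\in\nabla\mathcal{L}_t(\bD_{\mathrm{total},t}^*)+\partial\mathcal{R}(\bD_{\mathrm{total},t}^*)$ and differentiates the representability identity to get $\nabla\mathcal{L}_t(\bD_{\mathrm{total},t-1}^*+\bD_t)=\nabla\ell_t(\bD_t)+\nabla\mathcal{L}_t(\bD_{\mathrm{total},t-1}^*)$. It then recognizes the resulting inclusion as the stationarity condition of the sequential problem. Finally it uses convexity for sufficiency and strict convexity of $\ell_t$ (``as a positive definite quadratic form'') for uniqueness.

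You instead compare objective values directly. The hypothesis says the sequential objective is the OTE objective composed with a translation, minus $c_t$, so the two minimizer sets are translates of each other. This is more elementary and more general. It needs no differentiability, no subdifferential sum rule, and no convexity of $\ell_t$ or $\mathcal{R}$ for the correspondence itself.

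It also treats uniqueness more sharply: the sequential minimizer is unique exactly when the OTE minimizer is. The paper's appeal to positive definiteness quietly strengthens the stated hypothesis, which only makes $\ell_t$ convex quadratic.

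What the paper's route buys is that it lands directly on the stationarity (normal-equation) form. The closed-form updates, such as the OLS corollary and Algorithm~\ref{alg:seq-edit-rp}, are read off from that form. Your final least-squares check recovers it as well.

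One correction to your side remark on when uniqueness holds. For a genuine projector $\bP\neq\mathbf{I}$, the AlphaEdit-type objective depends on $\bD$ only through $\bD\bP$. Its Hessian in $\bD$ is therefore singular, and minimizers in $\bD$ are never unique, whatever the value of $\lambda$.

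The matrix $\bK_0\bK_0^\top\bP+\sum_{\tau\le t}\bK_\tau\bK_\tau^\top\bP+\lambda\mathbf{I}$ is not symmetric and is not that Hessian. It is merely invertible when $\lambda>0$, because its eigenvalues are those of the positive semidefinite matrix $\bP(\bK_0\bK_0^\top+\sum_{\tau\le t}\bK_\tau\bK_\tau^\top)\bP$ shifted by $\lambda$. For $\lambda=0$ it is singular. Your positive-definiteness criterion is correct for $\bP=\mathbf{I}$, which is the setting of the paper's OLS corollary.
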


The full proof is provided in Appendix~\ref{app:ols_solution}. This proposition makes explicit a principled mapping
$\mathcal{F}:\; (\mathcal{L}_t,\mathcal{L}_{t-1}) \longmapsto \ell_t$,
which constructs the SE objective by locally re-centering the OTE loss around the previous optimum $\bD_{\mathrm{total},t-1}^*$.
Under the shifted quadratic representability assumption, the difference between consecutive OTE optima is itself the solution to a well-defined SE optimization problem. Thus, sequential updates are principled rather than heuristic, with each $\tilde{\bD}_t$ arising from an explicit convex objective.
In our theoretical analysis, we focus on a least-squares loss for $\mathcal{L}_t$, which is a special case of the shifted quadratic representability condition assumed in the theorem. In this case, the constructed sequential update $\tilde{\bD}_t$ uniquely solves an OLS problem with an intuitive objective function, as provided by Equation~\eqref{eq:ols_sequential} in a corollary of  Proposition~\ref{thm:ote-se-diff}.

\subsection{Effectiveness of Post-processing Regularization 
(RQ3)}
\label{sec:method:regularization-se}

The previous section analyzes regularization applied at the objective level.
We now turn to a widely used alternative: post-processing regularization, where constraints are enforced after the update. Representative approaches under this scheme include PRUNE and RECT. Here we examine this paradigm under the canonical sequential MEMIT formulation (see Remark~\ref{rm:seq-memit-update-rule}).

Specifically, methods in this paradigm first compute an unconstrained update
$\bD_t$ by solving the underlying editing objective, and then apply a
post-processing operator to obtain the final parameter update:
\[
\hat{\bD}_t = \mathcal{R}_p(\bD_t),
\]
where $\mathcal{R}_p(\cdot)$ denotes a predefined regularization  operator.

Unlike objective-level regularization, post-processing is applied \emph{after} the
update is computed. As a result, each application of $\mathcal{R}_p(\bD_t)$ introduces a
deviation from the ideal update prescribed by the underlying objective.
In the sequential setting, these per-step deviations accumulate over edits.
Consequently, the aggregated updates progressively depart from their
corresponding one-time editing (OTE) solution, leading to systematic performance
degradation. 

To mitigate this issue, we introduce an error-correcting term that explicitly compensates for the distortion introduced by $\mathcal{R}_p(\cdot)$. Based on this idea, we design Algorithm \ref{alg:seq-edit-rp}, which restores equivalence to the global objective while retaining the benefits of post-processing regularization. The full derivation of this updating rule is provided in Appendix~\ref{appx:sec:unreg-se-example-derive}.

\begin{algorithm}[t]
\caption{Err. Correction of Post-Processing Reg. in SE}
\label{alg:seq-edit-rp}
\textbf{Input:} $\mathbf{K}_0, \{ \mathbf{K}_t, \mathbf{V}_t\}_{t=1}^T, \mathcal{R}_p(\cdot), \mathbf{W}_0$, total steps $T$ \\
\textbf{Output:} $\mathbf{W}_T^{\mathcal{R}}$
\begin{algorithmic}[1]
\State Initialize $\mathbf{E}_0 \gets \mathbf{0}$, $\mathbf{W}_0^{\mathcal{R}} \gets \mathbf{W}_0$, $\mathbf{C}_0 \gets \mathbf{K}_0 \mathbf{K}_0^\top$
\For{$t = 1$ \textbf{to} $T$}
    \State  Compute: $ \mathbf{C}_t = \mathbf{C}_{t-1}+ \mathbf{K}_t \mathbf{K}_t^\top $
    \State  Caclulate residue: $ \mathbf{R}_t = \bV_t - \bW_{t-1}^{\mathcal{R}}\bK_t $
    \State Solve $\mathbf{\Delta}_t = (\mathbf{R}_t \mathbf{K}_t^\top - \mathbf{E}_{t-1}) \mathbf{C}_t^{-1}$
\State Update weights: $\mathbf{W}_t^{\mathcal{R}} \gets \mathbf{W}_{t-1}^{\mathcal{R}} + \mathcal{R}_p(\mathbf{\Delta}_t)$
\State 
\textcolor{red}{Error-correction: $\mathbf{E}_t \gets (\mathcal{R}_p(\mathbf{\Delta}_t) - \mathbf{\Delta}_t) \mathbf{C}_t$}

\EndFor
\State \Return $\mathbf{W}_T^{\mathcal{R}}$
\end{algorithmic}
\end{algorithm}

\begin{figure*}[b]
    \centering
    \includegraphics[width=0.99\textwidth]{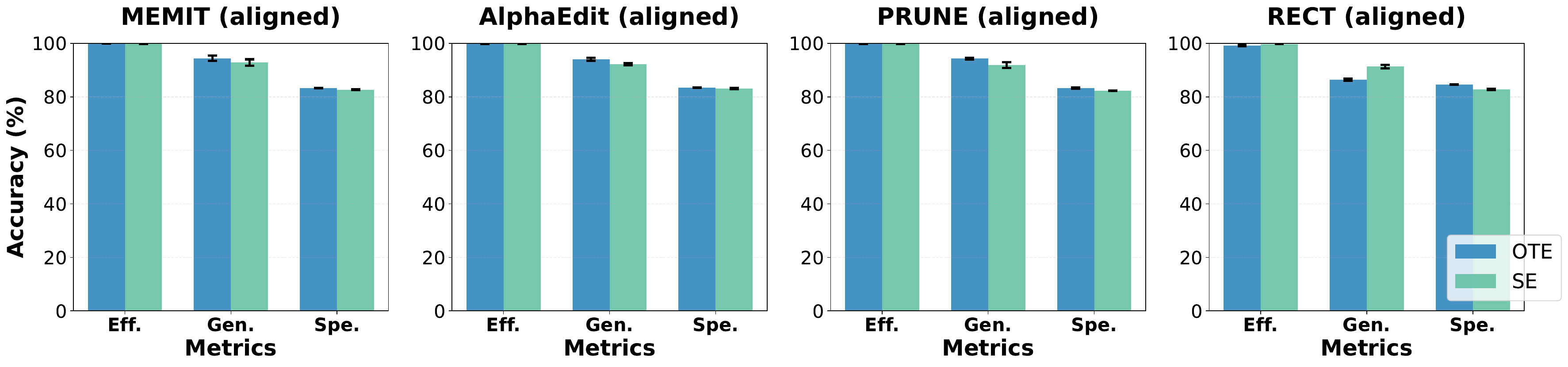}
    \caption{Performance of four editing methods on Qwen2.5 (7B) under OTE (\textbf{blue}) and SE (\textbf{green}) settings to verify the OTE-SE equivalence. Full results on Qwen2.5 (7B), GPT-2 XL (1.5B), GPT-J (6B), and LLaMA-3 (8B) are reported in Table~\ref{tab:ote-se-combined}.}
    \label{fig:OTE-SE comparison}
\end{figure*}

\subsection{General Sequential Editing with Conflict Resolution}
\label{sec:method:se-forgetting}

Real-world deployment of sequential editing typically involves a combination of operations, including inserting new
knowledge, removing outdated or erroneous facts, and resolving conflicts when previously
edited entries become invalid. 
The OTE-SE equivalence framework allows us to handle these operations
in a unified manner.

Formally, at editing step $t$, let the newly inserted knowledge be
\(
\mathcal{A}_t = \{(\mathbf{k}, \mathbf{v}) \mid \mathbf{k} \in \mathcal{K}_t, \mathbf{v} \in \mathcal{V}_t\},
\)
and let $\mathcal{P}_{t-1}$ denote the set of all previously edited and preserved knowledge.
Define the overlapping key set as
\[
\mathcal{K}_{\text{o}}
=
\{\mathbf{k} \mid (\mathbf{k},\mathbf{v}) \in \mathcal{A}_t\}
\cap
\{\mathbf{k} \mid (\mathbf{k},\mathbf{v}') \in \mathcal{P}_{t-1}\}.
\]
For keys in the overlapping set $\mathcal{K}_{\mathrm{o}}$, conflicting values are resolved via a
resolution function $\mathrm{Resolve}(\cdot)$.
Here,
the knowledge is partitioned into three disjoint subsets:
the resolved overlap set $\mathcal{B}_{\mathrm{o}}^{(t)}=\{(\mathbf{k},\text{Resolve}(\mathbf{v},\mathbf{v}'))|\mathbf{k}\in\mathcal{K}_\text{o}\}$,
the non-overlapping part of $\mathcal{A}_t$: $\mathcal{B}_{\mathcal{A}_t/\mathcal{P}_{t-1}}^{(t)}$, and
the non-overlapping part of $\mathcal{P}_{t-1}$: $\mathcal{B}_{\mathcal{P}_{t-1}/\mathcal{A}_t}^{(t)}$,
which together form the updated preserved knowledge set:
\[
\mathcal{P}_t
=
\mathcal{B}_{\mathrm{o}}^{(t)}
\cup
\mathcal{B}_{\mathcal{A}_t/\mathcal{P}_{t-1}}^{(t)}
\cup
\mathcal{B}_{\mathcal{P}_{t-1}/\mathcal{A}_t}^{(t)}.
\]
Then the corresponding sequential editing update admits the following closed-form expression.

\begin{proposition}[General Sequential Editing Update with Conflict Resolution]
\label{prop:se-conflict-resolve}
At editing step $t$, the sequential editing update is given by
\begin{align*}
\mathbf{\Delta}_t^*
=
\Big(
&\mathbf{R}_t\mathbf{K}_t^{\top}
-
(\mathbf{V}_{\mathcal{B}_{\text{o}}^{(t)}}
-\mathbf{W}_{t-1}\mathbf{K}_{\mathcal{B}_{\text{o}}^{(t)}})
\mathbf{K}_{\mathcal{B}_{\text{o}}^{(t)}}^{\top}
\Big)
\notag\\
&\cdot
\Big(
\mathbf{K}_{\mathcal{P}_{t-1}}\mathbf{K}_{\mathcal{P}_{t-1}}^{\top}
+
\mathbf{K}_t\mathbf{K}_t^{\top}
-
\mathbf{K}_{\mathcal{B}_{\text{o}}^{(t)}}
\mathbf{K}_{\mathcal{B}_{\text{o}}^{(t)}}^{\top}
\Big)^{-1},
\end{align*}
where $\mathbf{K}_{\mathcal{P}_{t-1}}$, $\mathbf{V}_{\mathcal{P}_{t-1}}$ come from $\mathcal{P}_{t-1}$; 
$\mathbf{K}_t$, $\mathbf{V}_t$ from $\mathcal{A}_t$, with $\mathbf{R}_t = \mathbf{V}_t-\mathbf{W}_{t-1}\mathbf{K}_t$; 
and $\mathbf{K}_{\mathcal{B}^{(t)}_{\text{o}}}$, $\mathbf{V}_{\mathcal{B}^{(t)}_{\text{o}}}$ from $\mathcal{B}^{(t)}_{\text{o}}$.
\end{proposition}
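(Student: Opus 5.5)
The plan is to apply the design principle of Remark~\ref{rm:design-se}: write down the OTE objective at steps $t$ and $t-1$ with respect to the preserved sets $\mathcal{P}_t$ and $\mathcal{P}_{t-1}$, and take the difference of the two normal equations. At step $t$ the OTE objective is the least-squares problem
\[
\min_{\bD}\ \big\|(\bW_0+\bD)\mathbf{K}_{\mathcal{P}_t}-\mathbf{V}_{\mathcal{P}_t}\big\|_F^2,
\]
with normal equation $(\bW_0+\bD^*_{\mathrm{total},t})\mathbf{K}_{\mathcal{P}_t}\mathbf{K}_{\mathcal{P}_t}^\top=\mathbf{V}_{\mathcal{P}_t}\mathbf{K}_{\mathcal{P}_t}^\top$. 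The same relation holds at step $t-1$ with $\mathcal{P}_{t-1}$. We identify $\bW_{t-1}=\bW_0+\bD^*_{\mathrm{total},t-1}$, by induction using Proposition~\ref{thm:ote-se-diff} or, equivalently, the construction $\bD^*_t=\bD^*_{\mathrm{total},t}-\bD^*_{\mathrm{total},t-1}$.

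The first step is to decompose the Gram and cross-moment matrices along the three-way partition of $\mathcal{P}_t$. Since the blocks are disjoint, Gram matrices add. Write $\mathcal{B}_{\mathcal{P}/\mathcal{A}}=\mathcal{P}_{t-1}\setminus\mathcal{K}_{\mathrm o}$ and $\mathcal{B}_{\mathcal{A}/\mathcal{P}}=\mathcal{A}_t\setminus\mathcal{K}_{\mathrm o}$, and assume each overlapping key appears once in $\mathcal{A}_t$ and once in $\mathcal{P}_{t-1}$. Then
\[
\mathbf{K}_{\mathcal{P}_t}\mathbf{K}_{\mathcal{P}_t}^\top=\mathbf{K}_{\mathcal{P}_{t-1}}\mathbf{K}_{\mathcal{P}_{t-1}}^\top+\mathbf{K}_t\mathbf{K}_t^\top-\mathbf{K}_{\mathcal{B}_{\mathrm o}}\mathbf{K}_{\mathcal{B}_{\mathrm o}}^\top =: \mathbf{C}_t,
\]
because the overlap keys are counted in both $\mathcal{P}_{t-1}$ and $\mathcal{A}_t$ and must be subtracted once. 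The matching cross moment is
\[
\mathbf{V}_{\mathcal{P}_t}\mathbf{K}_{\mathcal{P}_t}^\top=\mathbf{V}_{\mathcal{B}_{\mathcal{P}/\mathcal{A}}}\mathbf{K}^\top_{\mathcal{B}_{\mathcal{P}/\mathcal{A}}}+\mathbf{V}_{\mathcal{B}_{\mathcal{A}/\mathcal{P}}}\mathbf{K}^\top_{\mathcal{B}_{\mathcal{A}/\mathcal{P}}}+\mathbf{V}_{\mathcal{B}_{\mathrm o}}\mathbf{K}_{\mathcal{B}_{\mathrm o}}^\top .
\]

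Next, subtract the step-$(t-1)$ normal equation, $\bW_{t-1}\mathbf{K}_{\mathcal{P}_{t-1}}\mathbf{K}_{\mathcal{P}_{t-1}}^\top=\mathbf{V}_{\mathcal{P}_{t-1}}\mathbf{K}_{\mathcal{P}_{t-1}}^\top$, from the step-$t$ one written as $(\bW_{t-1}+\bD_t^*)\mathbf{C}_t=\mathbf{V}_{\mathcal{P}_t}\mathbf{K}_{\mathcal{P}_t}^\top$. This gives
\[
\bD^*_t\mathbf{C}_t=\mathbf{V}_{\mathcal{P}_t}\mathbf{K}_{\mathcal{P}_t}^\top-\mathbf{V}_{\mathcal{P}_{t-1}}\mathbf{K}_{\mathcal{P}_{t-1}}^\top-\bW_{t-1}\big(\mathbf{K}_t\mathbf{K}_t^\top-\mathbf{K}_{\mathcal{B}_{\mathrm o}}\mathbf{K}_{\mathcal{B}_{\mathrm o}}^\top\big).
\]

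The remaining task is bookkeeping on the right-hand side, and I expect this to be the main obstacle. The $\mathcal{B}_{\mathcal{P}/\mathcal{A}}$ terms cancel. The $\mathcal{B}_{\mathcal{A}/\mathcal{P}}$ terms, together with the new-key part of $\bW_{t-1}\mathbf{K}_t\mathbf{K}_t^\top$, produce $\mathbf{R}_t\mathbf{K}_t^\top$ minus the contribution of the overlap columns of $\mathcal{A}_t$. For the overlap keys, $\mathbf{V}_{\mathcal{B}_{\mathrm o}}$ enters, the old values $\mathbf{v}'$ from $\mathcal{P}_{t-1}$ leave, and the new values $\mathbf{v}$ from $\mathcal{A}_t$ were already counted inside $\mathbf{R}_t\mathbf{K}_t^\top$.

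To reach the stated numerator $\mathbf{R}_t\mathbf{K}_t^\top-(\mathbf{V}_{\mathcal{B}_{\mathrm o}}-\bW_{t-1}\mathbf{K}_{\mathcal{B}_{\mathrm o}})\mathbf{K}_{\mathcal{B}_{\mathrm o}}^\top$, I would need the overlap contributions to combine accordingly. I would first check whether the stated formula tacitly treats the old overlap entries as fitted, i.e.\ $\mathbf{v}'\approx\bW_{t-1}\mathbf{k}$ on the preserved set, and I would carefully track which copy of each overlap key is retained in $\mathcal{P}_t$. I would pin down these conventions, which are implicit in how $\mathrm{Resolve}$ and $\mathbf{R}_t$ are defined, and verify the signs on a single-key example before writing the general identity.

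Finally, I would assume $\mathbf{C}_t$ is invertible. This holds because it equals the Gram matrix $\mathbf{K}_{\mathcal{P}_t}\mathbf{K}_{\mathcal{P}_t}^\top$, which contains $\mathbf{K}_0\mathbf{K}_0^\top$ and is positive definite in practice. Multiplying by $\mathbf{C}_t^{-1}$ then yields the closed form.
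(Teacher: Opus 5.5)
\textbf{There is a genuine gap in the final step.} Your route is essentially the paper's. The paper writes the step-$t$ least-squares problem over the three blocks around $\bW_{t-1}$, takes its normal equation, and removes $(\bV_{\mathcal{P}_{t-1}}-\bW_{t-1}\bK_{\mathcal{P}_{t-1}})\bK_{\mathcal{P}_{t-1}}^\top$ using optimality at step $t-1$; this is the same as your differencing of OTE normal equations, and your Gram-matrix identity is correct. However, the bookkeeping step you defer is the entire content, and as written you do not prove the statement.

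Write $\bK_{\mathrm o}$ for the overlap keys, $\bV^{\mathrm{new}}_{\mathrm o}$ and $\bV'_{\mathrm o}$ for their columns in $\bV_t$ and $\bV_{\mathcal{P}_{t-1}}$, and $(\bK_{A/P},\bV_{A/P})$ for the non-overlapping new block. Carrying your computation through gives
\[
\bD_t^*\mathbf{C}_t=(\bV_{A/P}-\bW_{t-1}\bK_{A/P})\bK_{A/P}^\top+(\bV_{\mathcal{B}_{\mathrm o}}-\bV'_{\mathrm o})\bK_{\mathrm o}^\top .
\]
The stated numerator instead expands to $(\bV_{A/P}-\bW_{t-1}\bK_{A/P})\bK_{A/P}^\top+(\bV^{\mathrm{new}}_{\mathrm o}-\bV_{\mathcal{B}_{\mathrm o}})\bK_{\mathrm o}^\top$. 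The two agree if and only if $(2\bV_{\mathcal{B}_{\mathrm o}}-\bV^{\mathrm{new}}_{\mathrm o}-\bV'_{\mathrm o})\bK_{\mathrm o}^\top=\mathbf{0}$.

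The paper closes this step by writing $\bV_t=[\bV_{A/P}\;\bV_{AP}]$ and $\bV_{\mathcal{P}_{t-1}}=[\bV_{P/A}\;\bV_{AP}]$, with the same resolved block $\bV_{AP}=\bV_{\mathcal{B}_{\mathrm o}}$ in both. The summed residual cross-moments then contain $2(\bV_{AP}-\bW_{t-1}\bK_{AP})\bK_{AP}^\top$, and one copy is subtracted off. This identification is the missing ingredient. With it, your right-hand side collapses to $(\bV_{A/P}-\bW_{t-1}\bK_{A/P})\bK_{A/P}^\top$, which is exactly the stated numerator.

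Your proposed check, that the old overlap entries are fitted ($\mathbf{v}'\approx\bW_{t-1}\mathbf{k}$), does not close the gap. Substituting $\bV'_{\mathrm o}=\bW_{t-1}\bK_{\mathrm o}$ still requires $(2\bV_{\mathcal{B}_{\mathrm o}}-\bV^{\mathrm{new}}_{\mathrm o}-\bW_{t-1}\bK_{\mathrm o})\bK_{\mathrm o}^\top=\mathbf{0}$. When Resolve returns the new value, this says the new value is already fitted, which is false in general.

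Your suspicion is justified. For a genuine conflict $\mathbf{v}\neq\mathbf{v}'$ resolved to $\mathbf{v}$, the correct numerator is $\bR_t\bK_t^\top-(\bV'_{\mathrm o}-\bW_{t-1}\bK_{\mathrm o})\bK_{\mathrm o}^\top$. In other words, the subtracted block must carry the \emph{old} values. This is also how the paper's own Memorize-the-Latest remark applies the formula: it subtracts $(\bV_{t-1}-\bW_{t-1}\bK_{t-1})\bK_{t-1}^\top$.

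To finish, you must commit to an explicit convention rather than leave it as something to check. Either adopt the paper's identification of both overlap value blocks with $\bV_{\mathcal{B}_{\mathrm o}}$ (or the weaker averaging condition above), or read $\bV_{\mathcal{B}_{\mathrm o}}$ as the old values being removed. Then verify that the differenced normal equation reduces to the stated form under that convention.
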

We defer a full treatment of this general formulation, together with several important cases, to Appendix~\ref{sec:appx:conflit-resolv} and Appendix~\ref{sec:appx:experiment-forgetting-conflict-resol} and Appendix~\ref{sec:appx:experiment-forgetting-conflict-resol}.

\begin{figure*}[!t]
    \centering
    \begin{minipage}{0.32\linewidth}
        \centering
        \includegraphics[width=\linewidth]{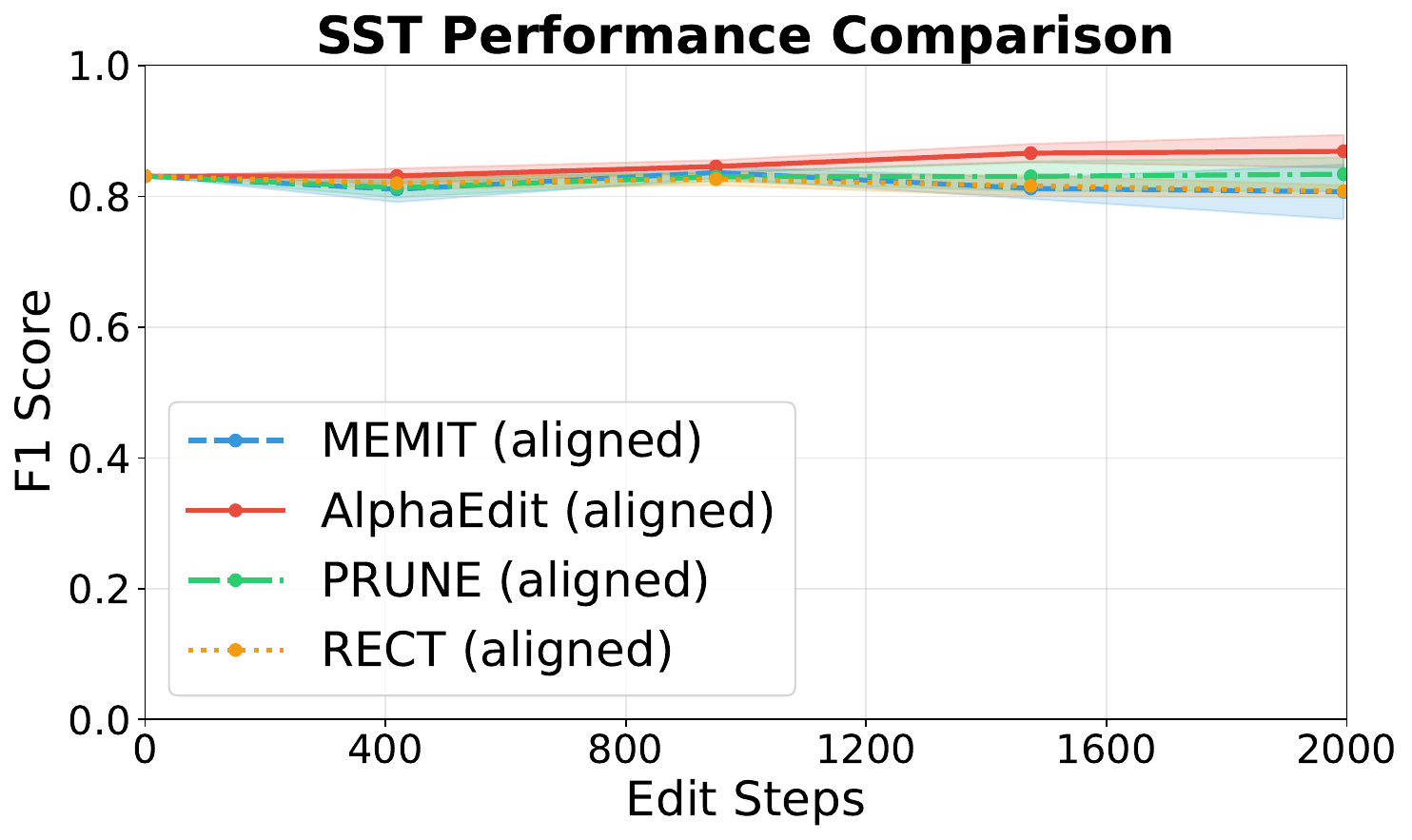}
    \end{minipage}\hfill
    \begin{minipage}{0.32\linewidth}
        \centering
        \includegraphics[width=\linewidth]{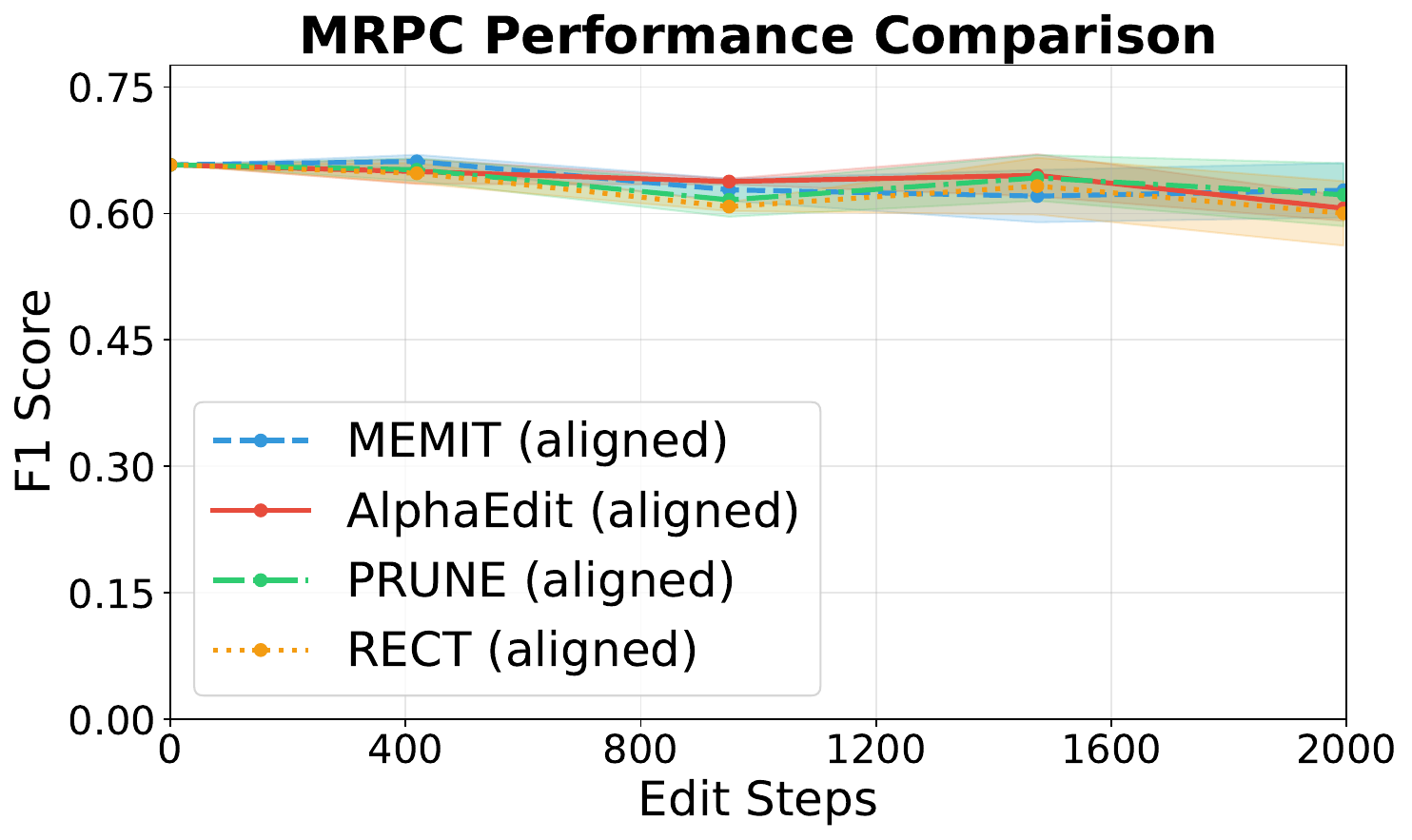}
    \end{minipage}\hfill
    \begin{minipage}{0.32\linewidth}
        \centering
        \includegraphics[width=\linewidth]{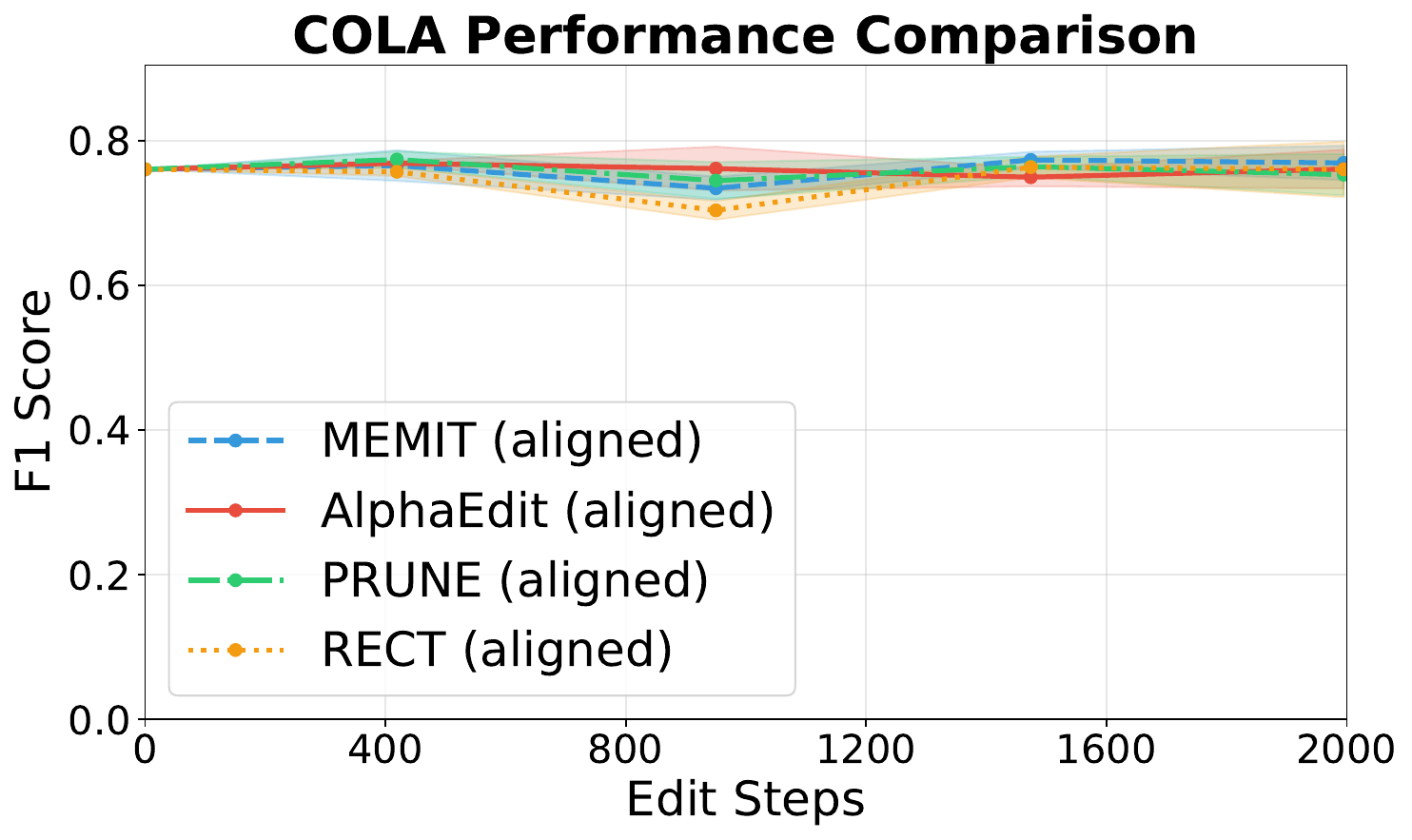}
    \end{minipage}

    \vspace{0.5em}

    \begin{minipage}{0.32\linewidth}
        \centering
        \includegraphics[width=\linewidth]{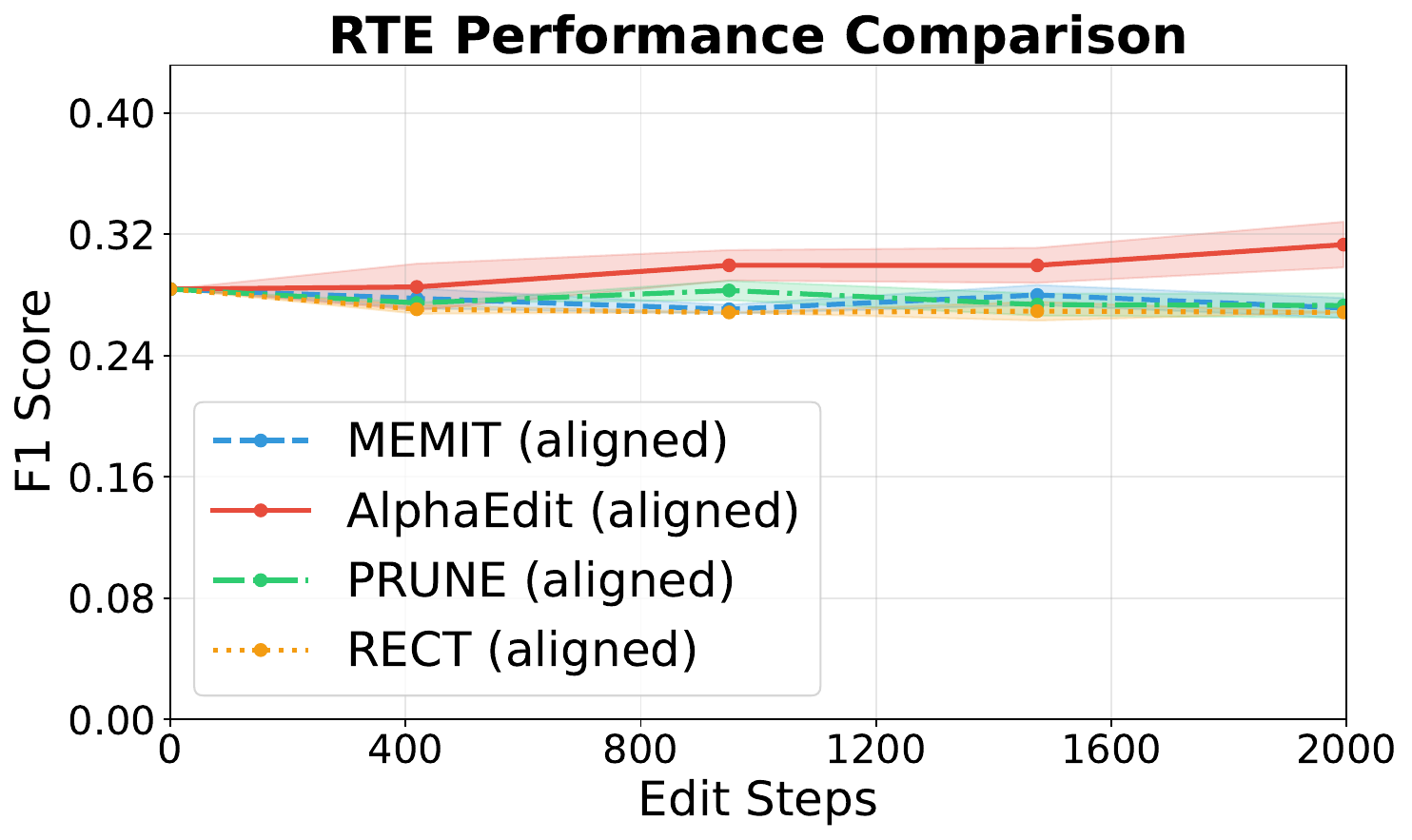}
    \end{minipage}\hfill
    \begin{minipage}{0.32\linewidth}
        \centering
        \includegraphics[width=\linewidth]{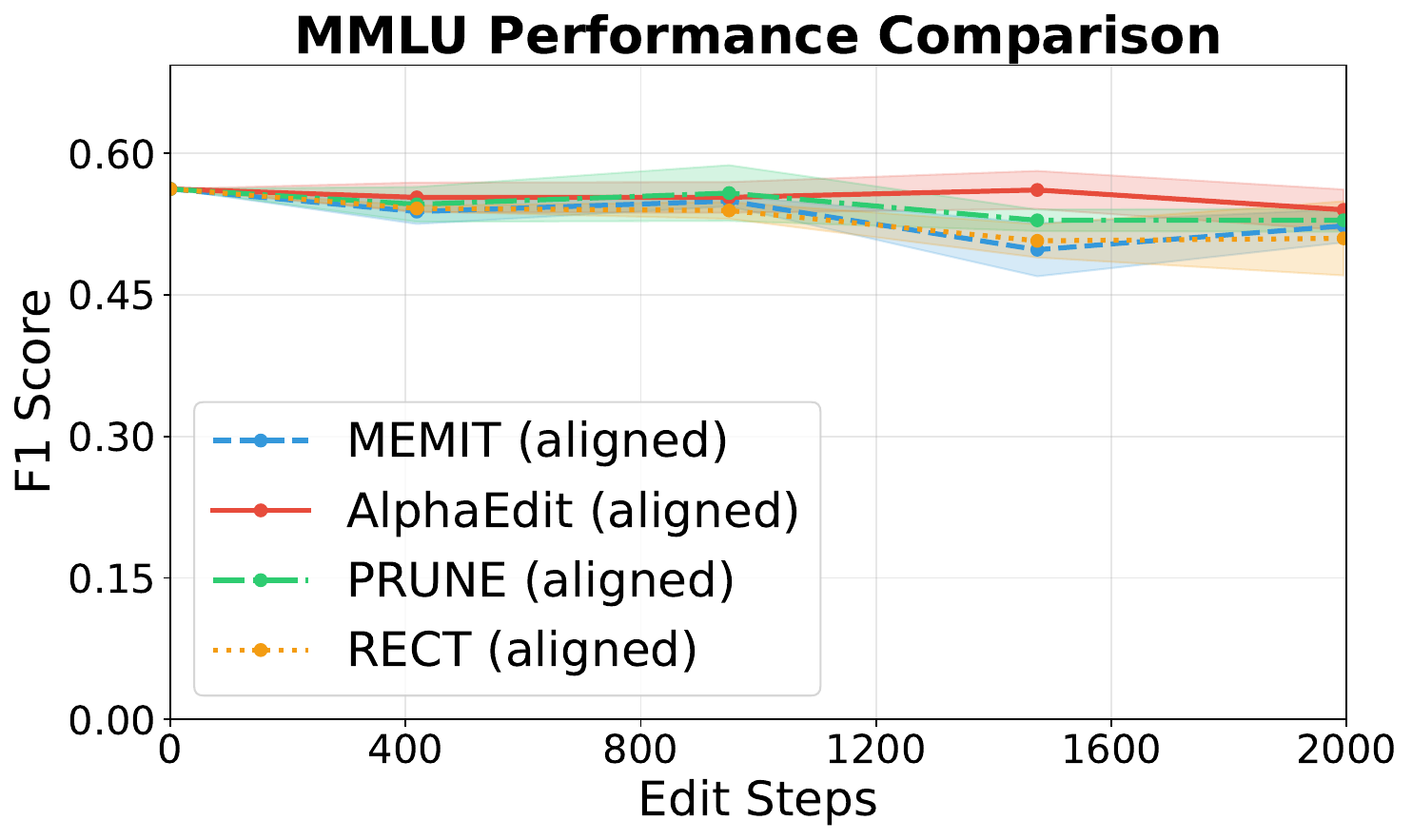}
    \end{minipage}\hfill
    \begin{minipage}{0.32\linewidth}
        \centering
        \includegraphics[width=\linewidth]{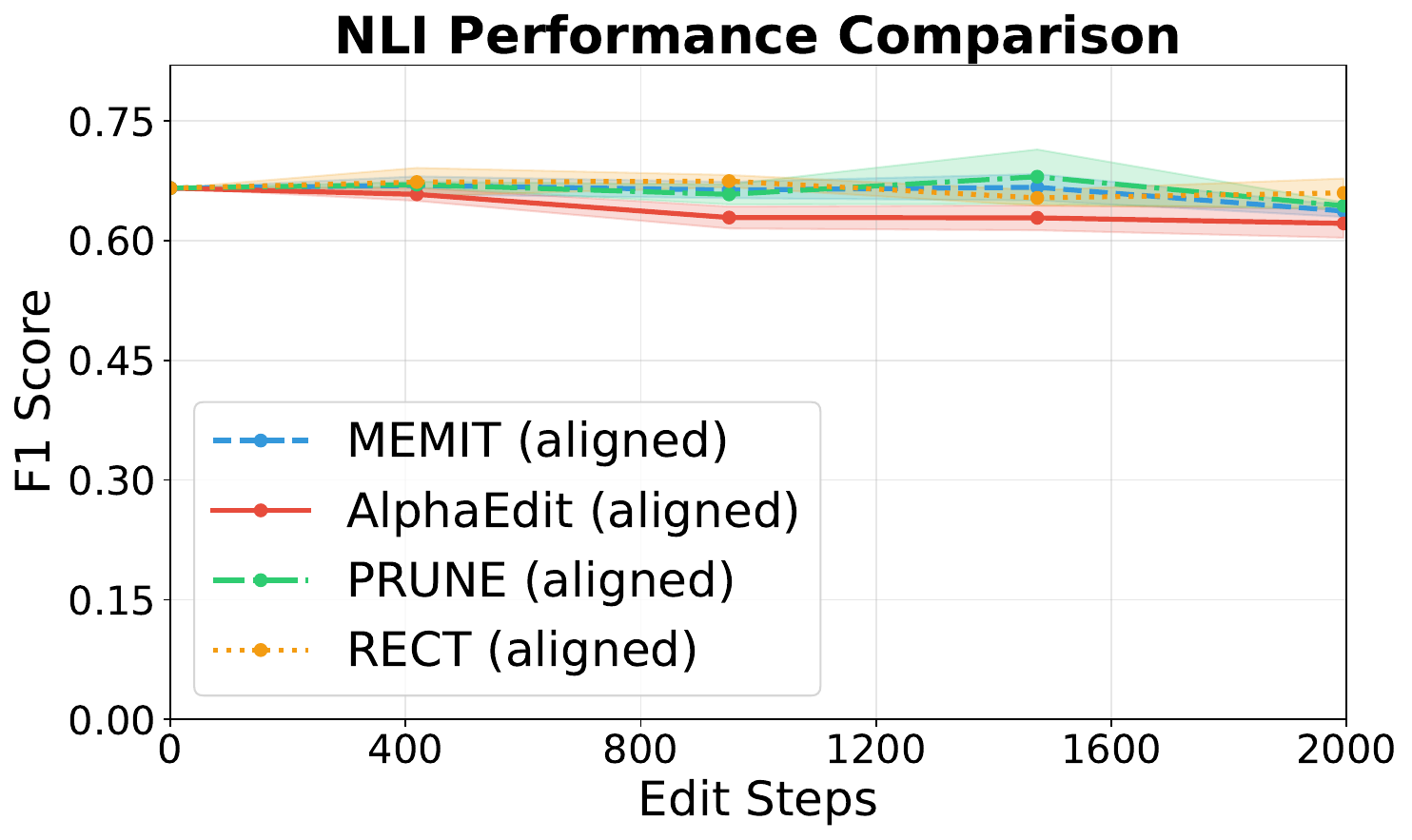}
    \end{minipage}

    \caption{F1 scores of the post-edited LLaMA3 (8B) on six tasks (SST, MRPC, CoLA, RTE, MMLU, NLI) for general capability testing. }
    \label{fig:general_capability}
\end{figure*}

\section{Experiment}
\label{sec:experiment}

In this section, we first verify the equivalence between OTE and SE, and then conduct ablation studies to investigate the failure modes associated with unstable sequential updates.

\subsection{Experimental Setup}
\label{sec:experiment-setup}

\paragraph{Base LLMs \& Editing Methods.}

Our experiments are conducted on four large language models: GPT-2 XL (1.5B), GPT-J (6B), LLaMA-3 (8B), and Qwen-2.5 (7B). 
We validate our theoretical findings by evaluating several representative model editing methods, including AlphaEdit~\cite{fang2024alphaedit}, MEMIT~\cite{meng2023memit}, PRUNE~\cite{ma2024prune}, and RECT~\cite{gu2024rect}.
We report results based on variants derived from OTE alignments (based on Algorithm~\ref{alg:seq-edit-rp} and Proposition~\ref{thm:ote-se-diff}) for MEMIT (aligned), PRUNE (aligned), and RECT (aligned). Since AlphaEdit follows OTE alignment already, we name it AlphaEdit (aligned). In ablation study, we also analyze the variant of each algorithm that is not OTE-aligned, with a ``(Naive)" postscript.

\paragraph{Datasets and Evaluation Metrics.}
We conduct our evaluation on two standard benchmarks for structured knowledge editing, CounterFact~\cite{meng2022rome} and ZsRE~\cite{levy2017zero}. Consistent with prior studies~\cite{meng2022rome}, we mainly assess performance using Efficacy, Generalization, and Specificity.

For more details, please refer to 
Appendix~\ref{sec:appdx:exp_setup}.

\subsection{OTE and SE Performance}
\label{sec:experiment-ote-se-performance}

Figure~\ref{fig:OTE-SE comparison} compares OTE and SE on the Qwen2.5 (7B) model under a standard editing setup. A total of 2,000 samples are randomly drawn from the CounterFact dataset. For SE, edits are performed with 100 samples per step, while OTE applies a single batch update using the same 2,000 samples.
 For clarity, we report results for Qwen2.5 (7B) in the main text, while complete results across all models are provided in Table~\ref{tab:ote-se-combined} in Appendix~\ref{appx:eval-counterfact}. We also provide the full results on the ZsRE dataset in Table~\ref{tab:ote-se-combined-zsre} in Appendix~\ref{sec:zsre}. Based on these results, we draw the following observations:

\begin{itemize}[leftmargin=*]
    \item \textbf{Obs 1: SE and OTE yield nearly identical performance for MEMIT (aligned) and AlphaEdit (aligned).}
    Across all base models and evaluation metrics, AlphaEdit (aligned) and MEMIT (aligned) exhibit nearly identical results under SE and OTE settings. This empirical consistency validates our theoretical finding that SE and OTE are mathematically equivalent under the unified OLS formulation,
    despite their apparent algorithmic differences\footnote{The slight differences arise because, in the sequential setting, the value matrix is recomputed after each edit, which can lead to minor numerical variations compared to computing it once in the one-time editing setting.}.

    \item 
    \textbf{Obs 2: Error Correction Stabilizes Post-Processing Methods in SE.}
    After incorporating the error-correction step defined in Algorithm~\ref{alg:seq-edit-rp}, the modified models, PRUNE (aligned) and RECT (aligned), achieve comparable SE performance to AlphaEdit (aligned) and MEMIT (aligned) across all evaluation metrics. This indicates that once the stability is ensured, explicit regularization is not strictly necessary for effective SE.

 \item \textbf{Obs 3: Effectiveness of Post-processing Regularization Varies Between Editing Settings (OTE vs. SE) Due to Update Accumulation Effects.}
In the OTE scenario, regularization techniques may yield suboptimal results when applied as post-processing constraints on the aggregated update matrix $\bD_{\mathrm{total}}$. This limitation arises because regularization constraints operate globally on the accumulated update, potentially distorting critical update directions.~\footnote{The impact of regularization depends fundamentally on whether constraints are applied to the update matrix $\bD_{\mathrm{total}}$ or directly to the final weight matrix $\bW$; the former approach tends to introduce greater information loss.}

The effectiveness of regularization diverges significantly across methods in OTE settings. PRUNE, which discards eigen-components with eigenvalues exceeding those of the original weight matrix, exhibits minimal practical effect since such components are largely absent in updates. Consequently, PRUNE (aligned)'s behavior closely resembles that of unregularized MEMIT (aligned).

RECT (aligned), conversely, retains a fixed proportion of update components based on relative weight changes. When applied to the aggregated $\bD_{\mathrm{total}}$ in OTE, this proportional constraint indiscriminately discards numerous informative update directions, leading to substantial information degradation. 
However, when the same constraint is applied sequentially in SE, it avoids this accumulation effect and preserves fine-grained updates more effectively.

\end{itemize}

\subsection{General Language Capability Tests}
\label{sec:experiment-lang-capability}

To assess the effect of sequential editing on post-edited LLMs, we conduct General Capability Tests using six natural language tasks from the General Language Understanding Evaluation (GLUE) benchmark~\cite{wang2018glue}. The selected tasks span diverse linguistic dimensions: SST (Stanford Sentiment Treebank)~\cite{socher2013recursive}, MRPC (Microsoft Research Paraphrase Corpus)~\cite{dolan2005automatically}, MMLU (Massive Multi-task Language Understanding)~\cite{hendrycks2020measuring}, RTE (Recognizing Textual Entailment)~\cite{bentivogli2009fifth}, CoLA (Corpus of Linguistic Acceptability)~\cite{warstadt2019neural}, and NLI (Natural Language Inference)~\cite{williams2018broad}.

Figure~\ref{fig:general_capability} shows the performance trends as the number of edits increases. Across all tasks, model performance remains largely stable under sequential editing, with different methods exhibiting highly similar behavior. These results indicate that sequential editing aligned with OTE preserves general language capabilities and that this stability is largely independent of the specific regularization strategy.

\begin{figure*}[t]
    \centering

    \begin{minipage}{0.24\linewidth}
        \centering
        \includegraphics[width=\linewidth]{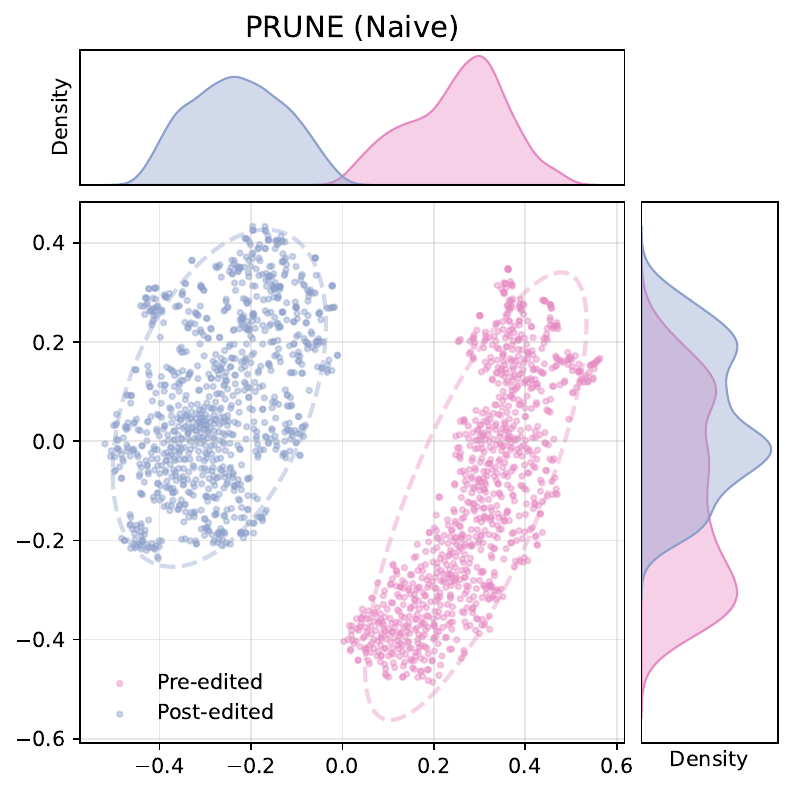}
    \end{minipage}
    \begin{minipage}{0.24\linewidth}
        \centering
        \includegraphics[width=\linewidth]{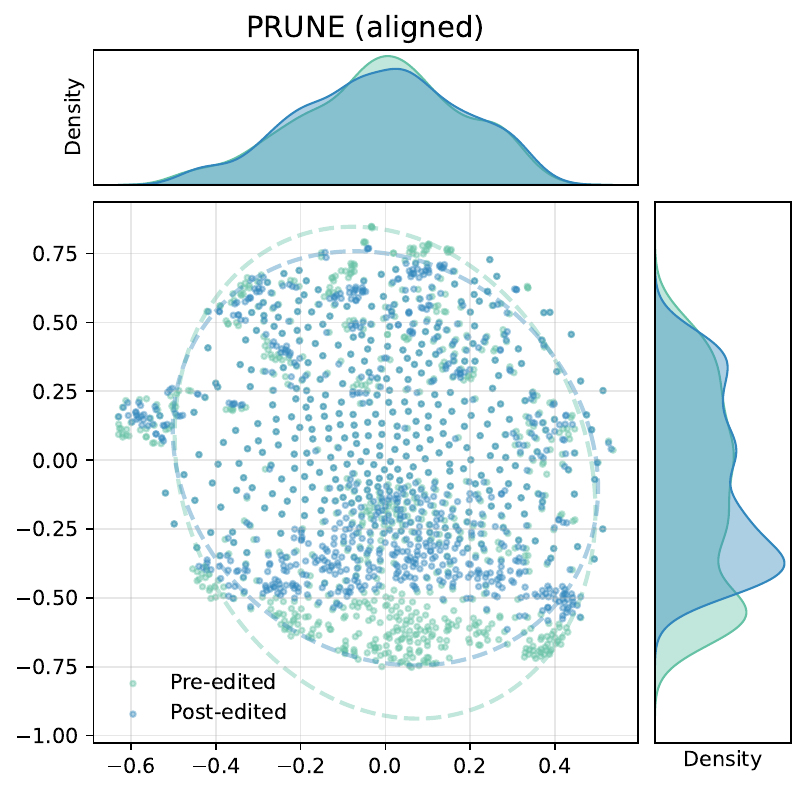}
    \end{minipage}
    \begin{minipage}{0.24\linewidth}
        \centering
        \includegraphics[width=\linewidth]{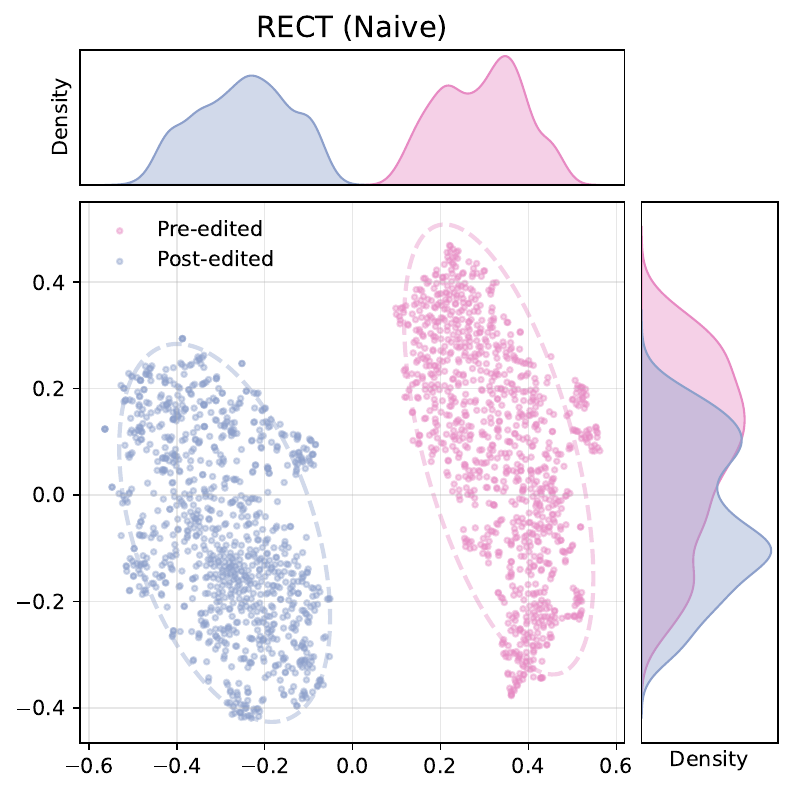}
    \end{minipage}\hfill
    \begin{minipage}{0.24\linewidth}
        \centering
        \includegraphics[width=\linewidth]{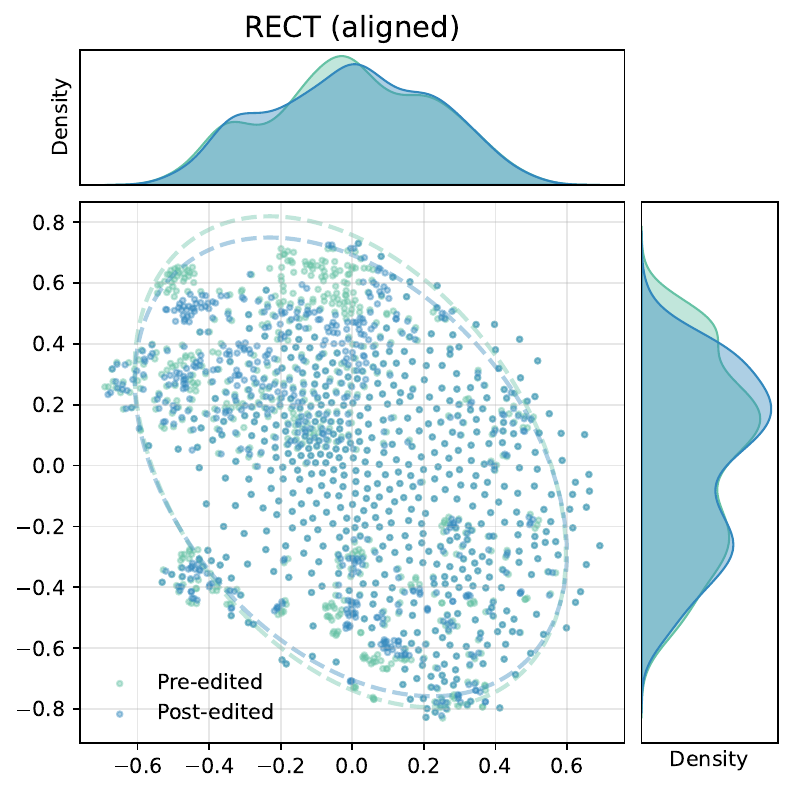}
    \end{minipage}\hfill

    \caption{Visualization of the hidden representations of pre-edited and post-edited LLaMA-3 after dimensionality reduction. The original, not OTE-aligned implementation of PRUNE (Naive) and RECT (Naive) show strong shifts, whereas with error corrections and being fully OTE-aligned through Algorithm~\ref{alg:seq-edit-rp}, both PRUNE (aligned) and RECT (aligned) show very small distribution shifts. }
    \label{fig:latent-shift}
\end{figure*}

\begin{table}[h]
\centering
\caption{Ablation study analyzing the effects of OTE alignment and error correction on the stability and performance of SE algorithms. Results obtained on the CounterFact dataset.}
\label{tab:ablation-trunc}
\scalebox{0.9}{
\begin{tabular}{@{}c|lccc@{}}
\toprule
Ablation & \textbf{Method} & \textbf{Eff.}$\uparrow$ & \textbf{Gen.}$\uparrow$ & \textbf{Spe.}$\uparrow$ \\
\midrule
\multirow{2}{*}{\shortstack{Fully\\ Aligned}}
& PRUNE     & 99.87\errbar{0.03} & 94.91\errbar{0.22} & 79.90\errbar{0.20} \\
& RECT  & 99.88\errbar{0.08} & 94.34\errbar{0.09} & 81.56\errbar{0.22} \\
\midrule
\multirow{2}{*}{\shortstack{No Err. \\ Correction}}  
& PRUNE   & 99.82\errbar{0.10} & 95.22\errbar{0.60} & 80.19\errbar{0.18}  \\
& RECT  & 96.98\errbar{0.75} & 83.60\errbar{1.22} &  84.86\errbar{0.16} \\
\midrule
\multirow{2}{*}{\shortstack{Not OTE \\ Aligned}} 
& PRUNE   & 56.30\errbar{1.25} & 53.90\errbar{0.75} & 48.18\errbar{0.21}  \\
& RECT  & 60.35\errbar{1.12} & 58.35\errbar{1.25} & 46.80\errbar{0.20}  \\
\bottomrule
\end{tabular}
}
\end{table}

\subsection{Abalation Studies}
\label{sec:experiment-abalation}

Here, we study how editing performance on the CounterFact dataset degrades as strict OTE alignment is relaxed, ranging from fully aligned SE (``Fully Aligned'') to naïve repeated OTE applications (``Not OTE Aligned''). For clarity, we focus on PRUNE and RECT here; the complete results with MEMIT and AlphaEdit are provided in Appendix~\ref{sec:appx:detail-ablation}. 

Table~\ref{tab:ablation-trunc} summarizes the findings. Overall, methods that maintain OTE alignment achieve the strongest performance, while ``Not OTE Aligned" edits exhibit substantial drops in effectiveness, generalization, and specificity.
Furthermore, PRUNE shows minimal sensitivity to the removal of error correction, whereas RECT shows obvious performance drop without error correction, consistent with our discussion in Section~\ref{sec:experiment-ote-se-performance}. For RECT, we additionally test stronger regularization (retaining only the top 20\% of elements) to evaluate the role of error correction; these results are reported in Table~\ref{tab:ablation-rect-20}.

We further analyze the behavior of SE through latent-embedding visualizations in Figure~\ref{fig:latent-shift}. The results indicate that the previously reported post-editing distribution shift in SE~\cite{fang2024alphaedit} is not induced by regularization. Instead, this distribution shift naturally vanishes once the OTE–SE alignment is enforced. A more detailed analysis and additional visualizations are provided in Appendix~\ref{sec:appx:latent-encoding}.

\section{Related Works}
\label{sec:rw}

\textbf{Locate-and-Edit.}
Model editing methods can be broadly divided into parameter-modifying and parameter-preserving approaches. Among parameter-modifying methods, the locate-and-edit paradigm is the most widely adopted: it first identifies parameters associated with a target fact and then applies targeted updates. ROME~\cite{meng2022rome} introduces this paradigm via rank-one updates to fact-specific representations, and MEMIT~\cite{meng2023memit} extends it to scalable batch editing using low-rank updates.
Subsequent work focuses on improving robustness and generalization, often by regularizing or constraining the update process. RECT~\cite{gu2024rect} constrains relative weight changes to reduce unintended side effects, AlphaEdit~\cite{fang2024alphaedit} enables lifelong editing through null-space projection, and PRUNE~\cite{ma2024prune} limits update magnitudes to preserve original behavior. Along this line, SimIE~\cite{guo2025simie} uses an ideal-editor formulation to adapt existing one-step editors to lifelong editing, while LyapLock~\cite{wang2025lyaplock} introduces bounded knowledge-preservation constraints to control drift across sequential edits. More recent methods such as AnyEdit~\cite{jiang2025anyedit} and SIR~\cite{wang2025missing} further extend the editing scope and efficiency.
In this work, we focus on structured knowledge editing and therefore select MEMIT, RECT, AlphaEdit and PRUNE as representative methods for analysis and comparison.

\textbf{Auxiliary Editing Methods.}
Beyond direct weight modification, several methods edit behavior through external memory, learned update rules, or prompting. SERAC~\cite{mitchell2022serac} stores edits in memory and routes relevant queries to a counterfactual model, while GRACE~\cite{hartvigsen2023aging} uses discrete key--value adaptors for lifelong editing. IKE~\cite{zheng2023can} induces edited behavior through in-context demonstrations, and MEND~\cite{mitchell2022fast} learns to transform gradients into efficient updates. These methods represent flexible alternatives to direct editing.

\section{Conclusion}
\label{sec:conclusion}

In this work, we analyzed the empirical success of AlphaEdit and identify the fundamental mechanism underlying stable sequential editing: the equivalence between one-time editing (OTE) and sequential editing (SE). Building on this insight, we establish a principled design criterion for stable SE.
We further confirmed these insights empirically, demonstrating that many commonly used regularization techniques are not necessary for effective sequential updates. 
Overall, our findings provide a principled understanding of sequential knowledge editing and offer guidance for designing simpler, more reliable, and interpretable methods; we discuss broader applicability, theoretical scope, and deployment considerations in Appendix~\ref{sec:appx:discussion-limitations}.

\section*{Impact Statement}

This work provides a principled understanding of sequential model editing by identifying equivalence to one-time editing as the core mechanism underlying stable updates. By unifying a broad class of locate-and-edit methods under a common optimization perspective, our analysis clarifies why certain widely-used techniques succeed while others fail under repeated edits.

From a methodological standpoint, our results simplify the design space of sequential editing algorithms. Rather than relying on increasingly complex regularization heuristics, we show that stability can be achieved by explicitly preserving consistency with a well-defined batch objective. This insight not only explains the empirical success of existing methods such as AlphaEdit and MEMIT, but also offers a systematic recipe for constructing new sequential editing algorithms with provable stability guarantees.

Practically, our findings have direct implications for deploying editable large language models in long-lived or continually evolving systems. By revealing the limitations of post-processing regularization and error accumulation, this work helps practitioners avoid fragile design choices and motivates more robust editing pipelines. Overall, we hope this work serves as a foundation for more reliable, interpretable, and theoretically grounded approaches to lifelong model editing.

\bibliography{ref}
\bibliographystyle{icml2026}

\newpage
\appendix
\onecolumn
\section{Additional Discussions on Model Editing and Proofs}
In this section, we provide proofs for the propositions and algorithm presented in the main text, and discuss several important special cases.

\subsection{Notations and Preliminary Information}
Before presenting the proofs, we first introduce some preliminary notation and definitions that facilitate the subsequent analysis. Throughout this paper, we adopt a step-indexed notation, with the following correspondence to the conventions used in AlphaEdit:
\begin{equation}
\begin{split}
    \bK_\text{new}=[\underbrace{\bK_1|\bK_2|\ldots| \bK_{t-1}}_{\text{AlphaEdit: }\mathbf{K}_p}|\underbrace{\bK_t}_{\text{AlphaEdit: }\mathbf{K}_1}] \\ 
    \bV_\text{new}=[\bV_1|\bV_2|\ldots| \bV_{t-1}|\underbrace{\bV_t}_{\text{AlphaEdit: }\mathbf{V}_1}].
\end{split} \label{appx:eq:symbo-correspond}
\end{equation}
We use the following step-indexed notation from the main text. We denote $\mathbf{W}_0 = \mathbf{W}$ as the original (pre-edited) weight matrix, and define the recursive update and its corresponding direct-sum form as follows:
\begin{equation}
    \mathbf{W}_t = \mathbf{W}_{t-1}+\mathbf{\Delta}_{t}^* \quad \text{(recursive)}; \quad\quad \mathbf{W}_t = \mathbf{W}+\sum_{\tau=1}^{t}\mathbf{\Delta}_{\tau}^*
    \label{appx:eq:weight} \quad \text{(direct sum)}.
\end{equation}

Additionally, we have defined $\mathbf{K}_{1:t-1}$ as the collection of previously edited keys:
\begin{equation}
    \mathbf{K}_{1:t-1} = [\bK_1|\bK_2|\ldots|\bK_{t-1}].
    \label{appx:eq:k-over-t}
\end{equation}
Assuming $\mathbf{K}_{1:0}\mathbf{K}_{1:0}^\top = \mathbf{0}$, we state the following basic property of $\mathbf{K}_{1:t}$, which follows directly from outer-product rules:
\begin{equation}
    \mathbf{K}_{1:t}\mathbf{K}_{1:t}^\top = \mathbf{K}_{1:t-1}\mathbf{K}_{1:t-1}^\top + \bK_{t}\bK_{t}^\top \quad \text{(recursive)}; \quad\quad \mathbf{K}_{1:t}\mathbf{K}_{1:t}^\top =  \sum_{\tau=1}^{t} \bK_{\tau}\bK_{\tau}^\top \quad\text{(direct summation)}.
    \label{appx:eq:c-t}
\end{equation}
Therefore, we can write AlphaEdit's update at step $t$ as:
\begin{equation}
\mathbf{\Delta}_{t}^* = 
\left(\mathbf{V}_t -  \mathbf{W}_{t-1}\mathbf{K}_{t}\right) \mathbf{K}_t^\top\mathbf{P} \left(\mathbf{K}_{1:t-1}\mathbf{K}_{1:t-1}^\top\mathbf{P} + \mathbf{K}_t\mathbf{K}_t^\top\mathbf{P} +\mathbf{I}\right)^{-1}.
\label{appx:eq:alphaedit-obj}
\end{equation}

\subsection{AlphaEdit's Equivalence of One-Time and Sequential Editing}
\label{sec:appdx:proof-alphaedit-se-ote-equality}
\begin{lemma*}
[AlphaEdit's Equivalence of One-Time and Sequential Editing]
Let $\mathbf{\Delta}_\tau^*$ denote the update obtained at step $\tau$ by solving the sequential editing
problem using Equation~\eqref{eq:alphaedit-normal-eq-unified}. Then, after $t$ edits, the accumulated sequential update satisfies
\[
\sum\nolimits_{\tau=1}^{t} \mathbf{\Delta}_\tau^*
=
\mathbf{\Delta}_{\text{total},t}^* .
\]
\end{lemma*}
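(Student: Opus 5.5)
We argue by induction on $t$, working with normal equations rather than explicit inverses. Write $\mathbf{A}_t := \mathbf{K}_{1:t}\mathbf{K}_{1:t}^\top\mathbf{P} + \mathbf{I}$. By the recursive identity \eqref{appx:eq:c-t}, $\mathbf{A}_t = \mathbf{A}_{t-1} + \mathbf{K}_t\mathbf{K}_t^\top\mathbf{P}$, and $\mathbf{A}_t$ is exactly the matrix inverted in both \eqref{appx:eq:alphaedit-obj} and the definition of $\mathbf{\Delta}_{\text{total},t}^*$. We assume throughout, as the paper's closed forms implicitly do, that each $\mathbf{A}_t$ is invertible.

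With this notation, the statement to prove is
\[
\mathbf{S}_t\mathbf{A}_t = \sum_{\tau=1}^{t}\mathbf{R}'_\tau\mathbf{K}_\tau^\top\mathbf{P},
\qquad\text{where } \mathbf{S}_t := \sum_{\tau=1}^t\mathbf{\Delta}_\tau^*.
\]
The base case $t=1$ is immediate: $\mathbf{W}_0 = \mathbf{W}$ gives $\mathbf{R}_1 = \mathbf{R}'_1$, so $\mathbf{\Delta}_1^* = \mathbf{\Delta}_{\text{total},1}^*$ by direct comparison.

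For the inductive step, we first use the direct-sum form \eqref{appx:eq:weight} to rewrite the sequential residual in terms of the original one:
\[
\mathbf{R}_t = \mathbf{V}_t - \mathbf{W}_{t-1}\mathbf{K}_t = \mathbf{R}'_t - \mathbf{S}_{t-1}\mathbf{K}_t.
\]
The step-$t$ update is defined by $\mathbf{\Delta}_t^*\mathbf{A}_t = \mathbf{R}_t\mathbf{K}_t^\top\mathbf{P}$. Adding $\mathbf{S}_{t-1}\mathbf{A}_t$ to both sides gives
\[
\mathbf{S}_t\mathbf{A}_t = \mathbf{S}_{t-1}\mathbf{A}_{t-1} + \mathbf{S}_{t-1}\mathbf{K}_t\mathbf{K}_t^\top\mathbf{P} + \mathbf{R}'_t\mathbf{K}_t^\top\mathbf{P} - \mathbf{S}_{t-1}\mathbf{K}_t\mathbf{K}_t^\top\mathbf{P}.
\]
The two cross terms $\pm\,\mathbf{S}_{t-1}\mathbf{K}_t\mathbf{K}_t^\top\mathbf{P}$ cancel. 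The induction hypothesis replaces $\mathbf{S}_{t-1}\mathbf{A}_{t-1}$ by $\sum_{\tau<t}\mathbf{R}'_\tau\mathbf{K}_\tau^\top\mathbf{P}$, which closes the induction. Right-multiplying by $\mathbf{A}_t^{-1}$ then yields $\mathbf{S}_t = \mathbf{\Delta}_{\text{total},t}^*$.

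The only delicate point is bookkeeping. The residual must be rewritten against the original weights $\mathbf{W}_0$ via the direct-sum form, and this is precisely what generates the cross term that cancels the increment $\mathbf{A}_t - \mathbf{A}_{t-1}$. Apart from that, the argument is mechanical. In particular, no property of $\mathbf{P}$ such as idempotence or $\mathbf{K}_0^\top\mathbf{P} = \mathbf{0}$ is used, so the same argument should carry over verbatim to Proposition~\ref{prop:general-ote-se}.
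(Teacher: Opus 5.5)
Your proof is correct and follows the paper's argument essentially step for step: induction on the normal equations, rewriting $\mathbf{R}_t$ against $\mathbf{W}_0$ via the direct-sum form so that the cross term $\mathbf{S}_{t-1}\mathbf{K}_t\mathbf{K}_t^\top\mathbf{P}$ absorbs the increment $\mathbf{A}_t-\mathbf{A}_{t-1}$, and then adding the inductive hypothesis. Your explicit assumption that each $\mathbf{A}_t$ is invertible is the same one the paper makes implicitly when it passes from the normal equation to the closed form.
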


\begin{proof}
We prove this statement by induction. First, recall that
\begin{equation}
\label{eq:total perturb}
\mathbf{\Delta}_{\text{total},t}^*
=
\sum_{\tau=1}^{t}
\mathbf{R}_\tau^{\prime} \mathbf{K}_\tau^\top \mathbf{P}
\left(
\sum_{\tau=1}^{t}
\mathbf{K}_\tau \mathbf{K}_\tau^\top \mathbf{P}
+ \mathbf{I}
\right)^{-1},
\end{equation}
where $\mathbf{R}_\tau^{\prime}=\mathbf{V}_\tau-\mathbf{W}_0\mathbf{K}_\tau$. It is easy to verify that the statement is true when $t=1$. Setting $t=1$ in Equation~\eqref{appx:eq:alphaedit-obj} yields
\begin{equation}
    \mathbf{\Delta}_{1}^* = 
\left(\mathbf{V}_1 -  \mathbf{W}_{0}\mathbf{K}_{1}\right) \mathbf{K}_1^\top\mathbf{P} \left(\cancelto{\mathbf{0}}{\mathbf{K}_{1:0}\mathbf{K}_{1:0}^\top\mathbf{P}} + \mathbf{K}_1\mathbf{K}_1^\top\mathbf{P} +\mathbf{I}\right)^{-1}.
\label{appx:eq:alphaedit-t1}
\end{equation}
Meanwhile, setting $t=1$ in Equation~\eqref{eq:total perturb} yields
\begin{equation}
    \mathbf{\Delta}_{\text{total},1}^* = 
\left(\mathbf{V}_1 -  \mathbf{W}_0\mathbf{K}_{1}\right) \mathbf{K}_1^\top\mathbf{P} \left(\mathbf{K}_1\mathbf{K}_1^\top\mathbf{P} +\mathbf{I}\right)^{-1}.
\label{appx:eq:alphaedit-prop-statement-t1}
\end{equation}
Comparing Equations~\eqref{appx:eq:alphaedit-t1} and \eqref{appx:eq:alphaedit-prop-statement-t1}, we have $\mathbf{\Delta}_{1}^*=\mathbf{\Delta}_{\text{total},1}^*$.

Next, we assume that the statement is true for $t = T-1$, and show that it also holds for $t=T$, for any $T>2$. Taking $t=T$ in Equation~\eqref{appx:eq:alphaedit-obj}, we have the following relation:
\begin{equation}
    \mathbf{\Delta}_{T}^* \left(\mathbf{K}_{1:T-1}\mathbf{K}_{1:T-1}^\top\mathbf{P} + \mathbf{K}_{T}\mathbf{K}_{T}^\top\mathbf{P} +\mathbf{I}\right) = 
\left(\mathbf{V}_{T} -  \mathbf{W}_{T-1}\mathbf{K}_{T}\right) \mathbf{K}_{T}^\top\mathbf{P}.
\label{appx:eq:alphaedit-tT}
\end{equation}
We then rewrite the left-hand side of Equation~\eqref{appx:eq:alphaedit-tT} using Equation~\eqref{appx:eq:c-t}:
\begin{equation*}
    \mathbf{\Delta}_{T}^*\left(\mathbf{K}_{1:T-1}\mathbf{K}_{1:T-1}^\top\mathbf{P} + \mathbf{K}_{T}\mathbf{K}_{T}^\top\mathbf{P} +\mathbf{I}\right) = \mathbf{\Delta}_{T}^* \left(\sum_{\tau=1}^{T} \bK_{\tau}\bK_{\tau}^\top\mathbf{P} + \mathbf{I}\right),
\end{equation*}
and rewrite the right-hand side of Equation~\eqref{appx:eq:alphaedit-tT} using Equation~\eqref{appx:eq:weight}:
\begin{align*}
    \left(\mathbf{V}_{T} -  \mathbf{W}_{T-1}\mathbf{K}_{T}\right) \mathbf{K}_{T}^\top\mathbf{P} &= \left(\mathbf{V}_T - \left(\mathbf{W}_0+\sum_{\tau=1}^{T-1}\mathbf{\Delta}_\tau^*\right)\mathbf{K}_T\right)\mathbf{K}_T^\top \mathbf{P} \notag\\
    &=\mathbf{R}^{\prime}_T\mathbf{K}_T^\top \mathbf{P}- \left(\sum_{\tau=1}^{T-1}\mathbf{\Delta}_\tau^*\right)\mathbf{K}_T\mathbf{K}_T^\top\mathbf{P}
\end{align*}
Thus, we arrive at
\begin{equation}
\label{eq:induced_T}
    \mathbf{\Delta}_{T}^* \left(\sum_{\tau=1}^{T} \bK_{\tau}\bK_{\tau}^\top\mathbf{P} + \mathbf{I}\right) = \mathbf{R}^{\prime}_T\mathbf{K}_T^\top \mathbf{P}- \left(\sum_{\tau=1}^{T-1}\mathbf{\Delta}_\tau^*\right)\mathbf{K}_T\mathbf{K}_T^\top\mathbf{P}.
\end{equation}
Further, by our induction assumption for $t=T-1$, we have the following equality: 
\begin{equation}
\begin{split}
\left(\sum_{\tau=1}^{T-1}\mathbf{\Delta}_\tau^*\right)\left(  \sum_{\tau=1}^{T-1}\mathbf{K}_\tau\mathbf{K}_\tau^\top \mathbf{P}+\mathbf{I}\right) =
\sum_{\tau=1}^{T-1}\mathbf{R}^{\prime}_\tau\mathbf{K}_\tau^\top\mathbf{P}
    \end{split}
\label{appx:eq:induction-assp-alphaedit-ote}
\end{equation}
Adding both sides of Equation~\eqref{eq:induced_T} and Equation~\eqref{appx:eq:induction-assp-alphaedit-ote} and rearranging the terms, we have
\begin{equation*}
\begin{split}
\left(\sum_{\tau=1}^{T}\mathbf{\Delta}_\tau^*\right)\left(  \sum_{\tau=1}^{T}\mathbf{K}_\tau\mathbf{K}_\tau^\top \mathbf{P}+\mathbf{I}\right) =
\sum_{\tau=1}^{T}\mathbf{R}^{\prime}_\tau\mathbf{K}_\tau^\top\mathbf{P},
    \end{split}
\end{equation*}
indicating that $\sum\nolimits_{\tau=1}^{T} \mathbf{\Delta}_\tau^*
=\mathbf{\Delta}_{\text{total},T}^*$.
This completes the proof.
\end{proof}

\subsection{Generalized Equivalence of One-Time and Sequential Editing}
\label{sec:appdx:proof-alphaedit-se-ote-equality-general}
By reintroducing the neglected term and adding an adjustable coefficient for $L^2$ regularization, we restore AlphaEdit's full optimization target as follows:
\[
\arg\min_{\bD_t} \| (\mathbf{W}_{t-1} + \bD_t \mathbf{P}) \mathbf{K}_t - \mathbf{V}_t \|_F^2 
+ \| \bD_t \mathbf{P}\mathbf{K}_{1:t-1} \|_F^2 +\|\bD_t\mathbf{P}\mathbf{K}_0\|^2_F
+ \lambda\| \bD_t \mathbf{P} \|_F^2,
\]
where the term $\|\bD_t\mathbf{P}\mathbf{K}_0\|^2_F$ was originally neglected due to null-space projection. This full optimization target results in the following updating rule for $\mathbf{\Delta}_t^*$:
\begin{equation*}
\begin{split}
\mathbf{\Delta}_{t}^* = 
\left(\mathbf{V}_t -  \mathbf{W}_{t-1}\mathbf{K}_{t}\right) \mathbf{K}_t^\top\mathbf{P} \left(\mathbf{K}_{1:t-1}\mathbf{K}_{1:t-1}^\top\mathbf{P} +\mathbf{K}_t\mathbf{K}_t^\top\mathbf{P} + \mathbf{K}_0\mathbf{K}_0^\top\mathbf{P} +\lambda\mathbf{I}\right)^{-1},
\end{split}
\end{equation*}
This leads to Proposition~\ref{prop:general-ote-se} restated as follows:

\begin{proposition*}[Generalized Equivalence of One-Time and Sequential Editing]
Consider a one-time editing objective that jointly incorporates all knowledge updates collected from edit steps $1$ through 
$t$. The resulting batch solution is given by
\begin{equation}
\label{eq:batch-general-appx}
\nonumber
\mathbf{\Delta}_{\text{total}, t}^*
=
\sum_{\tau=1}^{t}
\mathbf{R}_\tau^\prime \mathbf{K}_\tau^\top \mathbf{P}
\left(
\mathbf{K}_0 \mathbf{K}_0^\top \mathbf{P}
+
\sum_{\tau=1}^{t}
\mathbf{K}_\tau \mathbf{K}_\tau^\top \mathbf{P}
+
\lambda \mathbf{I}
\right)^{-1}.
\end{equation}

Let the sequential update at step $t$ be defined as
\begin{equation}
\label{eq:sequential-general-appx}
\nonumber
\mathbf{\Delta}_t^*
=
\mathbf{R}_t^\prime \mathbf{K}_t^\top \mathbf{P}
\left(
\mathbf{K}_0 \mathbf{K}_0^\top \mathbf{P}
+
\sum_{\tau=1}^{t}
\mathbf{K}_\tau \mathbf{K}_\tau^\top \mathbf{P}
+
\lambda \mathbf{I}
\right)^{-1}.
\end{equation}
Then, for any choice of $\mathbf{P}$ and $\lambda \ge 0$, the accumulated sequential updates satisfy
\[
\sum\nolimits_{\tau=1}^{t} \mathbf{\Delta}_\tau^*
=
\mathbf{\Delta}_{\text{total},t}^* .
\]
\end{proposition*}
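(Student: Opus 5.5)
The plan is to follow the induction used for Lemma~\ref{lem:ote-se-equivalence}. The only new ingredients are the extra term $\mathbf{K}_0\mathbf{K}_0^\top\mathbf{P}$ and the coefficient $\lambda$ in the system matrix. I will read the sequential update with the \emph{current} residual $\mathbf{R}_t=\mathbf{V}_t-\mathbf{W}_{t-1}\mathbf{K}_t$, as in the main-text statement, where $\mathbf{W}_{t-1}=\mathbf{W}_0+\sum_{\tau<t}\mathbf{\Delta}_\tau^*$. With $\mathbf{R}_t^\prime$ in place of $\mathbf{R}_t$, the accumulated updates would not telescope correctly, so I take the appendix version to be a notational slip.

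\textbf{Setup.} Define
\[
\mathbf{A}_t:=\mathbf{K}_0\mathbf{K}_0^\top\mathbf{P}+\sum_{\tau=1}^{t}\mathbf{K}_\tau\mathbf{K}_\tau^\top\mathbf{P}+\lambda\mathbf{I},
\qquad
\mathbf{S}_t:=\sum_{\tau=1}^{t}\mathbf{\Delta}_\tau^*,
\qquad
\mathbf{B}_t:=\sum_{\tau=1}^{t}\mathbf{R}_\tau^\prime\mathbf{K}_\tau^\top\mathbf{P}.
\]
I assume, as the statement implicitly does, that every $\mathbf{A}_t$ is invertible. The claim $\mathbf{S}_t=\mathbf{\Delta}_{\text{total},t}^*$ is then equivalent to the normal equation $\mathbf{S}_t\mathbf{A}_t=\mathbf{B}_t$. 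The one structural fact needed is the recursion $\mathbf{A}_t=\mathbf{A}_{t-1}+\mathbf{K}_t\mathbf{K}_t^\top\mathbf{P}$, which is the analogue of Equation~\eqref{appx:eq:c-t}. It holds for any $\mathbf{P}$ and any $\lambda$, which is why no null-space property is used anywhere.

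\textbf{Induction.} For the base case $t=0$, both sides of $\mathbf{S}_0\mathbf{A}_0=\mathbf{B}_0$ are zero. One can equally check $t=1$ directly: $\mathbf{W}_0$ gives $\mathbf{R}_1=\mathbf{R}_1^\prime$, so $\mathbf{\Delta}_1^*=\mathbf{\Delta}_{\text{total},1}^*$.

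For the inductive step, multiply the definition of $\mathbf{\Delta}_t^*$ on the right by $\mathbf{A}_t$ to get $\mathbf{\Delta}_t^*\mathbf{A}_t=\mathbf{R}_t\mathbf{K}_t^\top\mathbf{P}$. Expanding the current residual as
\[
\mathbf{R}_t=\mathbf{R}_t^\prime-\mathbf{S}_{t-1}\mathbf{K}_t
\]
turns this into
\[
\mathbf{\Delta}_t^*\mathbf{A}_t=\mathbf{R}_t^\prime\mathbf{K}_t^\top\mathbf{P}-\mathbf{S}_{t-1}\mathbf{K}_t\mathbf{K}_t^\top\mathbf{P}.
\]
The induction hypothesis $\mathbf{S}_{t-1}\mathbf{A}_{t-1}=\mathbf{B}_{t-1}$, combined with the recursion for $\mathbf{A}_t$, gives
\[
\mathbf{S}_{t-1}\mathbf{A}_t=\mathbf{B}_{t-1}+\mathbf{S}_{t-1}\mathbf{K}_t\mathbf{K}_t^\top\mathbf{P}.
\]
Adding the two identities, the cross terms $\pm\mathbf{S}_{t-1}\mathbf{K}_t\mathbf{K}_t^\top\mathbf{P}$ cancel, leaving $\mathbf{S}_t\mathbf{A}_t=\mathbf{B}_t$. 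Right-multiplying by $\mathbf{A}_t^{-1}$ completes the step.

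\textbf{Main obstacle.} The algebra is routine; the delicate points are bookkeeping. First, one must make sure the $\mathbf{K}_0$ block enters only the system matrix and never the right-hand side. On the batch side this holds because $\mathbf{V}_0=\mathbf{W}_0\mathbf{K}_0$ kills the preserved-knowledge residual. On the sequential side it holds by the definition of the update. Second, invertibility of $\mathbf{A}_t$ is a genuine assumption: when $\lambda=0$ and $\mathbf{P}$ is a proper projector, $\mathbf{A}_t$ can be singular. I would state the result under this assumption, or with pseudoinverses for the minimum-norm solution. Finally, I would note that setting $\mathbf{P}=\mathbf{I}$ and $\lambda=0$ recovers sequential MEMIT.
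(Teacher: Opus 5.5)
Your proof is correct and is essentially the paper's argument. The paper just says to rerun the induction of Lemma~\ref{lem:ote-se-equivalence}, observing that no null-space property is used; you carry that out explicitly via the normal-equation form $\mathbf{S}_t\mathbf{A}_t=\mathbf{B}_t$ and the recursion $\mathbf{A}_t=\mathbf{A}_{t-1}+\mathbf{K}_t\mathbf{K}_t^\top\mathbf{P}$.

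Your reading of $\mathbf{R}_t^\prime$ as a slip for the current residual $\mathbf{R}_t$ is right. It matches the main-text statement and the update rule derived just before the appendix restatement, and with $\mathbf{R}_t^\prime$ the identity fails in general already at $t=2$. Your invertibility caveat is also well taken, and can be sharpened: for $\lambda=0$ and a proper projector $\mathbf{P}$, $\mathbf{A}_t$ is always singular, not just possibly, since $\ker\mathbf{P}\subseteq\ker\mathbf{A}_t$.
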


\begin{proof}
    Using induction and following the proof of Lemma~\ref{lem:ote-se-equivalence}, we note that the argument does not rely on any specific property of the null-space projection. Therefore, this equivalence holds generally for any choice of $\mathbf{P}$.
\end{proof}
Since $\mathbf{\Delta}_{\text{total},t}^*$ solves a valid OTE target for any choice of $\mathbf{P}$, and in particular when $\mathbf{P}=\mathbf{I}$, we obtain the sequential MEMIT algorithm. This leads to the following remark:
\begin{remark}
\label{rm:seq-memit-update-rule}
Sequential MEMIT admits the following update rule:
\[
\mathbf{\Delta}_t^*
=
\mathbf{R}_t \mathbf{K}_t^\top
\left(
\mathbf{K}_0 \mathbf{K}_0^\top
+
\sum_{\tau=1}^{t}
\mathbf{K}_\tau \mathbf{K}_\tau^\top
\right)^{-1}.
\]
\end{remark}

\subsection{Constructed Sequential Updates as OLS
Solutions}
\label{app:ols_solution}

\begin{proposition*}[Constructed Sequential Updates as OLS Solutions]
   Consider a sequential editing task with steps $t = 1, 2, \dots, T$.  
Let the one-time editing (OTE) objective at step $t$ be
\[
\bD_{\mathrm{total},t}^* = \arg\min_{\bD} \; \mathcal{L}_t(\bD) + \mathcal{R}(\bD),
\]
where $\mathcal{R}$ is a closed convex regularization function.

Define the sequential update at step $t$ via the difference of consecutive OTE solutions:
\[
\tilde{\bD}_t := \bD_{\mathrm{total},t}^* - \bD_{\mathrm{total},t-1}^*, 
\ \  \text{with } \bD_{\mathrm{total},0}^* := \mathbf{0}.
\]
Assume that the loss $\mathcal{L}_t(\cdot)$ satisfies the following \textbf{shifted quadratic representability} condition: for any $t$ and $\bD$,
\[
\mathcal{L}_t(\bD_{\mathrm{total},t-1}^* + \bD) 
= \ell_t(\bD) + \inner{\nabla \mathcal{L}_t(\bD_{\mathrm{total},t-1}^*)}{\bD} + c_t,
\]
where:
\begin{itemize}
    \item $\ell_t(\bD)$ is a convex quadratic function of $\bD$,
    \item $c_t \in \R$ is a constant independent of $\bD$,
    \item $\nabla \mathcal{L}_t$ denotes the Fr\'echet derivative of $\mathcal{L}_t$.
\end{itemize}
Then, the sequential update $\tilde{\bD}_t$ is the unique solution to the following \textbf{sequential optimization problem}:
\[
\bD_t^* = \arg\min\limits_{\bD} \; \ell_t(\bD) + \inner{\nabla \mathcal{L}_t(\bD_{\mathrm{total},t-1}^*)}{\bD} + \mathcal{R}(\bD_{\mathrm{total},t-1}^* + \bD).
\]
\end{proposition*}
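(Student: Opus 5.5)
The plan is a change of variables followed by a transfer of optimality, with uniqueness as the delicate point. Write $\bD = \bD_{\mathrm{total},t-1}^* + \bD'$ and define the sequential objective
\[
J_t(\bD') := \ell_t(\bD') + \inner{\nabla \mathcal{L}_t(\bD_{\mathrm{total},t-1}^*)}{\bD'} + \mathcal{R}(\bD_{\mathrm{total},t-1}^* + \bD').
\]
By the shifted quadratic representability assumption,
\[
\mathcal{L}_t(\bD_{\mathrm{total},t-1}^* + \bD') + \mathcal{R}(\bD_{\mathrm{total},t-1}^* + \bD') = J_t(\bD') + c_t
\]
holds for every $\bD'$. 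The OTE objective at step $t$, composed with the translation $\bD' \mapsto \bD_{\mathrm{total},t-1}^* + \bD'$, therefore differs from $J_t$ only by the constant $c_t$.

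Translation is a bijection of the parameter space, and adding a constant does not change minimizers. So $\bD'$ minimizes $J_t$ if and only if $\bD_{\mathrm{total},t-1}^* + \bD'$ minimizes $\mathcal{L}_t + \mathcal{R}$. Taking $\bD' = \tilde{\bD}_t = \bD_{\mathrm{total},t}^* - \bD_{\mathrm{total},t-1}^*$ shows that $\tilde{\bD}_t$ attains the minimum of $J_t$. The base case $t=1$ is the same argument with $\bD_{\mathrm{total},0}^* = \mathbf{0}$. No induction is needed, because each step only uses the definition of $\tilde{\bD}_t$ and the representability identity at that $t$.

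Convexity of $J_t$ is immediate: $\ell_t$ is a convex quadratic, the linear term is convex, and $\mathcal{R}$ composed with a translation remains closed and convex.

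The main obstacle is uniqueness. The argument above gives uniqueness of the minimizer of $J_t$ exactly when the minimizer of the OTE problem is unique. That is implicit in the notation $\bD_{\mathrm{total},t}^* = \arg\min$, and I would state it explicitly as an assumption. If it is not assumed, I would instead argue strict convexity. In the least-squares case of interest, $\ell_t(\bD) = \norm{\bD \mathbf{A}_t}_F^2$ with $\mathbf{A}_t\mathbf{A}_t^\top = \mathbf{K}_0\mathbf{K}_0^\top + \sum_{\tau\le t}\mathbf{K}_\tau\mathbf{K}_\tau^\top$ (plus $\lambda\mathbf{I}$). This is positive definite under the invertibility already assumed in the closed forms, so $\ell_t$ is strictly convex, and so is $J_t$, since adding convex terms preserves strict convexity.

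For a self-contained existence check, I would also note that the first-order condition $0 \in \nabla\ell_t(\tilde{\bD}_t) + \nabla\mathcal{L}_t(\bD_{\mathrm{total},t-1}^*) + \partial\mathcal{R}(\bD_{\mathrm{total},t}^*)$ is exactly the OTE optimality condition $0 \in \nabla\mathcal{L}_t(\bD_{\mathrm{total},t}^*) + \partial\mathcal{R}(\bD_{\mathrm{total},t}^*)$. This follows from differentiating the representability identity, which gives $\nabla\mathcal{L}_t(\bD_{\mathrm{total},t-1}^*+\bD) = \nabla\ell_t(\bD) + \nabla\mathcal{L}_t(\bD_{\mathrm{total},t-1}^*)$.
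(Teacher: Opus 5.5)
Your proof is correct, but it takes a genuinely different route from the paper's: you argue at the level of objective values, while the paper argues through optimality conditions.

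The paper writes $\bD_{\mathrm{total},t}^* = \bD_{\mathrm{total},t-1}^* + \bD_t$ and differentiates the representability identity to get $\nabla\mathcal{L}_t(\bD_{\mathrm{total},t}^*) = \nabla\ell_t(\bD_t) + \nabla\mathcal{L}_t(\bD_{\mathrm{total},t-1}^*)$. It substitutes this into the OTE condition $0 \in \nabla\mathcal{L}_t + \partial\mathcal{R}$ and recognizes the result as the first-order condition of the sequential problem. It then gets uniqueness by asserting that $\ell_t$ is strictly convex ``as a positive definite quadratic form.'' This is essentially the check you relegate to your final paragraph.

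Your observation that the translated OTE objective equals $J_t + c_t$ identically is more elementary. It needs no subdifferential sum rule and no appeal to necessity or sufficiency of first-order conditions. It also shows that the minimizer set of $J_t$ is exactly the translate of the OTE minimizer set. Existence therefore transfers automatically, and uniqueness of $\tilde{\bD}_t$ becomes \emph{equivalent} to uniqueness of the OTE solution.

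This exposes something the paper's proof glosses over. The statement only assumes $\ell_t$ is a convex quadratic, and the paper silently upgrades this to positive definiteness. You correctly note that an extra hypothesis is needed: either a unique OTE minimizer, or strict convexity, as in the least-squares case with $\mathbf{K}_{0:t}\mathbf{K}_{0:t}^\top$ invertible or $\lambda>0$.

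In exchange, the paper's route ties the result directly to the normal equations. That is the form in which the closed-form sequential updates and the error-correction term are derived elsewhere in the paper.
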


\begin{proof}
Let $\mathbf{\Delta}_{\mathrm{total},t}^*$ be the solution of the OTE problem:
\begin{equation}
\mathbf{\Delta}_{\mathrm{total},t}^* = \arg\min_{\mathbf{\Delta}} \; \mathcal{L}_t(\mathbf{\Delta}) + \mathcal{R}(\mathbf{\Delta}).
\end{equation}
By the first-order optimality condition for convex composite functions, we have:
\begin{equation}
\grad \mathcal{L}_t(\mathbf{\Delta}_{\mathrm{total},t}^*) + \partial \mathcal{R}(\mathbf{\Delta}_{\mathrm{total},t}^*) \ni \mathbf{0},
\label{eq:optimality_original}
\end{equation}
where $\partial \mathcal{R}$ denotes the subdifferential of $\mathcal{R}$.

Now, write $\mathbf{\Delta}_{\mathrm{total},t}^* = \mathbf{\Delta}_{\mathrm{total},t-1}^* + \mathbf{\Delta}_t$.  
Using the shifted quadratic representability assumption, we expand $\mathcal{L}_t$:
\begin{equation}
\mathcal{L}_t(\mathbf{\Delta}_{\mathrm{total},t}^*) 
= \ell_t(\mathbf{\Delta}_t) + \inner{\grad \mathcal{L}_t(\mathbf{\Delta}_{\mathrm{total},t-1}^*)}{\mathbf{\Delta}_t} + c_t.
\label{eq:expansion}
\end{equation}

The gradient of $\mathcal{L}_t$ at $\mathbf{\Delta}_{\mathrm{total},t}^*$ can be computed from \eqref{eq:expansion}:
\begin{equation}
\grad \mathcal{L}_t(\mathbf{\Delta}_{\mathrm{total},t}^*) 
= \grad \ell_t(\mathbf{\Delta}_t) + \grad \mathcal{L}_t(\mathbf{\Delta}_{\mathrm{total},t-1}^*).
\label{eq:gradient_sum}
\end{equation}

Substituting \eqref{eq:gradient_sum} into the optimality condition \eqref{eq:optimality_original} yields:
\begin{equation}
\grad \ell_t(\mathbf{\Delta}_t) + \grad \mathcal{L}_t(\mathbf{\Delta}_{\mathrm{total},t-1}^*) 
+ \partial \mathcal{R}(\mathbf{\Delta}_{\mathrm{total},t-1}^* + \mathbf{\Delta}_t) \ni \mathbf{0}.
\label{eq:optimality_combined}
\end{equation}

Observe that \eqref{eq:optimality_combined} is exactly the first-order optimality condition for the sequential problem:
\begin{equation}
\min_{\mathbf{\Delta}} \; 
\ell_t(\mathbf{\Delta}) + \inner{\grad \mathcal{L}_t(\mathbf{\Delta}_{\mathrm{total},t-1}^*)}{\mathbf{\Delta}} 
+ \mathcal{R}(\mathbf{\Delta}_{\mathrm{total},t-1}^* + \mathbf{\Delta}).
\label{eq:sequential_problem}
\end{equation}

Since $\ell_t$ is strictly convex (as a positive definite quadratic form) and $\mathcal{R}$ is convex, the solution to \eqref{eq:sequential_problem} is unique.  
Thus, $\mathbf{\Delta}_t$ satisfies \eqref{eq:optimality_combined} and therefore is the unique minimizer of \eqref{eq:sequential_problem}.

This completes the proof.
\end{proof}

\begin{corollary}[Ordinary Least Squares Case]
\label{coro:ols_sequential}
If $\mathcal{L}_t(\mathbf{\Delta}) = \norm{(\mathbf{W} + \mathbf{\Delta})\mathbf{K}_{0:t} - \mathbf{V}_{0:t}}_F^2$ and 
$\mathcal{R}(\mathbf{\Delta}) = \lambda \norm{\mathbf{\Delta}}_F^2$, then the shifted quadratic representability condition holds with
\begin{equation}
\ell_t(\mathbf{\Delta}) = \norm{\mathbf{\Delta} \mathbf{K}_{0:t}}_F^2,
\quad \text{and} \quad
\grad \mathcal{L}_t(\mathbf{\Delta}_{\mathrm{total},t-1}^*) 
= 2\bigl((\mathbf{W} + \mathbf{\Delta}_{\mathrm{total},t-1}^*)\mathbf{K}_{0:t} - \mathbf{V}_{0:t}\bigr)\mathbf{K}_{0:t}^\top.
\label{eq:ols_components}
\end{equation}
Consequently, the sequential update $\mathbf{\Delta}_t$ solves the problem:
\begin{equation}
\min_{\mathbf{\Delta}} \;
\norm{(\mathbf{W}_{t-1} + \mathbf{\Delta})\mathbf{K}_{0:t} - \mathbf{V}_{0:t}}_F^2 
+ \lambda \norm{\mathbf{\Delta}_{\mathrm{total},t-1}^* + \mathbf{\Delta}}_F^2,
\label{eq:ols_sequential}
\end{equation}
where $\mathbf{W}_{t-1} = \mathbf{W} + \mathbf{\Delta}_{\mathrm{total},t-1}^*$.
\end{corollary}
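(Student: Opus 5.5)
The corollary is a direct instantiation of the proposition on constructed sequential updates as OLS solutions. The work splits into two checks. First, the least-squares loss $\mathcal{L}_t$ satisfies the shifted quadratic representability condition with the stated $\ell_t$ and gradient. Second, the resulting sequential problem can be rewritten in the form \eqref{eq:ols_sequential}.

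For the first check, abbreviate $\mathbf{A} := (\mathbf{W}+\mathbf{\Delta}_{\mathrm{total},t-1}^*)\mathbf{K}_{0:t} - \mathbf{V}_{0:t}$. Expanding the Frobenius norm gives
\[
\mathcal{L}_t(\mathbf{\Delta}_{\mathrm{total},t-1}^*+\mathbf{\Delta}) = \norm{\mathbf{A}+\mathbf{\Delta}\mathbf{K}_{0:t}}_F^2 = \norm{\mathbf{\Delta}\mathbf{K}_{0:t}}_F^2 + 2\,\mathrm{tr}\bigl(\mathbf{A}^\top \mathbf{\Delta}\mathbf{K}_{0:t}\bigr) + \norm{\mathbf{A}}_F^2 .
\]
I will rewrite the cross term by cyclicity of the trace as $\inner{2\mathbf{A}\mathbf{K}_{0:t}^\top}{\mathbf{\Delta}}$. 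Differentiating $\mathcal{L}_t$ directly at $\mathbf{\Delta}_{\mathrm{total},t-1}^*$ shows that $2\mathbf{A}\mathbf{K}_{0:t}^\top$ is exactly $\grad\mathcal{L}_t(\mathbf{\Delta}_{\mathrm{total},t-1}^*)$. This gives the representation with $\ell_t(\mathbf{\Delta}) = \norm{\mathbf{\Delta}\mathbf{K}_{0:t}}_F^2$ and $c_t = \norm{\mathbf{A}}_F^2$. Here $\ell_t$ is a convex quadratic, and the regularizer $\mathcal{R} = \lambda\norm{\cdot}_F^2$ with $\lambda\ge 0$ is closed and convex, so all hypotheses of the proposition hold.

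For the second check, I will substitute these components into the proposition's sequential objective. The terms $\ell_t(\mathbf{\Delta}) + \inner{\grad\mathcal{L}_t(\mathbf{\Delta}_{\mathrm{total},t-1}^*)}{\mathbf{\Delta}}$ differ from $\norm{(\mathbf{W}_{t-1}+\mathbf{\Delta})\mathbf{K}_{0:t}-\mathbf{V}_{0:t}}_F^2$ only by the constant $c_t$. This uses the same expansion read backwards, together with $\mathbf{W}_{t-1} = \mathbf{W}+\mathbf{\Delta}_{\mathrm{total},t-1}^*$. Dropping the constant leaves the minimizer unchanged, so the objective becomes \eqref{eq:ols_sequential}.

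The main obstacle is the uniqueness claim in the proposition. Its proof assumes $\ell_t$ is strictly convex, but $\norm{\mathbf{\Delta}\mathbf{K}_{0:t}}_F^2$ is strictly convex only when $\mathbf{K}_{0:t}\mathbf{K}_{0:t}^\top$ is nonsingular. I will handle this in two cases. If $\lambda>0$, the regularizer is itself strictly convex, so the objective is strictly convex regardless of $\mathbf{K}_{0:t}$. If $\lambda=0$, I will assume $\mathbf{K}_{0:t}\mathbf{K}_{0:t}^\top \succ 0$; this is already implicit in the closed-form MEMIT update, which inverts $\mathbf{K}_0\mathbf{K}_0^\top + \sum_\tau \mathbf{K}_\tau\mathbf{K}_\tau^\top$. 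As a sanity check, I will also write out the normal equation
\[
\mathbf{\Delta}\bigl(\mathbf{K}_{0:t}\mathbf{K}_{0:t}^\top + \lambda\mathbf{I}\bigr) = -\mathbf{A}\mathbf{K}_{0:t}^\top - \lambda\mathbf{\Delta}_{\mathrm{total},t-1}^*
\]
and confirm that it is solved by $\mathbf{\Delta}_{\mathrm{total},t}^* - \mathbf{\Delta}_{\mathrm{total},t-1}^*$.
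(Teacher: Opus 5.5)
Your proposal is correct and follows the same route as the paper. You expand $\norm{(\mathbf{W}_{t-1}+\mathbf{\Delta})\mathbf{K}_{0:t}-\mathbf{V}_{0:t}}_F^2$ into $\norm{\mathbf{\Delta}\mathbf{K}_{0:t}}_F^2$, the linear term $2\inner{(\mathbf{W}_{t-1}\mathbf{K}_{0:t}-\mathbf{V}_{0:t})\mathbf{K}_{0:t}^\top}{\mathbf{\Delta}}$ and a constant, read this as the shifted quadratic representability condition, and invoke the proposition. Your uniqueness argument (strict convexity from $\lambda>0$, or assuming $\mathbf{K}_{0:t}\mathbf{K}_{0:t}^\top\succ 0$ when $\lambda=0$) and your direct normal-equation check supply details that the paper leaves implicit, both in its one-line proof and in the proposition's unqualified claim that $\ell_t$ is positive definite.
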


\begin{proof}
Direct computation yields:
\begin{align}
\mathcal{L}_t(\mathbf{\Delta}_{\mathrm{total},t-1}^* + \mathbf{\Delta}) 
&= \norm{(\mathbf{W}_{t-1} + \mathbf{\Delta})\mathbf{K}_{0:t} - \mathbf{V}_{0:t}}_F^2 \nonumber \\
&= \norm{\mathbf{\Delta} \mathbf{K}_{0:t}}_F^2 
+ 2\inner{(\mathbf{W}_{t-1}\mathbf{K}_{0:t} - \mathbf{V}_{0:t})\mathbf{K}_{0:t}^\top}{\mathbf{\Delta}} 
+ \text{const}.
\label{eq:ols_expansion}
\end{align}
The expansion in \eqref{eq:ols_expansion} matches the shifted quadratic representability condition, hence the sequential update $\mathbf{\Delta}_t$ indeed solves \eqref{eq:ols_sequential}.
\end{proof}

\subsection{Error Correction for Post-processing Regularization}
\label{appx:sec:unreg-se-example-derive}
In this section, we derive the error-correction term for post-processing regularization. Formally, we have the following proposition:
\begin{proposition}
    At any step $t\ge1$, iteratively applying Algorithm~\ref{alg:seq-edit-rp} to obtain $\mathbf{\Delta}_t$ and $\mathbf{W}^{\mathcal{R}}_{t-1}$, we have $\mathbf{W}^{\mathcal{R}}_{t-1}+\mathbf{\Delta}_t=\mathbf{W}_0+\mathbf{\Delta}_{\text{total},t}$.
\end{proposition}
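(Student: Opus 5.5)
We argue by induction on $t$ and phrase everything through normal equations, since $\mathbf{C}_t$ is exactly the Gram matrix of the OTE objective. Take the unregularized MEMIT OTE objective of Remark~\ref{rm:seq-memit-update-rule}, with $\mathbf{P}=\mathbf{I}$ and $\lambda=0$. Its cumulative solution satisfies
\[
\mathbf{\Delta}_{\text{total},t}\,\mathbf{C}_t=\sum_{\tau=1}^{t}\mathbf{R}'_\tau\mathbf{K}_\tau^\top,\qquad \mathbf{R}'_\tau=\mathbf{V}_\tau-\mathbf{W}_0\mathbf{K}_\tau .
\]
Since $\mathbf{C}_t$ is invertible, it suffices to prove
\[
(\mathbf{W}^{\mathcal{R}}_{t-1}-\mathbf{W}_0+\mathbf{\Delta}_t)\,\mathbf{C}_t=\sum_{\tau\le t}\mathbf{R}'_\tau\mathbf{K}_\tau^\top .
\]

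\textbf{Base case.} For $t=1$ we have $\mathbf{E}_0=\mathbf{0}$ and $\mathbf{W}^{\mathcal{R}}_0=\mathbf{W}_0$. Hence $\mathbf{\Delta}_1=\mathbf{R}'_1\mathbf{K}_1^\top\mathbf{C}_1^{-1}=\mathbf{\Delta}_{\text{total},1}$, and the claim holds.

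\textbf{Inductive step.} Assume $\mathbf{W}^{\mathcal{R}}_{t-2}+\mathbf{\Delta}_{t-1}=\mathbf{W}_0+\mathbf{\Delta}_{\text{total},t-1}$. The algorithm gives $\mathbf{W}^{\mathcal{R}}_{t-1}=\mathbf{W}^{\mathcal{R}}_{t-2}+\mathcal{R}_p(\mathbf{\Delta}_{t-1})$, so
\[
\mathbf{W}^{\mathcal{R}}_{t-1}=\mathbf{W}_0+\mathbf{\Delta}_{\text{total},t-1}+\mathbf{D}_{t-1},\qquad \mathbf{D}_{t-1}:=\mathcal{R}_p(\mathbf{\Delta}_{t-1})-\mathbf{\Delta}_{t-1}.
\]
The key observation is that $\mathbf{E}_{t-1}=\mathbf{D}_{t-1}\mathbf{C}_{t-1}$ by line 7 of the algorithm.

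Next, expand the right-hand side of line 5:
\[
\mathbf{R}_t\mathbf{K}_t^\top-\mathbf{E}_{t-1}=\mathbf{R}'_t\mathbf{K}_t^\top-(\mathbf{\Delta}_{\text{total},t-1}+\mathbf{D}_{t-1})\mathbf{K}_t\mathbf{K}_t^\top-\mathbf{D}_{t-1}\mathbf{C}_{t-1}.
\]
Using $\mathbf{C}_t=\mathbf{C}_{t-1}+\mathbf{K}_t\mathbf{K}_t^\top$, the two $\mathbf{D}_{t-1}$ terms combine into $-\mathbf{D}_{t-1}\mathbf{C}_t$. So line 5 reads
\[
\mathbf{\Delta}_t\mathbf{C}_t=\mathbf{R}'_t\mathbf{K}_t^\top-\mathbf{\Delta}_{\text{total},t-1}\mathbf{K}_t\mathbf{K}_t^\top-\mathbf{D}_{t-1}\mathbf{C}_t .
\]

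\textbf{Closing the induction.} Add $(\mathbf{\Delta}_{\text{total},t-1}+\mathbf{D}_{t-1})\mathbf{C}_t=(\mathbf{W}^{\mathcal{R}}_{t-1}-\mathbf{W}_0)\mathbf{C}_t$ to both sides. The $\mathbf{D}_{t-1}$ terms cancel, leaving
\[
(\mathbf{W}^{\mathcal{R}}_{t-1}-\mathbf{W}_0+\mathbf{\Delta}_t)\mathbf{C}_t=\mathbf{R}'_t\mathbf{K}_t^\top+\mathbf{\Delta}_{\text{total},t-1}\mathbf{C}_{t-1}.
\]
The normal equation at step $t-1$ gives $\mathbf{\Delta}_{\text{total},t-1}\mathbf{C}_{t-1}=\sum_{\tau<t}\mathbf{R}'_\tau\mathbf{K}_\tau^\top$. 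The right-hand side therefore equals $\sum_{\tau\le t}\mathbf{R}'_\tau\mathbf{K}_\tau^\top=\mathbf{\Delta}_{\text{total},t}\mathbf{C}_t$. Invertibility of $\mathbf{C}_t$ then completes the induction.

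This mirrors the proof of Lemma~\ref{lem:ote-se-equivalence}, but it carries the post-processing discrepancy $\mathbf{D}_{t-1}$ along. The main obstacle is the bookkeeping in the inductive step. The residual $\mathbf{R}_t$ is computed from the regularized weights $\mathbf{W}^{\mathcal{R}}_{t-1}$, not from the ideal weights $\mathbf{W}_0+\mathbf{\Delta}_{\text{total},t-1}$, so it introduces an extra $-\mathbf{D}_{t-1}\mathbf{K}_t\mathbf{K}_t^\top$ term. The error-correction term $\mathbf{E}_{t-1}$ must supply exactly the complementary $-\mathbf{D}_{t-1}\mathbf{C}_{t-1}$ piece, so that the two together form $-\mathbf{D}_{t-1}\mathbf{C}_t$. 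It also matters that $\mathbf{E}_{t-1}$ uses $\mathbf{C}_{t-1}$ rather than $\mathbf{C}_t$. I would check this identity first and carefully, since it is where the design of line 7 is justified.
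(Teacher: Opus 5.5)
Your proof is correct and is essentially the paper's argument. Both are inductions on normal equations with Gram matrix $\mathbf{C}_t$, and the crux in both is the identity $\mathbf{E}_{t-1}=(\mathbf{W}^{\mathcal{R}}_{t-1}-\mathbf{W}_0-\mathbf{\Delta}_{\text{total},t-1})\mathbf{C}_{t-1}=-\sum_{\tau=0}^{t-1}(\mathbf{V}_\tau-\mathbf{W}^{\mathcal{R}}_{t-1}\mathbf{K}_\tau)\mathbf{K}_\tau^\top$. The only difference is packaging: the paper's invariant is that $\mathbf{\Delta}_t$ solves the full OLS objective re-centered at $\mathbf{W}^{\mathcal{R}}_{t-1}$, finished by uniqueness of the OLS optimum, whereas you induct directly on the stated identity and carry the discrepancy $\mathbf{D}_{t-1}$ explicitly.
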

\begin{proof}
Consider the following alternative optimization target. We first show that Algorithm~\ref{alg:seq-edit-rp} effectively solves the following target by induction:
\begin{equation}
    \arg \min_{\mathbf{\Delta}_{t}}\sum_{\tau=0}^{t}\left\|\mathbf{V}_\tau - \left(\mathbf{W}^{\mathcal{R}}_{t-1}+\mathbf{\Delta}_{t}\right)\mathbf{K}_\tau\right\|^2_F,
    \label{eq:alg-full-obj-timet}
\end{equation}
where $\mathbf{W}^{\mathcal{R}}_{t}=\mathbf{W}_0 + \sum_{\tau=1}^{t}\mathcal{R}_p(\mathbf{\Delta}_\tau)$. The base step is trivial to show: we just substitute $t=1$ and we can verify that $\mathbf{\Delta}_1$ aligns with what we obtained from Algorithm~\ref{alg:seq-edit-rp} at $t=1$.
The inductive assumption at $t=T-1$ then gives us the following, which is the normal equation of the target stated in Equation~\eqref{eq:alg-full-obj-timet}:
\begin{equation}
    \mathbf{\Delta}_{T-1}^*\sum_{\tau=0}^{T-1}\mathbf{K}_\tau\mathbf{K}_\tau^\top = \sum_{\tau=0}^{T-1}\left(\mathbf{V}_\tau-\mathbf{W}_{T-2}^{\mathcal{R}}\mathbf{K}_\tau\right)\mathbf{K}_\tau^\top
    \label{eq:norm-equ-timet-1}
\end{equation}
We then show that Algorithm~\ref{alg:seq-edit-rp} solves the following target at $t=T$:
\begin{equation}
    \arg \min_{\mathbf{\Delta}_{T}}\sum_{\tau=0}^{T}\left\|\mathbf{V}_\tau - \left(\mathbf{W}^{\mathcal{R}}_{T-1}+\mathbf{\Delta}_{T}\right)\mathbf{K}_\tau\right\|^2_F.
    \label{eq:full-obj-timet}
\end{equation}
This requires $\mathbf{\Delta}_{T}^*$ obtained from Algorithm~\ref{alg:seq-edit-rp} to satisfy the following normal equation:
\[
\mathbf{\Delta}_{T}^*\sum_{\tau=0}^{T}\mathbf{K}_\tau\mathbf{K}_\tau^\top = \sum_{\tau=0}^{T}\left(\mathbf{V}_\tau-\mathbf{W}_{T-1}^{\mathcal{R}}\mathbf{K}_\tau\right)\mathbf{K}_\tau^\top.
\]
Comparing to the iterative solution in Algorithm~\ref{alg:seq-edit-rp}, which can be equivalently expressed as follows:
\[
\mathbf{\Delta}_{T}^*\sum_{\tau=0}^{T}\mathbf{K}_\tau\mathbf{K}_\tau^\top = \left(\mathbf{V}_{T}-\mathbf{W}_{T-1}^{\mathcal{R}}\mathbf{K}_{T}\right)\mathbf{K}_{T}^\top-\mathbf{E}_{T-1},
\]
we can see that it suffices to show that $\mathbf{E}_{T-1}=-\sum_{\tau=0}^{T-1}\left(\mathbf{V}_\tau-\mathbf{W}_{T-1}^{\mathcal{R}}\mathbf{K}_\tau\right)\mathbf{K}_\tau^\top$. We approach this by expanding the right-hand side as follows:
\begin{equation}
    \sum_{\tau=0}^{T-1}\left(\mathbf{V}_\tau-\mathbf{W}_{T-1}^{\mathcal{R}}\mathbf{K}_\tau\right)\mathbf{K}_\tau^\top=\sum_{\tau=0}^{T-1}\left(\mathbf{V}_\tau-\mathbf{W}_{T-2}^{\mathcal{R}}\mathbf{K}_\tau\right)\mathbf{K}_\tau^\top - \mathcal{R}_p(\mathbf{\Delta}_{T-1}^*)\sum_{\tau=0}^{T-1}\mathbf{K}_\tau\mathbf{K}_\tau^\top
    \label{eq:E-t-1-equ}
\end{equation}

Substituting Equation~\eqref{eq:norm-equ-timet-1} into Equation~\eqref{eq:E-t-1-equ} allows us to complete the induction.

Additionally, at step $t$, the optimization target of a canonical OTE problem admits the following form:
\begin{equation*}
    \arg \min_{\mathbf{\Delta}_{\text{total},t}}\sum_{\tau=0}^{t}\left\|\mathbf{V}_\tau - \left(\mathbf{W}_0+\mathbf{\Delta}_{\text{total},t}\right)\mathbf{K}_\tau\right\|^2_F.
\end{equation*}
Due to the uniqueness of the optimum of OLS, we complete the proof.
\end{proof}

\subsection{Conflict Resolving and Selective Forgetting}
\label{sec:appx:conflit-resolv}
In this section, we provide a more detailed discussion of conflict resolving in SE, adding more information to Section~\ref{sec:method:se-forgetting}.

At editing step $t$, denote the new knowledge set as $\mathcal{A}_t=\{(\mathbf{k},\mathbf{v})|\mathbf{k}\in \mathcal{K}_t, \mathbf{v}\in\mathcal{V}_t\}$. Denote all the previously edited and preserved knowledge set as $\mathcal{P}_{t-1}$. Then, some keys in $\mathcal{A}_t$ and $\mathcal{P}_{t-1}$ may overlap, regardless of their paired values, and we must resolve these values to ensure reliable editing. Denote this overlap set as $\mathcal{K}_{\text{o}}=\{\mathbf{k}|(\mathbf{k},\mathbf{v})\in\mathcal{A}_t\}\cap \{\mathbf{k}|(\mathbf{k},\mathbf{v}')\in\mathcal{P}_{t-1}\}$. If $\mathbf{v}=\mathbf{v}'$, we can simply overwrite $\mathbf{v}'$ with $\mathbf{v}$. If $\mathbf{v}\neq\mathbf{v}'$, we need to remove the old value $\mathbf{v}'$ and insert the new value $\mathbf{v}$. We denote the function for this resolving process as Resolve($\cdot$). We can then form three disjoint sets as follows:
\begin{enumerate}
    \item The resolved set from the overlapping region: $\mathcal{B}^{(t)}_{\text{o}}=\{(\mathbf{k},\text{Resolve}(\mathbf{v},\mathbf{v}'))|\mathbf{k}\in\mathcal{K}_\text{o}\}$;
    \item The non-overlapping set from $\mathcal{A}_t$: $\mathcal{B}_{\mathcal{A}_t/\mathcal{P}_{t-1}}^{(t)}=\{(\mathbf{k},\mathbf{v})\in\mathcal{A}_t|\mathbf{k}\notin \mathcal{K}_{\mathcal{P}_{t-1}}\}$;
    \item The non-overlapping set from $\mathcal{P}_{t-1}$:  $\mathcal{B}_{\mathcal{P}_{t-1}/\mathcal{A}_t}^{(t)}=\{(\mathbf{k},\mathbf{v})\in\mathcal{P}_{t-1}|\mathbf{k}\notin \mathcal{K}_{\mathcal{A}_t}\}$;
\end{enumerate}
where $\mathcal{K}_{\mathcal{P}_{t-1}}$ is the set of keys in $\mathcal{P}_{t-1}$, and $\mathcal{K}_{\mathcal{A}_t}$ is the set of keys in $\mathcal{A}_t$. The total fact set after resolving the conflict is then formed by 
\begin{equation}
    \mathcal{P}_t=\mathcal{B}^{(t)}_{\text{o}} \cup \mathcal{B}_{\mathcal{A}_t/\mathcal{P}_{t-1}}^{(t)} \cup \mathcal{B}_{\mathcal{P}_{t-1}/\mathcal{A}_t}^{(t)}.
    \label{eq:resolved set}
\end{equation}
We then find the perturbation to the weight matrix to associate the keys and values from the resolved knowledge set $\mathcal{P}_{t}$. We have the following proposition (which is an expanded version of the original Proposition~\ref{prop:se-conflict-resolve} in the main text):
\begin{proposition*}
Given the previous edit perturbations $\bD_1^*,\bD_2^*,\ldots,\bD_{t-1}^*$, the $t$-step perturbation for SPF is
\begin{align}
    \bD_t^* = &\bigg\{\left[\bV_t-\left(\bW+\sum\nolimits_{\tau=1}^{t-1}\bD_\tau^*\right)\bK_t\right]\bK_t^{\top} \notag\\
    &-\left[\bV_{\mathcal{B}^{(t)}_{\text{o}}}-\left(\bW+\sum\nolimits_{\tau=1}^{t-1}\bD_\tau^*\right)\bK_{\mathcal{B}^{(t)}_{\text{o}}})\right]\bK_{\mathcal{B}^{(t)}_{\text{o}}}^{\top}\bigg\} \notag\\
    &\cdot \left(\bK_{\mathcal{P}_{t-1}}\bK_{\mathcal{P}_{t-1}}^{\top}+\bK_t\bK_t^{\top}-\bK_{\mathcal{B}^{(t)}_{\text{o}}}\bK_{\mathcal{B}^{(t)}_{\text{o}}}^{\top}\right)^{-1},
    \label{eq:conflict-resolv-expanded}
\end{align}
where $\bK_{\mathcal{P}_{t-1}},\bV_{\mathcal{P}_{t-1}}$ are stacked keys and values from $\mathcal{P}_{t-1}$; $\bK_t,\bV_t$ are stacked keys and values from $\mathcal{A}_t$, and $\bK_{\mathcal{B}^{(t)}_{\text{o}}}, \bV_{\mathcal{B}^{(t)}_{\text{o}}}$ are stacked keys and values from $\mathcal{B}^{(t)}_{\text{o}}$, respectively.
\label{prop:se-conflict-resolv-full}
\end{proposition*}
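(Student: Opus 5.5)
The plan is to read the statement as the OLS normal equation for the resolved knowledge set $\mathcal{P}_t$, written in the incremental (shifted) form of Corollary~\ref{coro:ols_sequential}. Concretely, $\bD_t^*$ is defined as the minimizer of the OTE-aligned objective
\[
\min_{\bD}\;\Big\|\Big(\bW+\textstyle\sum_{\tau=1}^{t-1}\bD_\tau^*+\bD\Big)\bK_{\mathcal{P}_t}-\bV_{\mathcal{P}_t}\Big\|_F^2 .
\]
Here $(\bK_{\mathcal{P}_t},\bV_{\mathcal{P}_t})$ stacks the three disjoint blocks of~\eqref{eq:resolved set}. Write $\bW_{t-1}=\bW+\sum_{\tau<t}\bD_\tau^*$. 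Setting the gradient to zero gives the normal equation
\[
\bD\,\bK_{\mathcal{P}_t}\bK_{\mathcal{P}_t}^\top=(\bV_{\mathcal{P}_t}-\bW_{t-1}\bK_{\mathcal{P}_t})\bK_{\mathcal{P}_t}^\top .
\]
The rest of the proof rewrites both sides in terms of $\mathcal{P}_{t-1}$, $\mathcal{A}_t$ and $\mathcal{B}^{(t)}_{\text{o}}$.

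\textbf{Gram matrix.} Because the three blocks are disjoint, $\bK_{\mathcal{P}_t}\bK_{\mathcal{P}_t}^\top$ is the sum of the three block outer products. The key bookkeeping identity is that the non-overlapping parts satisfy
\[
\bK_{\mathcal{B}_{\mathcal{A}_t/\mathcal{P}_{t-1}}}\bK^\top_{\mathcal{B}_{\mathcal{A}_t/\mathcal{P}_{t-1}}}=\bK_t\bK_t^\top-\bK_{\mathcal{B}_{\text{o}}}\bK_{\mathcal{B}_{\text{o}}}^\top,
\qquad
\bK_{\mathcal{B}_{\mathcal{P}_{t-1}/\mathcal{A}_t}}\bK^\top_{\mathcal{B}_{\mathcal{P}_{t-1}/\mathcal{A}_t}}=\bK_{\mathcal{P}_{t-1}}\bK_{\mathcal{P}_{t-1}}^\top-\bK_{\mathcal{B}_{\text{o}}}\bK_{\mathcal{B}_{\text{o}}}^\top .
\]
Both use the fact that the overlapping keys appear exactly once in each of $\mathcal{A}_t$ and $\mathcal{P}_{t-1}$. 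Adding back one copy of $\bK_{\mathcal{B}_{\text{o}}}\bK_{\mathcal{B}_{\text{o}}}^\top$ gives exactly the bracketed matrix in~\eqref{eq:conflict-resolv-expanded}.

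\textbf{Right-hand side.} Split it the same way: the $\mathcal{P}_t$ residual sum equals the $\mathcal{P}_{t-1}$ sum, plus the $\mathcal{A}_t$ sum, minus one overlap contribution, with one correction for the changed overlap values. The $\mathcal{P}_{t-1}$ contribution $(\bV_{\mathcal{P}_{t-1}}-\bW_{t-1}\bK_{\mathcal{P}_{t-1}})\bK_{\mathcal{P}_{t-1}}^\top$ vanishes. This is the induction hypothesis in the style of Lemma~\ref{lem:ote-se-equivalence} and the proof of the error-correction proposition: $\bW_{t-1}$ is the exact OLS solution for $\mathcal{P}_{t-1}$ (with $\mathcal{P}_0$ the preserved set, $\bV_0=\bW\bK_0$, as the base case), so its normal-equation residual is zero. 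What remains is the $\mathcal{A}_t$ term $\bR_t\bK_t^\top$ minus the overlap term.

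\textbf{Main obstacle: the overlap term.} The keys in $\mathcal{K}_{\text{o}}$ are counted in both $\mathcal{A}_t$ and $\mathcal{P}_{t-1}$, and the accounting must leave exactly one copy carrying the resolved value. After the $\mathcal{P}_{t-1}$ residual is eliminated, the leftover overlap residual is $-(\bV_{\mathcal{B}_{\text{o}}}-\bW_{t-1}\bK_{\mathcal{B}_{\text{o}}})\bK_{\mathcal{B}_{\text{o}}}^\top$, which matches the proposition. This holds in the case where the $\mathcal{A}_t$ values on the overlap coincide with the resolved ones, that is, when $\mathrm{Resolve}$ returns $\mathbf{v}$ (overwrite). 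For a general $\mathrm{Resolve}$, I would make the convention explicit: the entries of $\bV_t$ on overlapping keys are taken to be the resolved values. The cancellation is then exact, and the invertibility of the bracketed Gram matrix gives the closed form. I expect this convention-fixing to need the most care; the remaining steps are routine block algebra mirroring Lemma~\ref{lem:ote-se-equivalence}.
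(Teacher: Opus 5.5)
Your route is the paper's: the normal equation for $\mathcal{P}_t$ at $\bW_{t-1}$, bookkeeping over the three disjoint blocks, and removal of the $\mathcal{P}_{t-1}$ residual via the step-$(t-1)$ normal equation. Your Gram-matrix identity is correct. The gap is exactly the overlap step you flagged: the cancellation is \emph{not} exact under overwrite, and it is not exact under your convention on $\bV_t$ either.

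To see this, set the following notation:
\begin{itemize}
\item $\bK_{\text{o}}:=\bK_{\mathcal{B}^{(t)}_{\text{o}}}$;
\item $\bV_{\text{o}}^{\mathcal{A}}$ and $\bV_{\text{o}}^{\mathcal{P}}$ are the overlap blocks of $\bV_t$ and $\bV_{\mathcal{P}_{t-1}}$, i.e.\ the new and old values;
\item $\bV_{\text{o}}^{\mathcal{B}}:=\bV_{\mathcal{B}^{(t)}_{\text{o}}}$ are the resolved values;
\item $\rho(\mathbf{X}):=(\mathbf{X}-\bW_{t-1}\bK_{\text{o}})\bK_{\text{o}}^\top$.
\end{itemize}
The $\mathcal{B}^{(t)}_{\mathcal{A}_t/\mathcal{P}_{t-1}}$ block of the right-hand side equals $\bR_t\bK_t^\top-\rho(\bV_{\text{o}}^{\mathcal{A}})$. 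The step-$(t-1)$ normal equation turns the $\mathcal{B}^{(t)}_{\mathcal{P}_{t-1}/\mathcal{A}_t}$ block into $-\rho(\bV_{\text{o}}^{\mathcal{P}})$. Hence
\[
(\bV_{\mathcal{P}_t}-\bW_{t-1}\bK_{\mathcal{P}_t})\bK_{\mathcal{P}_t}^\top
=\bR_t\bK_t^\top-\rho(\bV_{\text{o}}^{\mathcal{A}})-\rho(\bV_{\text{o}}^{\mathcal{P}})+\rho(\bV_{\text{o}}^{\mathcal{B}}).
\]
This differs from the statement's numerator by $(2\bV_{\text{o}}^{\mathcal{B}}-\bV_{\text{o}}^{\mathcal{A}}-\bV_{\text{o}}^{\mathcal{P}})\bK_{\text{o}}^\top$.

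Under overwrite we have $\bV_{\text{o}}^{\mathcal{B}}=\bV_{\text{o}}^{\mathcal{A}}$. The leftover overlap term is then $-\rho(\bV_{\text{o}}^{\mathcal{P}})$, which carries the \emph{old} values. The discrepancy $(\bV_{\text{o}}^{\mathcal{A}}-\bV_{\text{o}}^{\mathcal{P}})\bK_{\text{o}}^\top$ is nonzero for any genuine conflict whenever $\bK_{\text{o}}$ has full column rank. Your convention that $\bV_t$ carries the resolved values on $\mathcal{K}_{\text{o}}$ is just the overwrite case again, so it does not close the gap.

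What you would actually need is for the $\mathcal{P}_{t-1}$ overlap block to carry the resolved values as well. You cannot impose that by convention. Those old values are exactly the ones appearing in the step-$(t-1)$ normal equation that you use to delete the $\mathcal{P}_{t-1}$ residual.

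The paper's proof reaches the stated formula only because it uses a single symbol $\bV_{AP}$ for three things: the overlap block of $\bV_t$, the overlap block of $\bV_{\mathcal{P}_{t-1}}$, and $\bV_{\mathcal{B}^{(t)}_{\text{o}}}$. That is, it tacitly assumes $\mathbf{v}=\mathbf{v}'=\mathrm{Resolve}(\mathbf{v},\mathbf{v}')$. More generally, the formula needs $(2\bV_{\text{o}}^{\mathcal{B}}-\bV_{\text{o}}^{\mathcal{A}}-\bV_{\text{o}}^{\mathcal{P}})\bK_{\text{o}}^\top=\mathbf{0}$, which an averaging Resolve happens to satisfy.

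So to reproduce the paper's result you must state that identification as an explicit hypothesis, not as a convention on $\bV_t$ alone. For a genuine overwrite, the correct numerator is $\bR_t\bK_t^\top-\rho(\bV_{\text{o}}^{\mathcal{P}})$, and in general it is $\bR_t\bK_t^\top-\rho(\bV_{\text{o}}^{\mathcal{P}})+(\bV_{\text{o}}^{\mathcal{B}}-\bV_{\text{o}}^{\mathcal{A}})\bK_{\text{o}}^\top$. The Gram matrix is unchanged in both cases. This is consistent with the paper's own Memorize-the-Latest remark, where the subtracted block is built from $\bV_{t-1}$, the values stored at the previous step.
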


\begin{proof}
    For notation simplicity, denote $\bK_{A/P}, \bV_{A/P}$ as stacked keys and values from $\mathcal{B}_{\mathcal{A}_t/\mathcal{P}_{t-1}}^{(t)}$; $\bK_{P/A}, \bV_{P/A}$ as stacked keys and values from $\mathcal{B}_{\mathcal{P}_{t-1}/\mathcal{A}_t}^{(t)}$, and $\bK_{AP}, \bV_{AP}$ as stacked keys and values from $\mathcal{B}^{(t)}_{\text{o}}$, respectively. Then the objective function for the $t$-th edit is
    \begin{align*}
    \bD_{t}^*= \operatorname*{arg\,min}_{\tilde{\mathbf{\Delta}}_t} \bigg\{ 
    & \left\|\left(\bW+\sum\nolimits_{\tau=1}^{t-1}\bD_{\tau}^*+\tilde{\bD}_{t}\right)\bK_{A/P}-\bV_{A/P}\right\|_{F}^2 \\
    & \left\|\left(\bW+\sum\nolimits_{\tau=1}^{t-1}\bD_{\tau}^*+\tilde{\bD}_{t}\right)\bK_{P/A}-\bV_{P/A}\right\|_{F}^2 \\
    & \left\|\left(\bW+\sum\nolimits_{\tau=1}^{t-1}\bD_{\tau}^*+\tilde{\bD}_{t}\right)\bK_{AP}-\bV_{AP}\right\|_{F}^2\bigg\}.
    \end{align*}
    With the first-order gradient condition, it is easy to get 
    \begin{align*}
        \bD_{t}^* = &\bigg\{\left[\bV_{A/P}-\left(\bW+\sum\nolimits_{\tau=1}^{t-1}\bD_{\tau}^*\right)\bK_{A/P}\right]\bK_{A/P}^{\top}+\left[\bV_{P/A}-\left(\bW+\sum\nolimits_{\tau=1}^{t-1}\bD_{\tau}^*\right)\bK_{P/A}\right]\bK_{P/A}^{\top} \\
        &+ \left[\bV_{AP}-\left(\bW+\sum\nolimits_{\tau=1}^{t-1}\bD_{\tau}^*\right)\bK_{AP}\right]\bK_{AP}^{\top}\bigg\}\cdot \left(\bK_{A/P}\bK_{A/P}^{\top}+\bK_{P/A}\bK_{P/A}^{\top}+\bK_{AP}\bK_{AP}^{\top}\right)^{-1}.
    \end{align*}
    Rearrange $\bK_t, \bV_t$ from $\mathcal{A}_t$ as $\bK_t=[\bK_{A/P} \quad \bK_{AP}], \bV_t=[\bV_{A/P} \quad \bV_{AP}]$, and rearrange $\bK_{\mathcal{P}_{t-1}}, \bV_{\mathcal{P}_{t-1}}$ from $\mathcal{P}_{t-1}$ as $\bK_{\mathcal{P}_{t-1}}=[\bK_{P/A} \quad \bK_{AP}], \bV_{\mathcal{P}_{t-1}}=[\bV_{P/A} \quad \bV_{AP}]$. Further, denote $\bW_{t-1}=\bW+\sum\nolimits_{\tau=1}^{t-1}\bD_{\tau}^*$. We can see that 
    \begin{align*}
    & \left(\bV_{t}-\bW_{t-1}\bK_{t}\right)\bK_{t}^{\top}+\left(\bV_{\mathcal{P}_{t-1}}-\bW_{t-1}\bK_{\mathcal{P}_{t-1}}\right)\bK_{\mathcal{P}_{t-1}}^{\top} \\
    = & \left\{[\bV_{A/P}\quad \bV_{AP}]-\bW_{t-1}[\bK_{A/P} \quad \bK_{AP}]\right\}
    \begin{bmatrix}
    \bK_{A/P}^{\top} \\
    \bK_{AP}^{\top}
    \end{bmatrix}+ \left\{[\bV_{P/A}\quad \bV_{AP}]-\bW_{t-1}[\bK_{P/A} \quad \bK_{AP}]\right\}
    \begin{bmatrix}
    \bK_{P/A}^{\top} \\
    \bK_{AP}^{\top}
    \end{bmatrix} \\
    = & \left[\bV_{A/P}-\bW_{t-1}\bK_{A/P}\right]\bK_{A/P}^{\top}+\left[\bV_{P/A}-\bW_{t-1}\bK_{P/A}\right]\bK_{P/A}^{\top} + 2\left[\bV_{AP}-\bW_{t-1}\bK_{AP}\right]\bK_{AP}^{\top}.
    \end{align*}
    Similarly, we can get
    \begin{align*}
    \bK_{t}\bK_{t}^{\top}+\bK_{\mathcal{P}_{t-1}}\bK_{\mathcal{P}_{t-1}}^{\top} =\bK_{A/P}\bK_{A/P}^{\top}+\bK_{P/A}\bK_{P/A}^{\top}+2\bK_{AP}\bK_{AP}^{\top}.
    \end{align*}
    Thus, we have
    \begin{align*}
        \bD_{t}^* = & \left[\left(\bV_{t}-\bW_{t-1}\bK_{t}\right)\bK_{t}^{\top}+\left(\bV_{\mathcal{P}_{t-1}}-\bW_{t-1}\bK_{\mathcal{P}_{t-1}}\right)\bK_{\mathcal{P}_{t-1}}^{\top}-\left(\bV_{AP}-\bW_{t-1}\bK_{AP}\right)\bK_{AP}^{\top}\right] \\
        & \cdot \left(\bK_{t}\bK_{t}^{\top}+\bK_{\mathcal{P}_{t-1}}\bK_{\mathcal{P}_{t-1}}^{\top}-\bK_{AP}\bK_{AP}^{\top}\right)^{-1}.
    \end{align*}
    Since $\bW_{t-1}=\bW+\sum\nolimits_{\tau=1}^{t-1}\bD_{\tau}^*$ is the weight matrix from the previous edit step, we have $\left(\bV_{\mathcal{P}_{t-1}}-\bW_{t-1}\bK_{\mathcal{P}_{t-1}}\right)\bK_{\mathcal{P}_{t-1}}^{\top}=0$. Also by definition, $\bK_{AP}=\bK_{\mathcal{B}^{(t)}_{\text{o}}}$ and $\bV_{AP}=\bV_{\mathcal{B}^{(t)}_{\text{o}}}$, so we get Equation~\eqref{eq:conflict-resolv-expanded}.
\end{proof}

Comparing Equation~\eqref{eq:conflict-resolv-expanded} with conclusion of Remark~\ref{rm:seq-memit-update-rule}, we can see that at each step $t$, instead of simply inserting the new knowledge $\mathcal{A}_t$ into the previous knowledge $\mathcal{P}_{t-1}$, we resolve the contradictory facts by subtracting the terms related to the overlapping set (i.e., $\mathcal{B}^{(t)}_{\text{o}}$) from the two parts of the formula. This is intuitive in the sense that if we preserve all the knowledge without resolving the overlapping set, then the contradictory facts will be counted into the OLS solution, leading to unreliable and unstable edits.

To further mention an important special case where the LLM only memorizes the latest knowledge, we have the following remark for the memorize-the-latest task mentioned in Section~\ref{sec:method:effective-null-space}, where we iteratively memorize the latest knowledge while forgetting knowledge in the previous edit.

\begin{remark}
    The solution to the \emph{Memorize-the-Latest} task we introduced in Section~\ref{sec:method:effective-null-space} is a special case for Equation~\eqref{eq:conflict-resolv-expanded} where
    \[
    \bK_{\mathcal{P}_{t-1}}=[\mathbf{K}_0 \mid \mathbf{K}_{t-1}]\quad\text{and}\quad\bK_{\mathcal{B}^{(t)}_{\text{o}}}=\mathbf{K}_{t-1}.
    \]
\end{remark}
This allows us to write the following updating rule from Equation~\eqref{eq:conflict-resolv-expanded} for the Memorize-the-Latest task as follows:

\begin{equation}  \mathbf{\Delta}_t^*\left(\mathbf{K}_0\mathbf{K}_0^\top + \mathbf{K}_t\mathbf{K}_t^\top\right) = \left(\mathbf{V}_t-\mathbf{W}_{t-1}\mathbf{K}_t\right)\mathbf{K}_t^\top-\left(\mathbf{V}_{t-1}-\mathbf{W}_{t-1}\mathbf{K}_{t-1}\right)\mathbf{K}_{t-1}^\top
    \label{eq:mem-last-form1}
\end{equation}

At first glance, this update seems different from what we obtain by directly solving the following optimization target:
\[
    \bD^*_t = \arg\min_{\bD_t}
    \left\| (\mathbf{W}_{t-1}+\bD_t)[\bK_0 \mid \bK_t] - [\bV_0 \mid \bV_t] \right\|_F^2,
\]
whose solution admits the following equality:
\begin{equation}
\mathbf{\Delta}_t^*\left(\mathbf{K}_0\mathbf{K}_0^\top + \mathbf{K}_t\mathbf{K}_t^\top\right) = \left(\mathbf{V}_t-\mathbf{W}_{t-1}\mathbf{K}_t\right)\mathbf{K}_t^\top+\left(\mathbf{V}_{0}-\mathbf{W}_{t-1}\mathbf{K}_{0}\right)\mathbf{K}_{0}^\top
    \label{eq:mem-last-form2}.
\end{equation}
However, since at step $t-1$, $\mathbf{W}_{t-1}$ solves the following objective in the Memorize-the-Latest task:
\[
\arg\min_{\tilde{\bW}_{t-1}} \|\tilde{\bW}_{t-1}[\bK_0|\bK_{t-1}]-[\bV_0|\bV_{t-1}]\|_{F}^2.
\]
The corresponding normal equation gives us
\[
 (\mathbf{V}_0-\mathbf{W}_{t-1}\mathbf{K}_{0})\mathbf{K}_{0}^\top + (\mathbf{V}_{t-1}-\mathbf{W}_{t-1}\mathbf{K}_{t-1})\mathbf{K}_{t-1}^\top = 0.
\]
This shows that Equations~\eqref{eq:mem-last-form1} and \eqref{eq:mem-last-form2} are equivalent.

\section{Experimental Setup}
\label{sec:appdx:exp_setup}

\subsection{Dataset}
We evaluate editing performance on two widely used benchmarks, CounterFact and ZsRE, which jointly probe edit \emph{efficacy}, \emph{generalization}, and \emph{specificity/locality}.

\paragraph{CounterFact \citep{meng2022rome}.}
CounterFact is designed to stress-test factual editing under counterfactual settings, and is typically considered more challenging than relation-style QA benchmarks. It pairs factual and counterfactual assertions, and constructs \emph{out-of-scope} (OOS) queries by swapping the subject entity with a semantically similar (approximate) entity while keeping the predicate/relation unchanged, thereby encouraging the editor to be precise rather than over-generalize. Following common model-editing protocols, it supports measurements analogous to those used on ZsRE (edit success, generalization to paraphrases, and locality/specificity on unrelated prompts). In addition, CounterFact provides multiple meaning-preserving generation prompts per case, enabling evaluation of the generated continuations with respect to fluency and semantic consistency across prompts.

\paragraph{ZsRE \citep{levy2017zero}.}
ZsRE is a question-answering benchmark originally introduced for zero-shot relation extraction framed as reading comprehension. In the editing setting, each instance specifies a subject string and target answer(s) as the desired edit. The dataset includes paraphrased questions—commonly produced via back-translation—as \emph{equivalent neighbors} for assessing generalization, and uses natural questions as OOS queries to quantify locality/specificity.

\subsection{Metrics}
We summarize the evaluation metrics used for CounterFact and ZsRE. 

\subsubsection{CounterFact Metrics}
Following prior work \citep{meng2022rome,meng2023memit}, we report:

\paragraph{Efficacy (efficacy success).}
The fraction of edit cases for which the edited target becomes more probable than the original completion under the original prompt, with $(s,r,o^c)$ representing the original correct knowledge and $(s,r,o)$ representing the edited counterfactual knowledge:
\begin{equation}
\mathrm{Eff}_{\textsc{CF}}
=\mathbb{E}_{i}\Big[ P_{f_{\theta}}(o_i \mid (s_i,r_i)) \;>\; P_{f_{\theta}}(o_i^{c}\mid (s_i,r_i)) \Big].
\label{eq:cf_efficacy}
\end{equation}

\paragraph{Generalization (paraphrase success).}
The fraction of cases where the edited target is preferred over the original completion on meaning-equivalent paraphrases $N((s_i,r_i))$:
\begin{equation}
\mathrm{Gen}_{\textsc{CF}}
=\mathbb{E}_{i}\Big[ P_{f_{\theta}}(o_i \mid N((s_i,r_i))) \;>\; P_{f_{\theta}}(o_i^{c}\mid N((s_i,r_i))) \Big].
\label{eq:cf_generalization}
\end{equation}

\paragraph{Specificity (neighborhood success).}
CounterFact additionally evaluates locality using \emph{neighborhood} prompts $O((s_i,r_i))$ that are semantically related but refer to distinct subjects. We measure the fraction of neighborhood prompts for which the model assigns higher probability to the correct neighborhood fact:
\begin{equation}
\mathrm{Spec}_{\textsc{CF}}
=\mathbb{E}_{i}\Big[ P_{f_{\theta}}(o_i \mid O((s_i,r_i))) \;>\; P_{f_{\theta}}(o_i^{c}\mid O((s_i,r_i))) \Big].
\label{eq:cf_specificity}
\end{equation}

\paragraph{Fluency (generation entropy).}
To detect degenerate generations with excessive repetition, we compute an $n$-gram entropy score over model outputs. Let $g_n(\cdot)$ be the empirical frequency distribution of $n$-grams; the fluency score is
\begin{equation}
\mathrm{Fluency}
= -\frac{2}{3}\sum_{k} g_2(k)\log_2 g_2(k)\;+\;\frac{4}{3}\sum_{k} g_3(k)\log_2 g_3(k).
\label{eq:cf_fluency}
\end{equation}

\paragraph{Consistency (reference similarity).}
We assess whether generated text remains semantically aligned with an external reference by computing the cosine similarity between TF--IDF vectors of the model-generated passage and a reference Wikipedia text about $o$.

\subsubsection{ZsRE Metrics}
Following \citet{mitchell2022serac,meng2022rome,meng2023memit}, we compute:

\paragraph{Efficacy.}
Top-1 accuracy on the edited prompts, i.e., whether the model’s most likely answer equals the target $o_i$:
\begin{equation}
\mathrm{Eff}_{\textsc{ZsRE}}
=\mathbb{E}_{i}\Big[\; o_i = \arg\max_{o}\, P_{f_{\theta}}(o \mid (s_i,r_i)) \;\Big].
\label{eq:zsre_efficacy}
\end{equation}

\paragraph{Generalization.}
Top-1 accuracy on meaning-equivalent prompts $N((s_i,r_i))$ :
\begin{equation}
\mathrm{Gen}_{\textsc{ZsRE}}
=\mathbb{E}_{i}\Big[\; o_i = \arg\max_{o}\, P_{f_{\theta}}(o \mid N((s_i,r_i))) \;\Big].
\label{eq:zsre_generalization}
\end{equation}

\paragraph{Specificity.}
We quantify locality by checking whether predictions on out-of-scope prompts $O((s_i,r_i))$ remain unchanged after editing, i.e., the model still returns the pre-edit completion $o_i^{c}$:
\begin{equation}
\mathrm{Spec}_{\textsc{ZsRE}}
=\mathbb{E}_{i}\Big[\; o_i^{c} = \arg\max_{o}\, P_{f_{\theta}}(o \mid O((s_i,r_i))) \;\Big].
\label{eq:zsre_specificity}
\end{equation}

\subsection{General Capability Tests}
\begin{enumerate}
  \item \textbf{SST (Stanford Sentiment Treebank)} \citep{socher2013recursive} is a single-sentence sentiment classification task built from movie-review snippets with human annotations. We use the \emph{binary} setting, predicting sentiment polarity (positive vs.\ negative).
  \item \textbf{MRPC (Microsoft Research Paraphrase Corpus)} \citep{dolan2005automatically} is a sentence-pair benchmark for semantic matching. The objective is to decide whether two sentences are semantically equivalent.
  \item \textbf{MMLU (Massive Multi-task Language Understanding)} \citep{hendrycks2020measuring} is a broad evaluation suite spanning many subjects, designed to assess general knowledge and reasoning. Results are commonly reported as accuracy under \emph{zero-shot} and \emph{few-shot} protocols.
  \item \textbf{RTE (Recognizing Textual Entailment)} \citep{bentivogli2009fifth} is a natural language inference task that asks whether a \emph{premise} logically entails a \emph{hypothesis}.
  \item \textbf{CoLA (Corpus of Linguistic Acceptability)} \citep{warstadt2019neural} is a single-sentence acceptability classification task, where each sentence is labeled as grammatically acceptable or unacceptable, probing sensitivity to syntactic well-formedness.
  \item \textbf{NLI (Natural Language Inference)} \citep{williams2018broad} requires determining the logical relationship between a premise and a hypothesis, typically among \emph{entailment}, \emph{contradiction}, and \emph{neutral}.
\end{enumerate}

\subsection{Implementation Details}
Our implementation for GPT-2 XL and GPT-J closely follows the experimental protocol and hyperparameter choices in \citet{meng2023memit}. Concretely, we edit a pre-specified set of \emph{critical layers} and optimize the latent update used to compute the hidden representation at those layers with a fixed number of gradient steps and learning rate.

\paragraph{GPT-2 XL (1.5B)~\cite{radford2019language}.}
We apply edits to layers $\{13,14,15,16,17\}$, set the regularization hyperparameter $\lambda=3{,}000$, and run $20$ optimization steps with learning rate $0.5$ when estimating the hidden representations at the selected layers.

\paragraph{GPT-J (6B)~\cite{gpt-j}.}
We apply edits to layers $\{3,4,5,6,7,8\}$, set $\lambda=3{,}000$, and run $25$ optimization steps with learning rate $0.5$ for hidden-representation optimization.

\paragraph{LLaMA-3 (8B)~\cite{llama3modelcard}.}
We edit layers $\{4,5,6,7,8\}$ with $\lambda=3{,}000$. Hidden representations are optimized for $25$ steps with learning rate $0.1$.

\paragraph{Qwen2.5 (7B)~\cite{qwen2,qwen2.5}.}
We apply edits to layers $\{4,5,6,7,8\}$, set $\lambda=3{,}000$, and run $25$ optimization steps with learning rate $0.1$ for hidden-representation optimization.

All experiments are run on a single NVIDIA A100 GPU (80GB). Models are loaded and executed with Hugging Face Transformers \citep{wolf2019huggingface}.

\begin{table*}[htb]
\centering
\caption{Comparison of model editing methods on Sequential Edit (SE) and One-Time Edit (OTE) tasks using CounterFact dataset. For each cell, \textbf{left value} indicates SE result, \textbf{right value} indicates OTE result. \textit{Eff.}, \textit{Gen.}, \textit{Spe.}, \textit{Flu.}, and \textit{Consis.} denote Efficacy, Generalization, Specificity, Fluency and Consistency, respectively. }
\label{tab:ote-se-combined}
\vspace{6pt}
\setlength{\tabcolsep}{3.5pt}
\renewcommand{\arraystretch}{1.2}

\scalebox{0.76}{
\begin{tabular}{cl|c|c|c|c|c}
\toprule
\textbf{Model} & \textbf{Method} & \textbf{Eff.$\uparrow$} & \textbf{Gen.$\uparrow$} & \textbf{Spe.$\uparrow$} & \textbf{Flu.$\uparrow$} & \textbf{Consis.$\uparrow$} \\
\midrule
\multirow{5}{*}{\rotatebox[origin=c]{90}{\textbf{LLaMA3}}}
& \textbf{Pre-edited} & 7.85\errbar{0.26} & 10.58\errbar{0.26} & 89.48\errbar{0.18} & 635.23\errbar{0.11} & 24.14\errbar{0.08} \\
\cline{2-7}
& MEMIT (aligned) & 99.85\errbar{0.08} ~~ 99.80\errbar{0.05} & 95.29\errbar{0.19} ~~ 88.33\errbar{0.25} & 79.98\errbar{0.09} ~~ 87.50\errbar{0.07} & 626.63\errbar{0.75} ~~ 632.76\errbar{0.38} & 33.31\errbar{0.20} ~~ 32.08\errbar{0.11} \\

& AlphaEdit (aligned) & 98.92\errbar{0.12} ~~ 95.30\errbar{0.12} & 93.93\errbar{0.80} ~~ 77.12\errbar{0.29} & 68.57\errbar{0.74} ~~ 87.37\errbar{0.11} & 621.85\errbar{0.95} ~~ 632.04\errbar{0.19} & 32.31\errbar{0.15} ~~ 29.96\errbar{0.06} \\

& PRUNE (aligned) & 99.87\errbar{0.03} ~~ 99.57\errbar{0.17} & 94.91\errbar{0.22} ~~ 86.80\errbar{0.44} & 79.90\errbar{0.20} ~~ 87.41\errbar{0.09} & 628.03\errbar{0.38} ~~ 632.28\errbar{0.60} & 33.41\errbar{0.19} ~~ 31.70\errbar{0.06} \\

& RECT (aligned) & 99.88\errbar{0.08} ~~ 83.23\errbar{1.33} & 94.34\errbar{0.09} ~~ 59.41\errbar{0.96} & 81.56\errbar{0.22} ~~ 88.69\errbar{0.12} & 628.75\errbar{0.21} ~~ 634.34\errbar{0.22} & 33.22\errbar{0.13} ~~ 27.99\errbar{0.20} \\

\midrule
\multirow{5}{*}{\rotatebox[origin=c]{90}{\textbf{GPT-J}}}
& \textbf{Pre-edited} & 16.22\errbar{0.31} & 18.56\errbar{0.45} & 83.11\errbar{0.13} & 621.81\errbar{0.67} & 29.74\errbar{0.51} \\
\cline{2-7}
& MEMIT (aligned) & 99.82\errbar{0.05} ~~ 99.83\errbar{0.02} & 96.51\errbar{0.18} ~~ 94.34\errbar{0.34} & 74.92\errbar{0.34} ~~ 79.13\errbar{0.11} & 616.39\errbar{1.35} ~~ 620.66\errbar{1.13} & 41.79\errbar{0.30} ~~ 41.03\errbar{0.39} \\

& AlphaEdit (aligned) & 99.73\errbar{0.07} ~~ 99.77\errbar{0.05} & 96.03\errbar{0.20} ~~ 94.56\errbar{0.40} & 75.49\errbar{0.15} ~~ 78.79\errbar{0.12} & 617.73\errbar{0.44} ~~ 619.70\errbar{0.44} & 41.95\errbar{0.26} ~~ 41.09\errbar{0.11} \\

& PRUNE (aligned) & 99.83\errbar{0.05} ~~ 99.80\errbar{0.00} & 96.16\errbar{0.40} ~~ 94.17\errbar{0.32} & 74.77\errbar{0.15} ~~ 79.34\errbar{0.05} & 617.04\errbar{1.11} ~~ 621.28\errbar{0.18} & 42.05\errbar{0.12} ~~ 40.95\errbar{0.02} \\

& RECT (aligned) & 99.78\errbar{0.02} ~~ 98.27\errbar{0.48} & 95.69\errbar{0.30} ~~ 77.26\errbar{0.37} & 76.71\errbar{0.21} ~~ 81.31\errbar{0.04} & 620.73\errbar{0.17} ~~ 622.26\errbar{0.13} & 41.92\errbar{0.10} ~~ 37.64\errbar{0.07} \\

\midrule
\multirow{5}{*}{\rotatebox[origin=c]{90}{\textbf{GPT-2 XL}}}
& \textbf{Pre-edited} & 22.23\errbar{0.73} & 24.34\errbar{0.62} & 78.53\errbar{0.33} & 626.64\errbar{0.31} & 31.88\errbar{0.20} \\
\cline{2-7}
& MEMIT (aligned) & 99.05\errbar{0.08} ~~ 99.22\errbar{0.02} & 93.46\errbar{0.17} ~~ 92.19\errbar{0.36} & 64.46\errbar{0.38} ~~ 70.97\errbar{0.25} & 571.72\errbar{5.12} ~~ 623.37\errbar{0.06} & 35.93\errbar{0.64} ~~ 41.26\errbar{0.09} \\

& AlphaEdit (aligned) & 99.53\errbar{0.10} ~~ 99.57\errbar{0.08} & 94.10\errbar{0.32} ~~ 92.72\errbar{0.17} & 66.50\errbar{0.25} ~~ 69.40\errbar{0.21} & 597.86\errbar{1.93} ~~ 617.77\errbar{0.36} & 39.19\errbar{0.38} ~~ 41.35\errbar{0.29} \\

& PRUNE (aligned) & 99.32\errbar{0.10} ~~ 99.23\errbar{0.10} & 93.95\errbar{0.41} ~~ 91.96\errbar{0.23} & 65.04\errbar{0.35} ~~ 71.09\errbar{0.20} & 583.82\errbar{1.67} ~~ 623.39\errbar{0.35} & 37.36\errbar{0.31} ~~ 41.23\errbar{0.23} \\

& RECT (aligned) & 99.15\errbar{0.25} ~~ 95.12\errbar{0.20} & 93.82\errbar{0.25} ~~ 79.13\errbar{0.43} & 66.29\errbar{0.13} ~~ 75.04\errbar{0.16} & 588.71\errbar{6.60} ~~ 626.95\errbar{0.57} & 37.89\errbar{0.75} ~~ 38.96\errbar{0.09} \\

\midrule
\multirow{5}{*}{\rotatebox[origin=c]{90}{\textbf{Qwen2.5}}}
& \textbf{Pre-edited} & 13.95\errbar{0.00} & 16.75\errbar{0.00} & 86.02\errbar{0.00} & 625.68\errbar{0.21} & 25.49\errbar{0.15} \\
\cline{2-7}
& MEMIT (aligned) & 99.82\errbar{0.02} ~~ 99.88\errbar{0.03} & 92.84\errbar{1.17} ~~ 94.41\errbar{0.99} & 82.64\errbar{0.18} ~~ 83.20\errbar{0.06} & 624.81\errbar{0.28} ~~ 625.26\errbar{0.04} & 31.74\errbar{0.03} ~~ 32.91\errbar{0.23} \\

& AlphaEdit (aligned) & 99.83\errbar{0.05} ~~ 99.88\errbar{0.08} & 92.21\errbar{0.40} ~~ 94.03\errbar{0.52} & 83.06\errbar{0.23} ~~ 83.38\errbar{0.09} & 625.30\errbar{0.04} ~~ 625.12\errbar{0.16} & 32.09\errbar{0.13} ~~ 32.73\errbar{0.16} \\

& PRUNE (aligned) & 99.78\errbar{0.02} ~~ 99.82\errbar{0.05} & 91.85\errbar{1.12} ~~ 94.30\errbar{0.34} & 82.25\errbar{0.10} ~~ 83.30\errbar{0.25} & 625.00\errbar{0.14} ~~ 625.10\errbar{0.24} & 31.58\errbar{0.14} ~~ 32.70\errbar{0.28} \\

& RECT (aligned) & 99.72\errbar{0.02} ~~ 99.12\errbar{0.17} & 91.30\errbar{0.66} ~~ 86.45\errbar{0.43} & 82.80\errbar{0.23} ~~ 84.57\errbar{0.01} & 625.04\errbar{0.14} ~~ 625.57\errbar{0.15} & 31.76\errbar{0.04} ~~ 31.84\errbar{0.04} \\

\bottomrule
\end{tabular}
}
\end{table*}

\begin{table*}[htb]
\centering
\caption{Comparison of model editing methods on Sequential Edit (SE) and One-Time Edit (OTE) tasks using ZsRE dataset. For each cell, \textbf{left value} indicates SE result, \textbf{right value} indicates OTE result. \textit{Eff.}, \textit{Gen.}, \textit{Spe.} denote Efficacy, Generalization and  Specificity, respectively.}
\label{tab:ote-se-combined-zsre}
\vspace{6pt}
\setlength{\tabcolsep}{4pt}
\renewcommand{\arraystretch}{1.2}

\scalebox{0.87}{
\begin{tabular}{cl|c|c|c}
\toprule
\textbf{Model} & \textbf{Method} & \textbf{Eff.$\uparrow$} & \textbf{Gen.$\uparrow$} & \textbf{Spe.$\uparrow$} \\

\midrule
\multirow{5}{*}{\rotatebox[origin=c]{90}{\textbf{LLaMA3}}}
& \textbf{Pre-edited} & 36.99\errbar{0.30} & 36.34\errbar{0.30} & 31.89\errbar{0.22} \\
\cline{2-5}
& MEMIT (aligned) & 95.55\errbar{0.09} ~~ 93.22\errbar{0.60} & 91.63\errbar{0.20} ~~ 88.63\errbar{0.20} & 32.42\errbar{0.09} ~~ 32.23\errbar{0.16} \\

& AlphaEdit (aligned) & 94.53\errbar{0.07} ~~ 87.30\errbar{0.76} & 90.80\errbar{0.39} ~~ 83.46\errbar{0.71} & 32.54\errbar{0.14} ~~ 32.53\errbar{0.04} \\

& PRUNE (aligned) & 95.67\errbar{0.07} ~~ 92.15\errbar{0.69} & 91.52\errbar{0.50} ~~ 88.32\errbar{0.20} & 32.42\errbar{0.07} ~~ 32.55\errbar{0.20} \\

& RECT (aligned) & 95.71\errbar{0.17} ~~ 78.90\errbar{0.40} & 91.58\errbar{0.52} ~~ 72.49\errbar{0.41} & 32.27\errbar{0.44} ~~ 32.34\errbar{0.09} \\

\midrule

\multirow{5}{*}{\rotatebox[origin=c]{90}{\textbf{GPT-J}}}
& \textbf{Pre-edited} & 26.32\errbar{0.37} & 25.79\errbar{0.25} & 27.42\errbar{0.53} \\
\cline{2-5}
& MEMIT (aligned) & 99.67\errbar{0.06} ~~ 99.23\errbar{0.39} & 96.63\errbar{0.03} ~~ 94.81\errbar{0.56} & 28.38\errbar{0.30} ~~ 27.47\errbar{0.30} \\

& AlphaEdit (aligned) & 99.60\errbar{0.03} ~~ 99.27\errbar{0.09} & 96.00\errbar{0.77} ~~ 94.82\errbar{0.72} & 28.58\errbar{0.09} ~~ 27.46\errbar{0.20} \\

& PRUNE (aligned) & 99.67\errbar{0.04} ~~ 98.90\errbar{0.13} & 95.89\errbar{0.47} ~~ 94.04\errbar{0.47} & 28.44\errbar{0.42} ~~ 27.82\errbar{0.26} \\

& RECT (aligned) & 99.62\errbar{0.08} ~~ 82.53\errbar{1.63} & 95.89\errbar{0.34} ~~ 72.12\errbar{2.87} & 28.17\errbar{0.42} ~~ 27.74\errbar{0.20} \\

\midrule
\multirow{5}{*}{\rotatebox[origin=c]{90}{\textbf{GPT-2 XL}}}
& \textbf{Pre-edited} & 22.19\errbar{0.24} & 31.30\errbar{0.27} & 24.15\errbar{0.32} \\
\cline{2-5}
& MEMIT (aligned) & 95.26\errbar{0.30} ~~ 89.26\errbar{1.33} & 88.39\errbar{0.90} ~~ 81.45\errbar{0.96} & 26.34\errbar{0.47} ~~ 25.66\errbar{0.33} \\

& AlphaEdit (aligned) & 93.42\errbar{0.88} ~~ 88.61\errbar{0.46} & 84.83\errbar{1.20} ~~ 80.31\errbar{0.59} & 25.67\errbar{0.22} ~~ 25.05\errbar{0.27} \\

& PRUNE (aligned) & 95.15\errbar{0.19} ~~ 89.58\errbar{1.38} & 87.98\errbar{0.50} ~~ 81.81\errbar{1.67} & 26.16\errbar{0.39} ~~ 25.76\errbar{0.21} \\

& RECT (aligned) & 95.46\errbar{0.27} ~~ 71.57\errbar{0.42} & 87.89\errbar{0.74} ~~ 63.50\errbar{0.46} & 26.33\errbar{0.44} ~~ 25.79\errbar{0.07} \\

\midrule
\multirow{5}{*}{\rotatebox[origin=c]{90}{\textbf{Qwen2.5}}}
& \textbf{Pre-edited} & 36.42\errbar{0.00} & 35.26\errbar{0.00} & 38.40\errbar{0.00} \\
\cline{2-5}
& MEMIT (aligned) & 99.18\errbar{0.09} ~~ 96.79\errbar{0.47} & 92.07\errbar{0.23} ~~ 92.00\errbar{0.35} & 43.05\errbar{0.18} ~~ 40.82\errbar{0.46} \\

& AlphaEdit (aligned) & 99.24\errbar{0.16} ~~ 96.89\errbar{0.08} & 91.46\errbar{0.62} ~~ 91.35\errbar{0.05} & 42.13\errbar{0.55} ~~ 39.14\errbar{0.76} \\

& PRUNE (aligned) & 99.21\errbar{0.14} ~~ 96.44\errbar{0.36} & 91.90\errbar{1.00} ~~ 91.66\errbar{1.36} & 42.48\errbar{1.44} ~~ 41.32\errbar{0.57} \\

& RECT (aligned) & 99.11\errbar{0.27} ~~ 93.50\errbar{0.57} & 91.43\errbar{0.86} ~~ 84.50\errbar{0.08} & 42.29\errbar{0.24} ~~ 40.28\errbar{0.15} \\

\bottomrule
\end{tabular}
}
\end{table*}

\section{More Experimental Results}
\label{sec:appdx:exp_results}

\newcolumntype{L}{>{\centering\arraybackslash}m{0.26\linewidth}}
\newcolumntype{R}{>{\raggedright\arraybackslash}p{0.70\linewidth}}

\newcommand{\editmark}[1]{{\color{red!70!black}#1}}

\newcommand{\GenPlaceholder}{\textit{[Generation output to be filled]}}

\newcommand{\CaseStudyBoxFull}[7]{%
\par\noindent
\begin{tcolorbox}[
    breakable,
    colback=gray!5,
    colframe=gray!40,
    coltitle=white,
    fonttitle=\bfseries,
    title=Model Editing Case Study on #1,%
    colbacktitle=teal!70!black, 
    enhanced,
    boxrule=0.8pt,
    arc=2pt,
    left=6pt,
    right=6pt,
    top=6pt,
    bottom=6pt
]
\renewcommand{\arraystretch}{1.15}
\setlength{\tabcolsep}{6pt}

\begin{tabularx}{\linewidth}{>{\centering}p{0.28\linewidth} X}
Editing Prompt & \multicolumn{1}{>{\centering\arraybackslash}X}{#2} \\
\midrule
Edit Target & \multicolumn{1}{>{\centering\arraybackslash}X}{#3} \\
\midrule
\multicolumn{2}{>{\centering\arraybackslash}p{\dimexpr\linewidth-2\tabcolsep\relax}}%
{\bfseries\makebox[\dimexpr\linewidth-2\tabcolsep\relax][c]{Generation Output}} \\

\midrule
MEMIT (aligned) & #4 \\
\midrule
AlphaEdit  (aligned) & #7 \\
\midrule
PRUNE (aligned) & #5 \\
\midrule
RECT  (aligned)& #6 \\
\end{tabularx}
\end{tcolorbox}
\par
}

\subsection{Evaluation on CounterFact Dataset}
\label{appx:eval-counterfact}
In this section, we present the complete results of model editing on the CounterFact dataset in Table~\ref{tab:ote-se-combined}. The left column shows the results for SE, while the right column shows the results for OTE. A detailed analysis of Table~\ref{tab:ote-se-combined}, along with Figure~\ref{fig:OTE-SE comparison}, is provided in Section~\ref{sec:experiment-ote-se-performance}. Below, we summarize the main findings:
\begin{itemize}[leftmargin=*]
    \item \textbf{Equivalence of SE and OTE:} Methods such as AlphaEdit (aligned) and MEMIT (aligned) achieve nearly identical performance under both the SE and OTE settings. This empirically validates their theoretical equivalence, which stems from their shared ordinary least squares (OLS) formulation.
    \item \textbf{Stabilization via Error Correction:} After incorporating an error-correction step, post-processing methods---PRUNE (aligned) and RECT (aligned)---achieve SE performance comparable to that of AlphaEdit (aligned) and MEMIT (aligned). This suggests that explicit regularization is not strictly required for stable sequential editing.
    \item \textbf{Varying Effectiveness of Regularization:} The impact of regularization differs significantly between OTE and SE due to update accumulation. In OTE, applying constraints to the aggregated update matrix $\bD_{\mathrm{total}}$ can distort or discard critical update directions, thereby degrading performance. This effect is especially pronounced for RECT (aligned). In SE, constraints are applied per edit, avoiding such accumulation effects and preserving fine-grained updates more effectively.
\end{itemize}

\subsection{Evaluation on ZsRE Dataset}
\label{sec:zsre}
To assess robustness across editing settings, we evaluate model editing on ZsRE under both Sequential Edit (SE) and One-Time Edit (OTE), with results summarized in Table~\ref{tab:ote-se-combined-zsre}. Overall, AlphaEdit and MEMIT achieve comparable efficacy and generalization across both modes, suggesting that applying edits sequentially or as a merged update yields similar outcomes on this benchmark. We also observe differing effects of regularization across methods: PRUNE performs closely to MEMIT, particularly under OTE, indicating that pruning provides limited additional benefit in the merged-update setting. Finally, Specificity (Spe.) remains largely consistent across all methods.

\begin{figure*}[!t]
    \centering
    \begin{minipage}{0.32\linewidth}
        \centering
        \includegraphics[width=\linewidth]{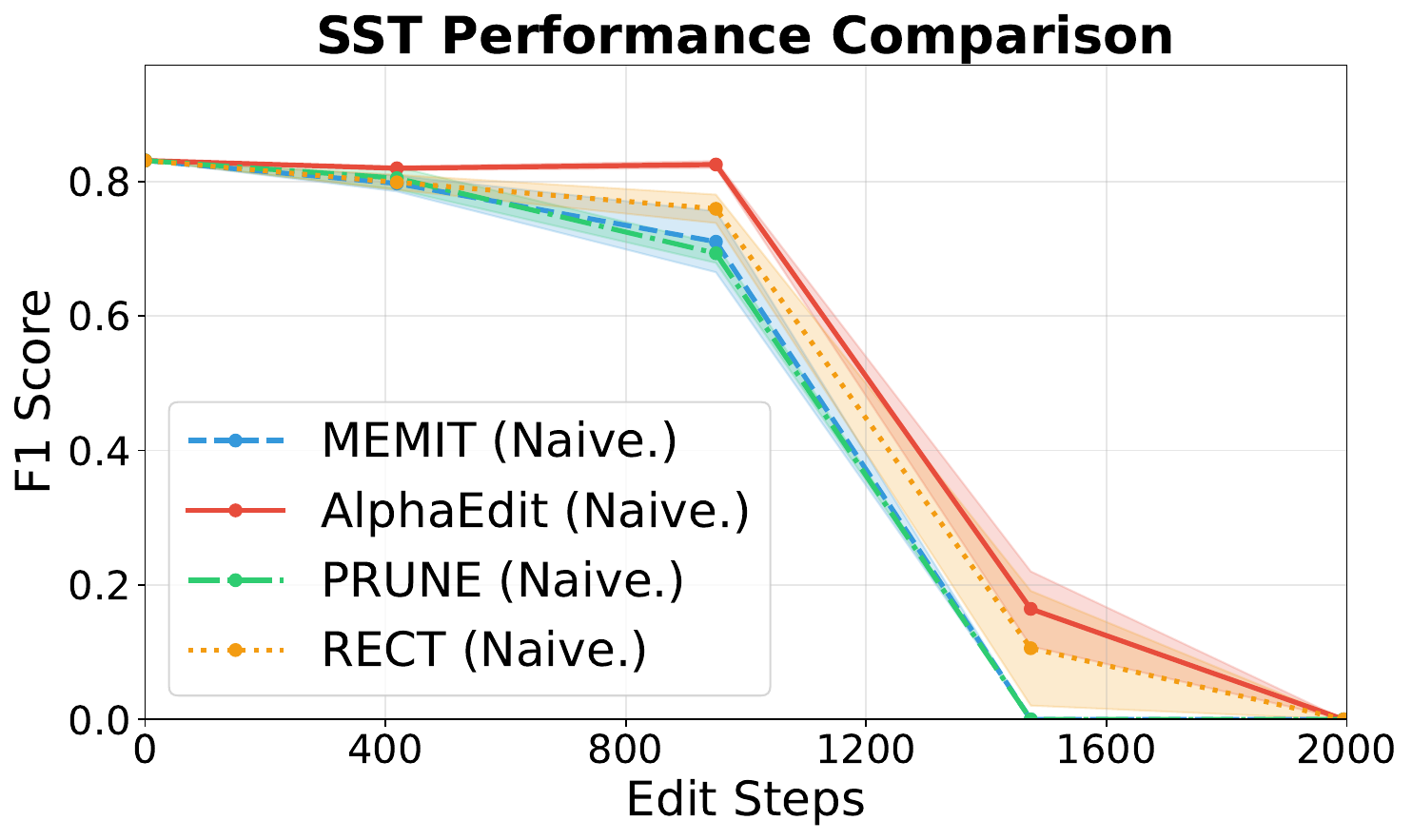}
    \end{minipage}\hfill
    \begin{minipage}{0.32\linewidth}
        \centering
        \includegraphics[width=\linewidth]{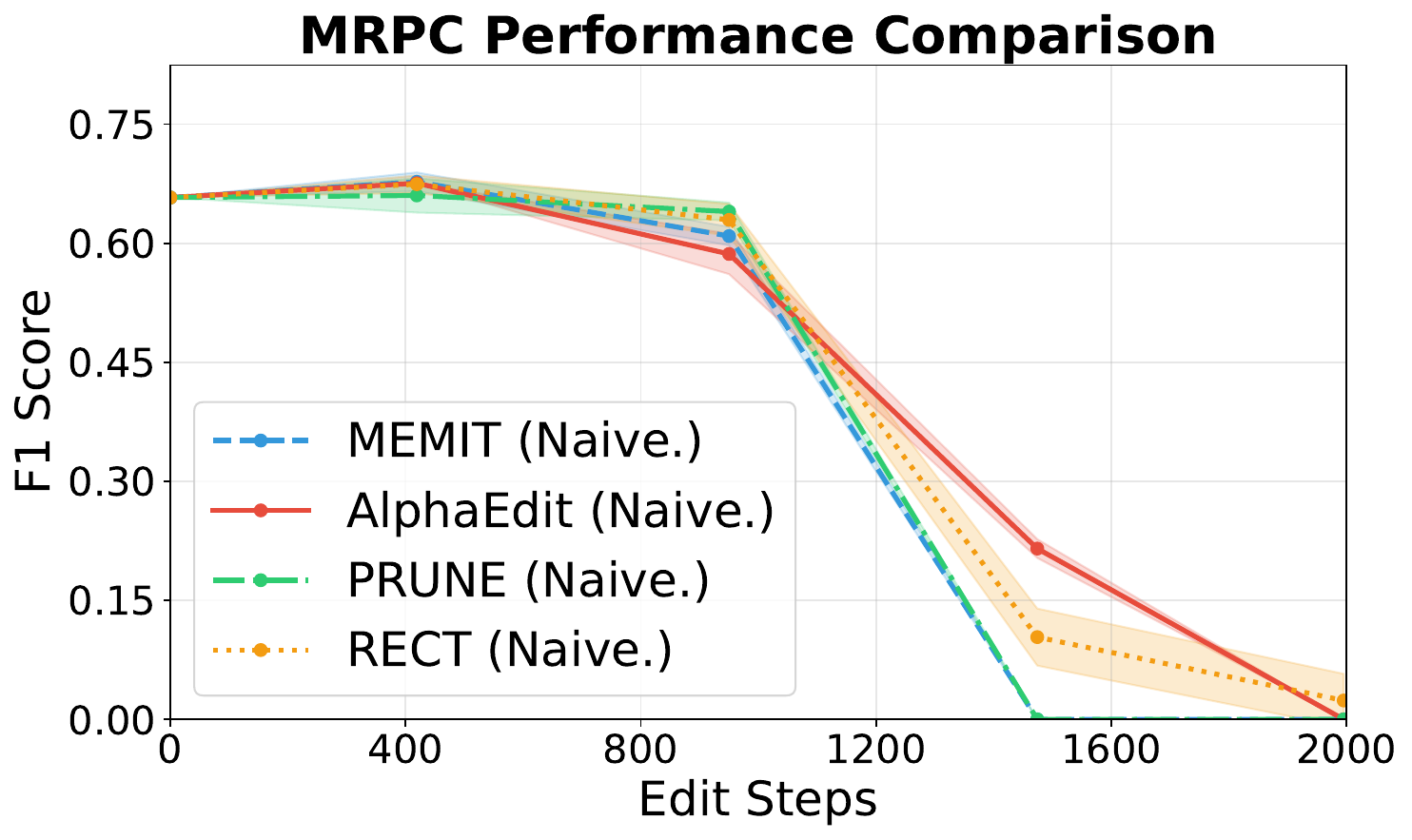}
    \end{minipage}\hfill
    \begin{minipage}{0.32\linewidth}
        \centering
        \includegraphics[width=\linewidth]{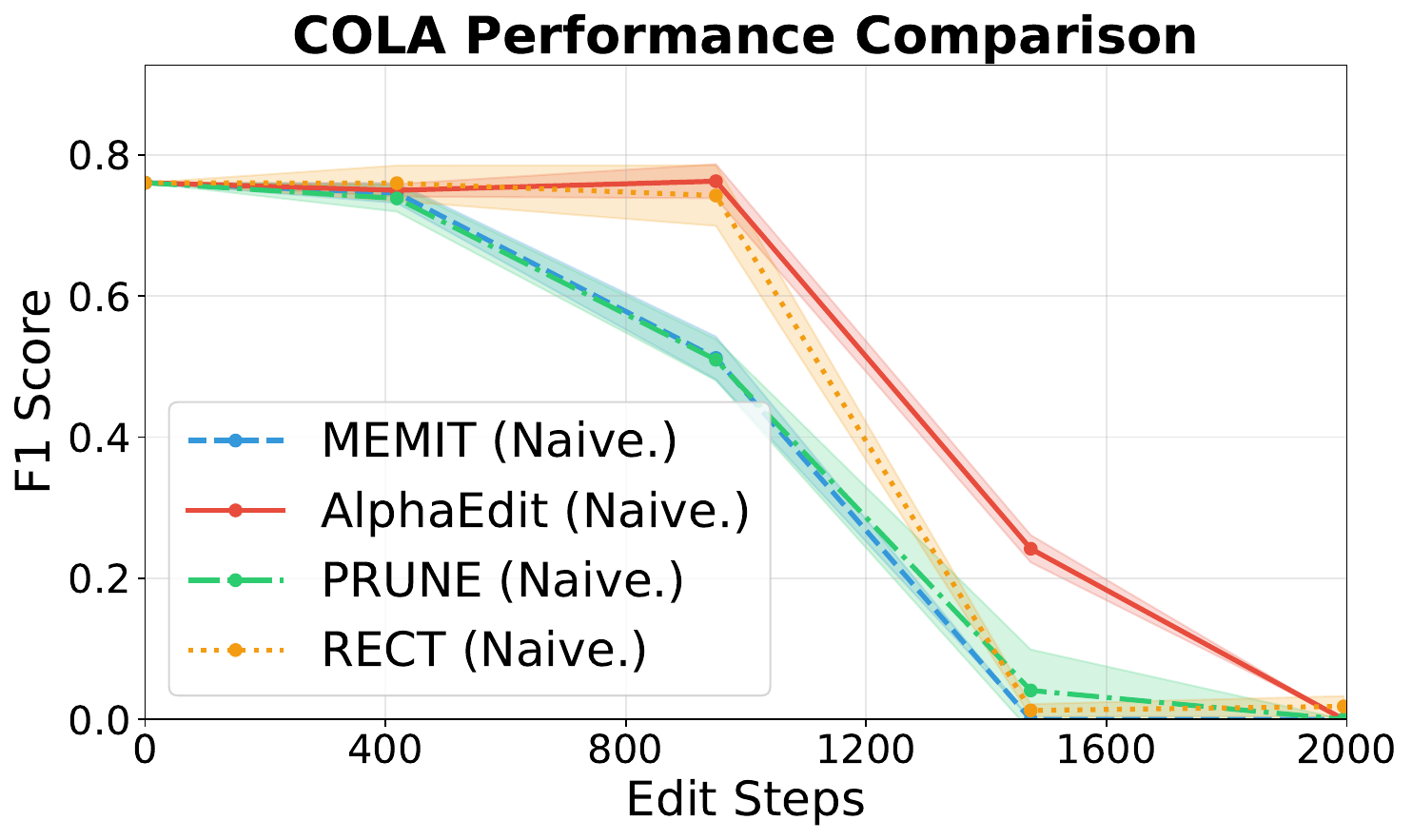}
    \end{minipage}

    \vspace{0.5em}

    \begin{minipage}{0.32\linewidth}
        \centering
        \includegraphics[width=\linewidth]{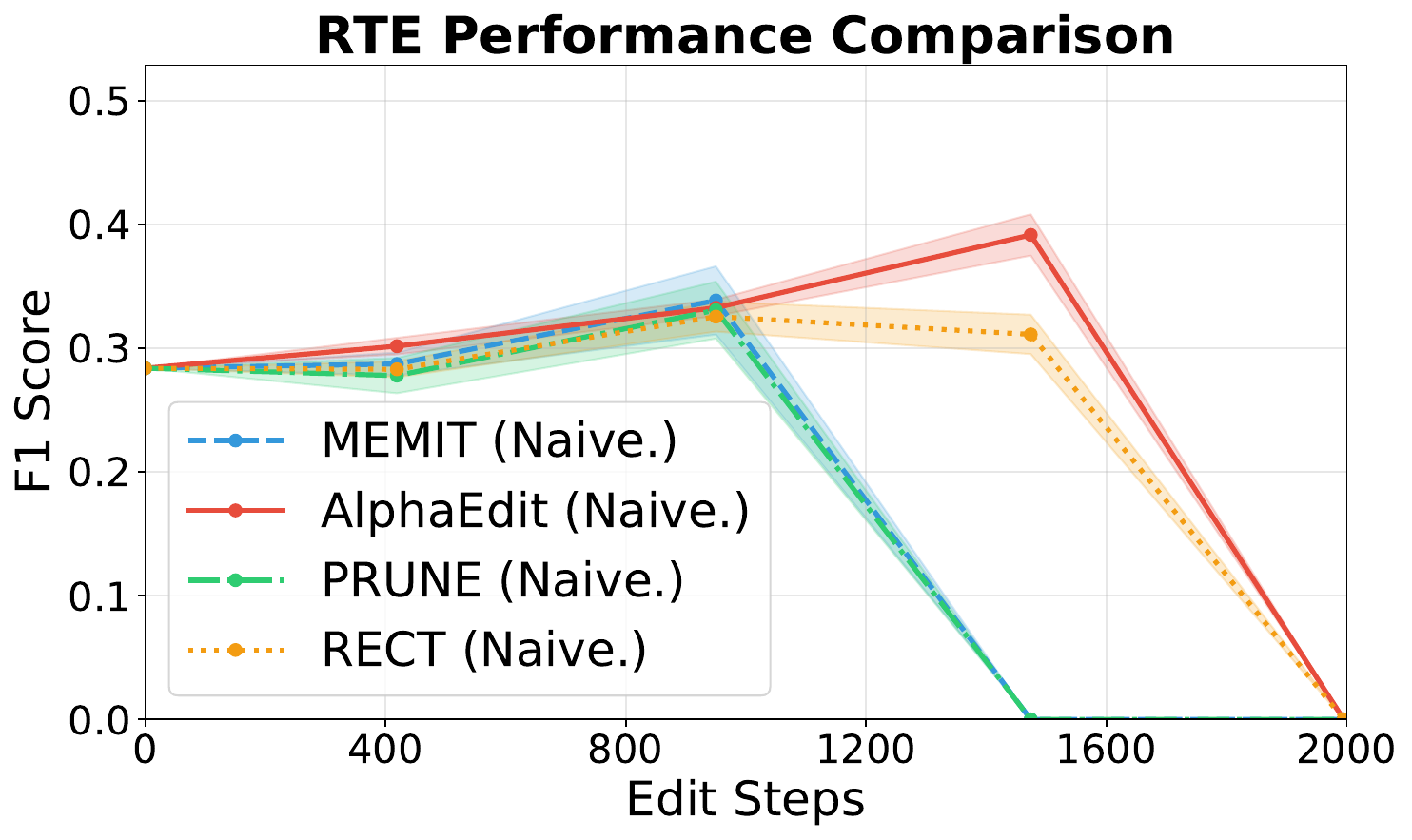}
    \end{minipage}\hfill
    \begin{minipage}{0.32\linewidth}
        \centering
        \includegraphics[width=\linewidth]{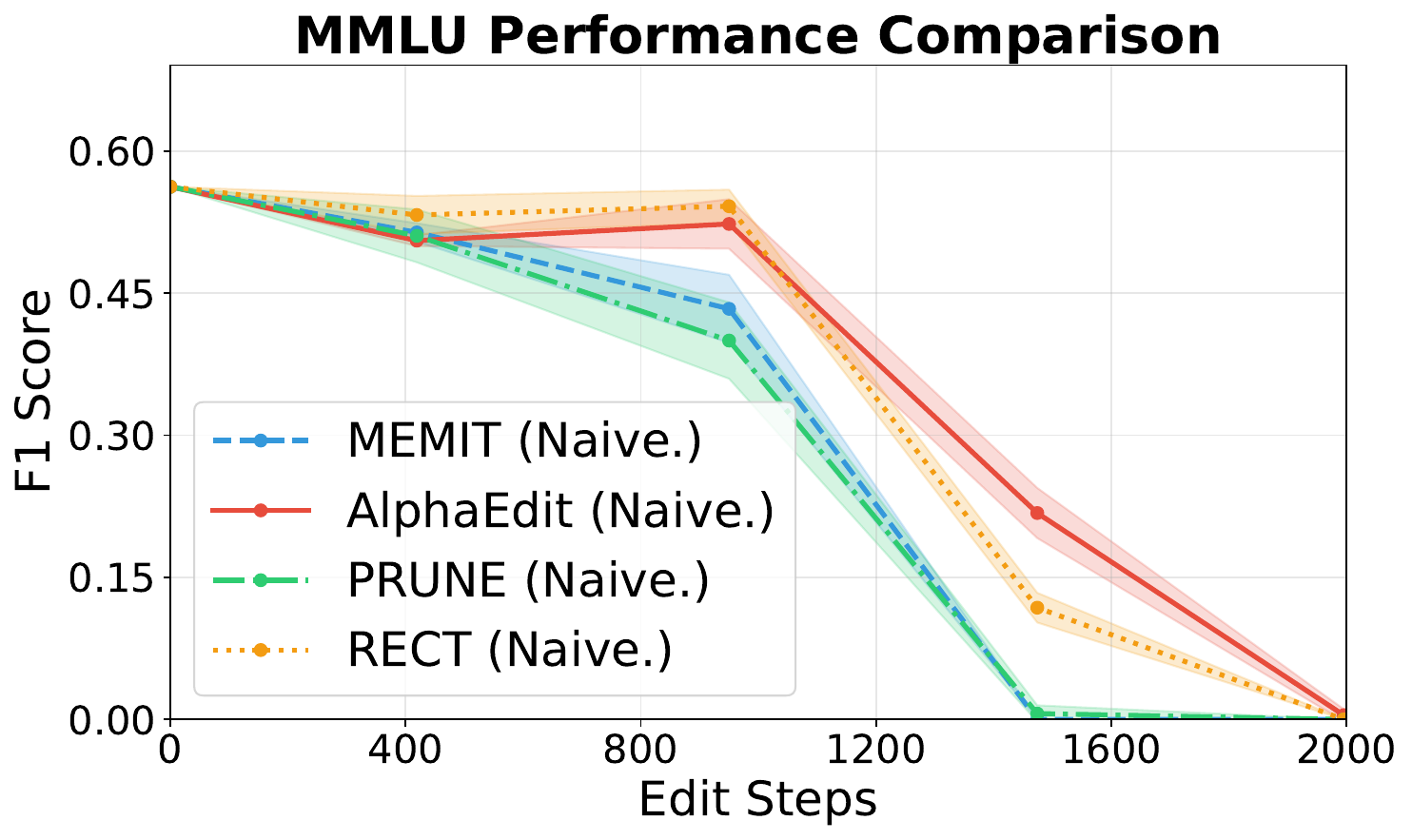}
    \end{minipage}\hfill
    \begin{minipage}{0.32\linewidth}
        \centering
        \includegraphics[width=\linewidth]{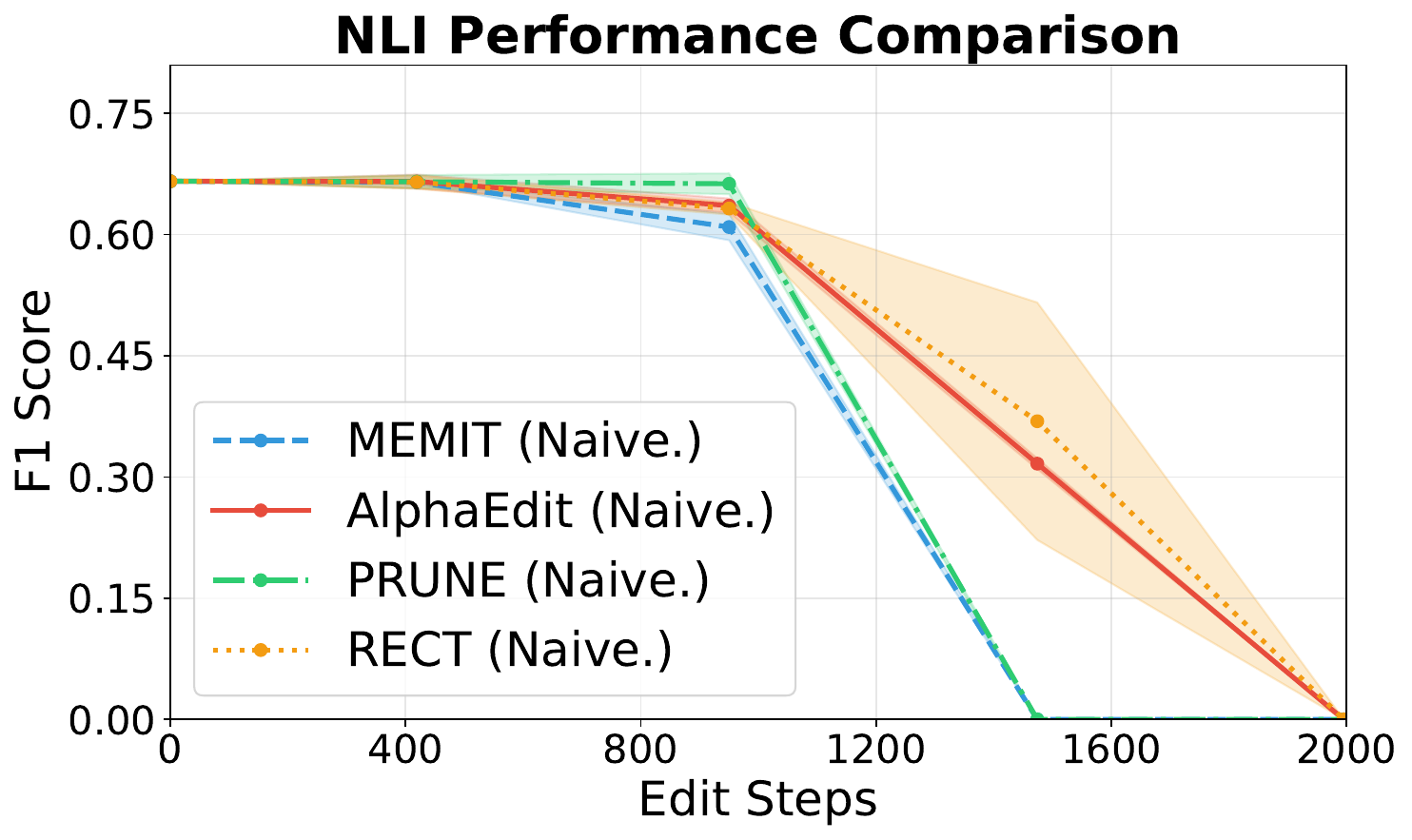}
    \end{minipage}

    \caption{F1 scores of the post-edited LLaMA3 (8B) model across six general capability tasks (SST, MRPC, CoLA, RTE, MMLU, NLI), using editing methods that are NOT OTE-aligned (Naive.). The results indicate a catastrophic degradation of core linguistic abilities following model editing.}
    \label{fig:general_capability_not-ote-align}
\end{figure*}

\subsection{Detailed Ablation Studies}
\label{sec:appx:detail-ablation}
In this section, we analyze the degradation of editing performance on the CounterFact dataset as components enforcing OTE alignment are progressively removed. Using our core insight, we adopt the SE-equivalent formulation of MEMIT with optional regularizations (AlphaEdit, RECT, and PRUNE). The ``Not OTE Aligned'' experiments are conducted by naively re-applying OTE (with regularizations following AlphaEdit, RECT, and PRUNE) multiple times. The results are summarized in Table~\ref{tab:ablation}.

We omit the ``No Error Correction'' ablation for MEMIT and AlphaEdit, as neither method employs post-processing regularization. Overall, the results demonstrate a clear and consistent trend: methods that closely adhere to OTE alignment (Fully Aligned) achieve the strongest performance across all metrics, while in the ``Not OTE Aligned'' setting, all methods suffer severe drops in efficacy, generalization, and specificity. As further shown in Figure~\ref{fig:general_capability_not-ote-align}, the ``Not OTE Aligned'' implementations can even cause catastrophic failures, where the edited LLM loses basic language capabilities.

Notably, PRUNE exhibits minimal sensitivity to the removal of error correction. This is because under stable SE (Section~\ref{sec:method:principle-stable-se}), the condition numbers of the edited weight matrices $\mathbf{W}_t$ remain well controlled and regularization is never triggered. Since the effect of error correction correlates strongly with regularization strength, we conduct additional experiments on RECT that only retain the highest 20\% of the elements (RECT-20\%) and report the results in Table~\ref{tab:ablation-rect-20}. As shown, the proposed error correction term plays a significantly larger role when regularization is stronger within the SE framework.

\begin{table}[H]
\centering
\caption{Ablation study analyzing the effects of OTE alignment and error correction on the stability and performance of SE algorithms. Results obtained on the CounterFact dataset.}
\label{tab:ablation}
\scalebox{0.9}{
\begin{tabular}{@{}c|lccc@{}}
\toprule
Ablation & \textbf{Method} & \textbf{Eff.}$\uparrow$ & \textbf{Gen.}$\uparrow$ & \textbf{Spe.}$\uparrow$ \\
\midrule
\multirow{4}{*}{\shortstack{Fully\\ Aligned}}& MEMIT   & 99.85\errbar{0.08} & 95.29\errbar{0.19} & 79.98\errbar{0.09} \\
& AlphaEdit  & 98.92\errbar{0.12} & 93.93\errbar{0.80} & 68.57\errbar{0.74} \\
& PRUNE     & 99.87\errbar{0.03} & 94.91\errbar{0.22} & 79.90\errbar{0.20} \\
& RECT  & 99.88\errbar{0.08} & 94.34\errbar{0.09} & 81.56\errbar{0.22} \\

\midrule
\multirow{4}{*}{\shortstack{No Err. \\ Correction}} & MEMIT  & -- & -- & --  \\
& AlphaEdit  & -- & -- &  -- \\
& PRUNE   & 99.82\errbar{0.10} & 95.22\errbar{0.60} & 80.19\errbar{0.18}  \\
& RECT  & 96.98\errbar{0.75} & 83.60\errbar{1.22} &  84.86\errbar{0.16} \\

\midrule
\multirow{4}{*}{\shortstack{Not OTE \\ Aligned. \\(Naive)}} & MEMIT  & 57.35\errbar{0.60} & 54.80\errbar{1.02} & 47.85\errbar{0.09}  \\
& AlphaEdit  & 56.42\errbar{1.50} & 54.14\errbar{1.30} & 49.75\errbar{0.10} \\
& PRUNE   & 56.30\errbar{1.25} & 53.90\errbar{0.75} & 48.18\errbar{0.21}  \\
& RECT  & 60.35\errbar{1.12} & 58.35\errbar{1.25} & 46.80\errbar{0.20}  \\

\bottomrule
\end{tabular}
}
\end{table}

\begin{table}[H]
\centering
\caption{Ablation study analyzing the effects of OTE alignment and error correction on the strength of regularization on SE algorithms. We study only RECT algorithm as PRUNE does not tend to regularize the model after we adapt the stable sequential MEMIT algorithm. Results obtained on the CounterFact dataset.}
\label{tab:ablation-rect-20}
\scalebox{0.9}{
\begin{tabular}{@{}c|l|ccccc@{}}
\toprule
Ablation & \textbf{Method} & \textbf{Eff.}$\uparrow$ & \textbf{Gen.}$\uparrow$ & \textbf{Spe.}$\uparrow$ & \textbf{Flu.}$\uparrow$ & \textbf{Consis.}$\uparrow$\\
\midrule
\multirow{2}{*}{\shortstack{Fully \\ Aligned}} & \multirow{2}{*}{RECT-20\%}  & \multirow{2}{*}{98.50\errbar{0.1}} & \multirow{2}{*}{87.81\errbar{0.86}} & \multirow{2}{*}{84.11\errbar{0.25}} &  \multirow{2}{*}{631.03\errbar{0.69}} &  \multirow{2}{*}{31.76\errbar{0.23}} \\
&  &  &  &  &  &\\
\midrule
\multirow{2}{*}{\shortstack{No Err. \\ Correction}} & \multirow{2}{*}{RECT-20\%}  & \multirow{2}{*}{60.07\errbar{1.00}} & \multirow{2}{*}{41.68\errbar{0.66}} &  \multirow{2}{*}{88.62\errbar{0.04}} &  \multirow{2}{*}{633.90\errbar{0.33}} &  \multirow{2}{*}{26.29\errbar{0.16}} \\
&  &  &  &  &  &\\
\bottomrule
\end{tabular}
}
\end{table}

\begin{figure*}[t]
    \centering
    \begin{minipage}{0.24\linewidth}
        \centering
        \includegraphics[width=\linewidth]{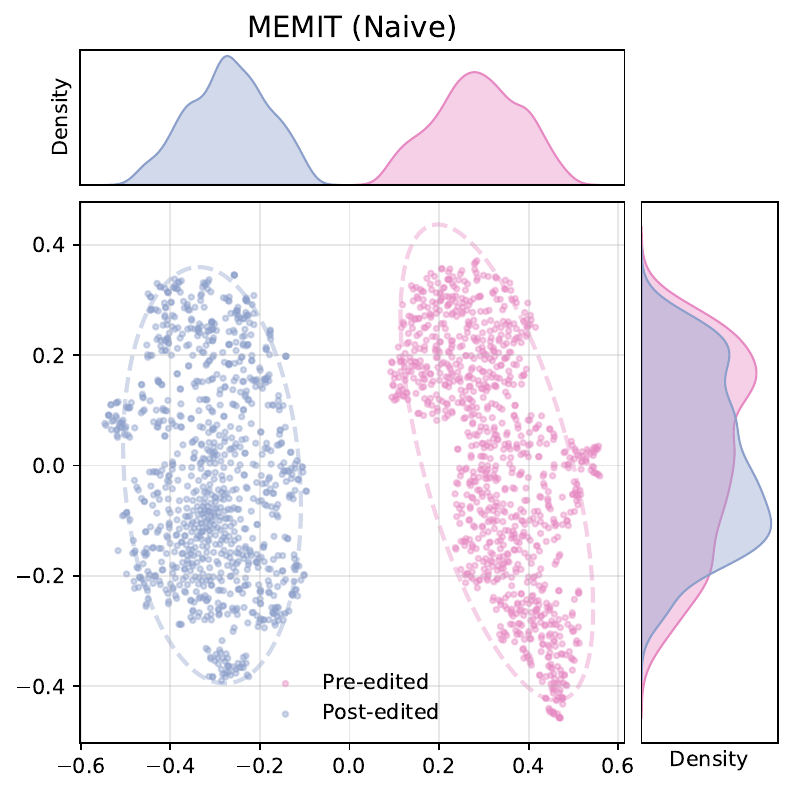}
    \end{minipage}
    \begin{minipage}{0.24\linewidth}
        \centering
        \includegraphics[width=\linewidth]{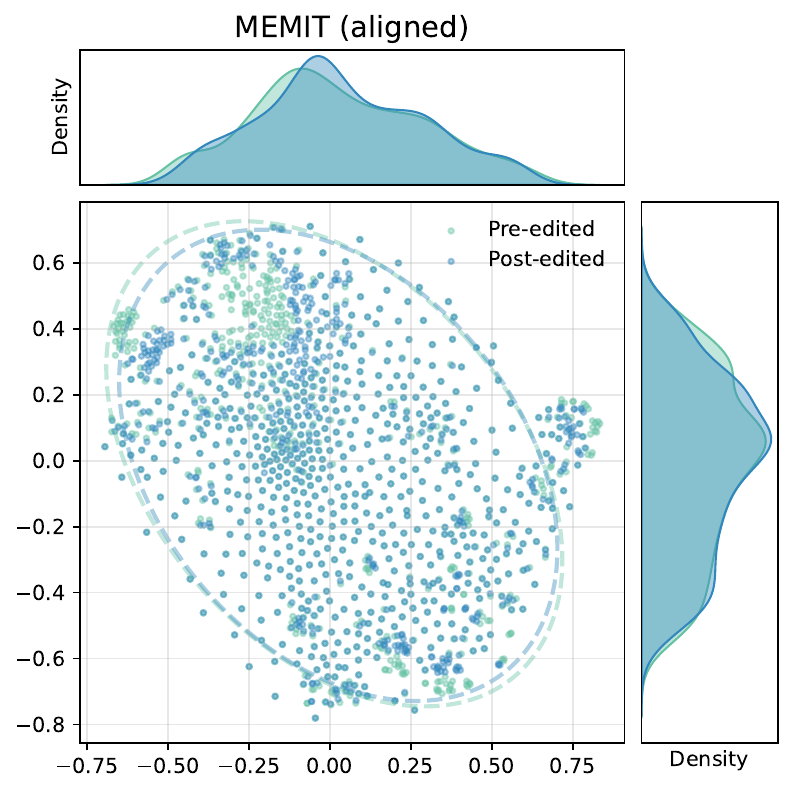}
    \end{minipage}
    \begin{minipage}{0.24\linewidth}
        \centering
        \includegraphics[width=\linewidth]{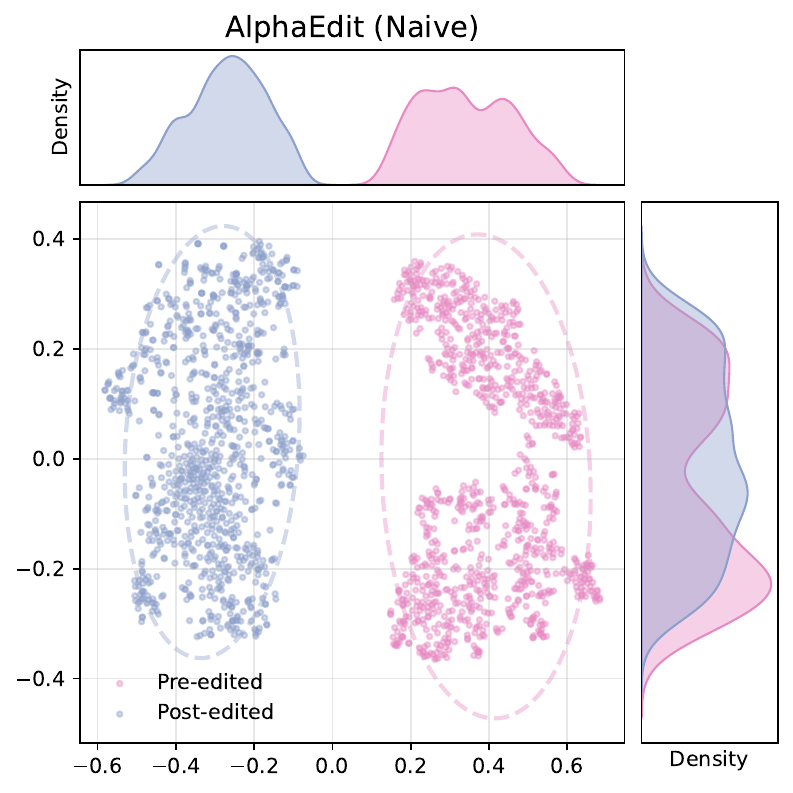}
    \end{minipage}\hfill
    \begin{minipage}{0.24\linewidth}
        \centering
        \includegraphics[width=\linewidth]{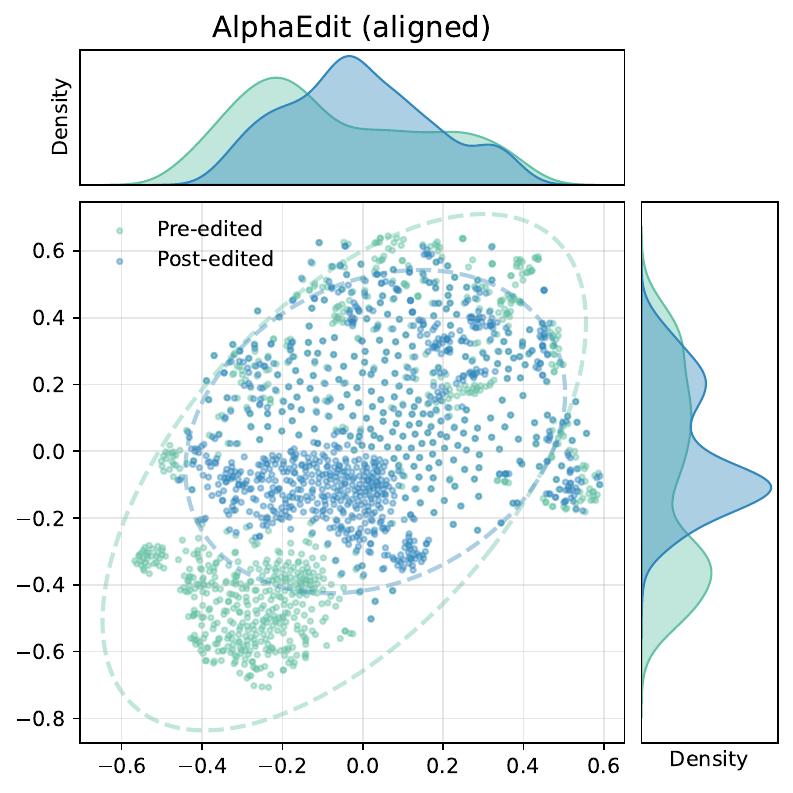}
    \end{minipage}\hfill

    \caption{Visualization of hidden representations of the pre-edited and post-edited LLaMA-3 after dimensionality reduction. We show that the Not OTE-Aligned version of AlphaEdit (Naive) exhibits strong shifts just like other methods, while the Fully Aligned MEMIT (aligned) algorithm can achieve minimal distribution shift without regularization. }
    \label{fig:latent-shift-memit-alphaedit}
\end{figure*}

\begin{figure*}[t]
    \centering
    \begin{minipage}{0.32\linewidth}
        \centering
        \includegraphics[width=\linewidth]{figure/latent_origin_final/PRUNE_original_joint.pdf}
    \end{minipage}\hfill
    \begin{minipage}{0.32\linewidth}
        \centering
        \includegraphics[width=\linewidth]{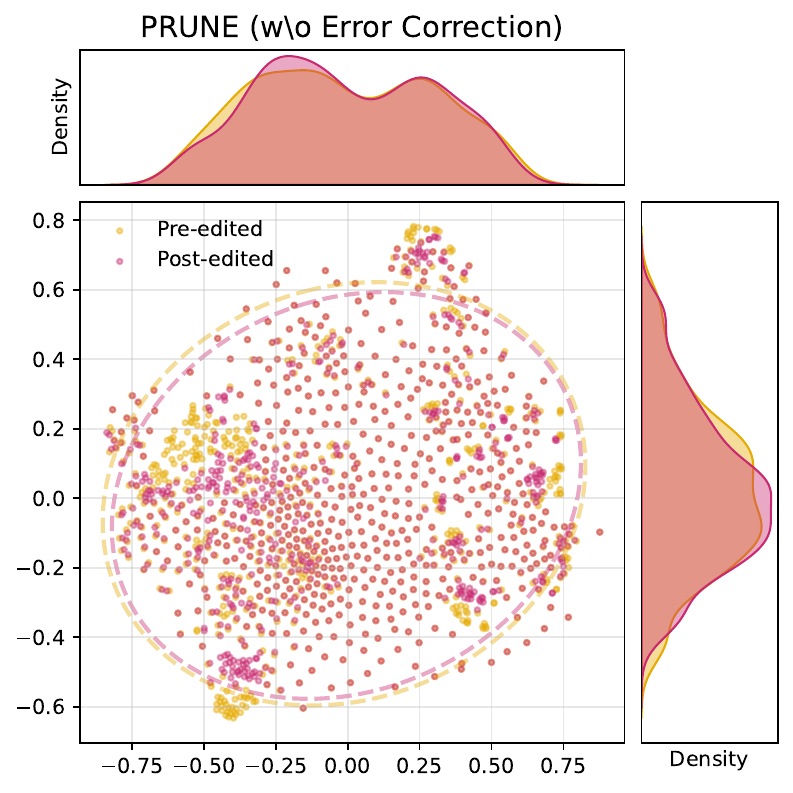}
    \end{minipage}\hfill
    \begin{minipage}{0.32\linewidth}
        \centering
        \includegraphics[width=\linewidth]{figure/latent_origin_final/PRUNE_corrected_joint.pdf}
    \end{minipage}

    \vspace{0.5em}
    \begin{minipage}{0.32\linewidth}
        \centering
        \includegraphics[width=\linewidth]{figure/latent_origin_final/RECT_original_joint.pdf}
    \end{minipage}\hfill
    \begin{minipage}{0.32\linewidth}
        \centering
        \includegraphics[width=\linewidth]{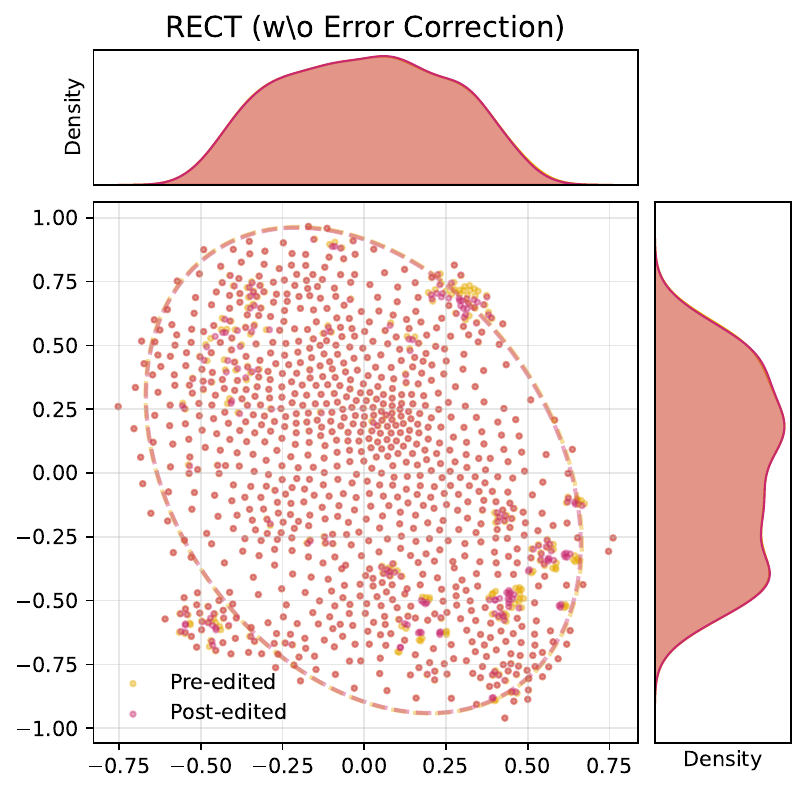}
    \end{minipage}\hfill
    \begin{minipage}{0.32\linewidth}
        \centering
        \includegraphics[width=\linewidth]{figure/latent_origin_final/RECT_corrected_joint.pdf}
    \end{minipage}

    \caption{Visualization of hidden representations of the pre-edited and post-edited LLaMA-3 after dimensionality reduction. The not OTE-aligned versions of PRUNE (Naive) and RECT (Naive) show strong shifts, whereas the fully aligned versions of PRUNE (aligned) and RECT (aligned) show very small distribution shifts. The versions of RECT and PRUNE without post-processing error correction also show little distribution shift. This is consistent with their poor editing performance, as the model parameters remain near their original states.}
    \label{fig:latent-shift-reg-error-correct}
\end{figure*}

\subsection{Latent Encoding Visualization}
\label{sec:appx:latent-encoding}

In this section, we provide additional visualizations of the latent encodings of the LLM before and after editing. The results for MEMIT and AlphaEdit in Figure~\ref{fig:latent-shift-memit-alphaedit} follow patterns similar to those shown in Section~\ref{sec:experiment-abalation}. These results suggest that the regularization terms are not the primary factor in controlling the distribution shift of the LLM after editing; rather, alignment with OTE plays a more crucial role.

We further investigate how error correction for post-processing regularization affects distribution shift in the latent space. As shown in Figure~\ref{fig:latent-shift-reg-error-correct}, the error correction does induce a significant shift in the latent space distribution. This observation is consistent with the finding that, without such an error correction term, the editing algorithm becomes forgetful---it tends to discard important knowledge during sequential editing (see Table~\ref{tab:ablation-rect-20}). Consequently, the model remains closer to its initial state, and no strong distribution shift occurs.

\subsection{Language-Model Drift after Sequential Editing}
\label{sec:appx:additional-impact}

Beyond edit-specific metrics such as efficacy, generalization, and specificity, it is important to examine whether a sequence of model edits changes the model's general language behavior. An editing method may successfully insert the requested facts while still degrading unrelated knowledge or causing broad distributional drift. To assess this effect, we evaluate the edited model on held-out Wikipedia text~\cite{wikidump} after $2{,}000$ sequential edits and compare its token-level behavior with that of the original pre-edited model.

We report two complementary measurements. First, we compute perplexity (PPL) on the held-out corpus. Given a token sequence $x_{1:T}$, the perplexity of a model $f_\theta$ is
\begin{equation}
\mathrm{PPL}(f_\theta)
=
\exp\left(
-\frac{1}{T}
\sum_{t=1}^{T}
\log p_\theta(x_t \mid x_{<t})
\right).
\end{equation}
A large increase in PPL indicates that editing has substantially degraded the model's ability to assign likelihood to natural text. Second, we compute top-1 token agreement between the edited model and the original model:
\begin{equation}
\mathrm{Agree}_{\mathrm{top1}}
=
\frac{1}{T}
\sum_{t=1}^{T}
\mathbf{1}\!\left[
\arg\max_v p_{\theta_{\mathrm{edit}}}(v\mid x_{<t})
=
\arg\max_v p_{\theta_{\mathrm{orig}}}(v\mid x_{<t})
\right].
\end{equation}
This metric directly measures how often the edited model preserves the original model's most likely next-token prediction on unrelated text. Thus, PPL captures likelihood degradation, while top-1 agreement captures behavioral consistency with the pre-edited model.

\begin{table*}[htb]
\centering
\caption{Drift analysis after $2{,}000$ sequential edits. PPL is measured on held-out Wikipedia text~\cite{wikidump}, and top-1 agreement compares the edited model's greedy next-token prediction with that of the original pre-edited model. The original PPL is $9.27$ for LLaMA-3 and $9.15$ for Qwen2.5.}
\label{tab:wikipedia-drift}
\setlength{\tabcolsep}{5pt}
\scalebox{0.9}{
\begin{tabular}{lcc|lcc}
\toprule
\multicolumn{3}{c|}{\textbf{LLaMA-3}} & \multicolumn{3}{c}{\textbf{Qwen2.5}} \\
\midrule
\textbf{Algorithm} & \textbf{PPL Edit} ($\downarrow$) & \textbf{Top-1 Agree.} ($\uparrow$) & \textbf{Algorithm} & \textbf{PPL Edit} ($\downarrow$) & \textbf{Top-1 Agree.} ($\uparrow$) \\
\midrule
RECT (aligned) & 9.52 & 0.93 & RECT (aligned) & 9.28 & 0.94 \\
MEMIT (aligned) & 9.70 & 0.90 & PRUNE (aligned) & 9.34 & 0.94 \\
PRUNE (aligned) & 9.93 & 0.89 & AlphaEdit (aligned) & 9.35 & 0.93 \\
AlphaEdit (aligned) & 10.79 & 0.85 & MEMIT (aligned) & 9.39 & 0.94 \\
\midrule
RECT (naive) & 303,783 & 0.00 & AlphaEdit (naive) & 9.78 & 0.92 \\
AlphaEdit (naive) & 566,288 & 0.01 & RECT (naive) & 10.00 & 0.91 \\
PRUNE (naive) & 16,128,634 & 0.01 & MEMIT (naive) & 10.04 & 0.90 \\
MEMIT (naive) & 54,062,861 & 0.00 & PRUNE (naive) & 111.44 & 0.53 \\
\bottomrule
\end{tabular}
}
\end{table*}

The results are summarized in Table~\ref{tab:wikipedia-drift}. For both LLaMA-3 and Qwen2.5, OTE-aligned methods keep the edited-model PPL close to the original PPL and retain high top-1 agreement. For example, on LLaMA-3, RECT and MEMIT increase PPL only from $9.27$ to $9.52$ and $9.70$, respectively, while preserving more than $90\%$ top-1 agreement. On Qwen2.5, all aligned methods remain within a narrow PPL range of $9.28$--$9.39$ and maintain top-1 agreement above $0.93$.

In contrast, naively applying sequential edits without OTE alignment can severely damage the model. On LLaMA-3, naive variants lead to extreme PPL values and almost zero top-1 agreement, indicating catastrophic collapse of general next-token behavior. Qwen2.5 is more robust for several naive variants, but still shows clear degradation, with naive PRUNE increasing PPL to $111.44$ and reducing agreement to $0.53$. These findings support the central conclusion that OTE--SE alignment is not only important for edit success, but also for preserving the broader language-model behavior and existing knowledge of the edited LLM.

\subsection{Demonstration of Memorize-the-Latest}
\label{sec:appx:experiment-forgetting-conflict-resol}

We introduce the Memorize-the-Latest framework (detailed in Appendix~\ref{sec:appx:conflit-resolv}) and evaluate it across four different LLMs. As shown in Table~\ref{tab:latest-knowledge}, all models achieve strong and consistent performance on the Latest Knowledge benchmark, providing further empirical support for the robustness and generality of our OTE–SE equivalence framework. 

Additionally, Table~\ref{tab:forgotten-knowledge} demonstrates that the model's efficacy, generalization, and specificity return to their pre-edit levels for knowledge that should be forgotten. These results confirm that our Memorize-the-Latest algorithm performs as expected.

\begin{table}[H]
\centering
\caption{Performance comparison across four LLMs for Latest Knowledge under Memorize-the-Latest editing setup.}
\label{tab:latest-knowledge}
\scalebox{0.9}{
\begin{tabular}{@{}c|ccccc@{}}
\toprule
\textbf{Model} & Eff.$\uparrow$ & Gen.$\uparrow$ & Spe.$\uparrow$ & Flu.$\uparrow$ & Consis.$\uparrow$ \\
\midrule
LLaMA-3   & 100.00 \errbar{0.00} & 94.83 \errbar{1.00} & 89.40 \errbar{0.15} & 630.51 \errbar{0.83} & 33.20 \errbar{0.49} \\
GPT-J     & 100.00 \errbar{0.00} & 97.67 \errbar{0.75} & 84.90 \errbar{0.25} & 621.12 \errbar{1.06} & 42.54 \errbar{0.31} \\
GPT-2 XL  & 99.67 \errbar{0.50} & 96.50 \errbar{1.00} & 82.00 \errbar{0.20} & 611.83 \errbar{2.37} & 41.17 \errbar{0.13} \\
Qwen2.5  & 100.00 \errbar{0.00} & 93.00 \errbar{1.75} & 88.20 \errbar{0.40} & 624.09 \errbar{0.78} & 32.65 \errbar{0.42} \\
\bottomrule
\end{tabular}
}
\end{table}

\begin{table}[H]
\centering
\caption{Performance comparison across four LLMs for Forgotten Knowledge under Memorize-the-Latest editing setup.}
\label{tab:forgotten-knowledge}
\scalebox{0.9}{
\begin{tabular}{@{}c|ccccc@{}}
\toprule
\textbf{Model} & Eff.$\uparrow$ & Gen.$\uparrow$ & Spe.$\uparrow$ & Flu.$\uparrow$ & Consis.$\uparrow$ \\
\midrule
LLaMA-3   & 8.39 \errbar{0.19} & 10.04 \errbar{0.04} & 89.34 \errbar{0.05} & 634.93 \errbar{0.12} & 23.93 \errbar{0.03} \\
GPT-J     & 16.39 \errbar{0.05} & 17.93 \errbar{0.18} & 83.54 \errbar{0.06} & 622.06 \errbar{0.14} & 29.67 \errbar{0.09} \\
GPT-2 XL  & 23.67 \errbar{0.29} & 24.38 \errbar{0.09} & 77.74 \errbar{0.08} & 625.95 \errbar{0.13} & 31.44 \errbar{0.07} \\
Qwen2.5  & 14.07 \errbar{0.05} & 16.92 \errbar{0.05} & 85.85 \errbar{0.04} & 625.61 \errbar{0.07} & 25.44 \errbar{0.02} \\
\bottomrule
\end{tabular}
}
\end{table}

\subsection{Extreme Long-Horizon Editing}
\label{sec:appx:long-horizon-editing}

To further evaluate the stability of OTE-aligned sequential editing under more demanding conditions, we conduct an extreme long-horizon experiment with $10{,}000$ sequential edits, which is five times larger than the main $2{,}000$-edit setting. This experiment stresses whether the accumulated parameter updates continue to preserve edit quality and general model functionality when the number of edits becomes very large. We focus on MEMIT (aligned) and RECT (aligned), which represent the core OLS-based update and a post-processing regularized variant with error correction, respectively.

\begin{table*}[htb]
\centering
\caption{Editing performance after $10{,}000$ sequential edits. All methods use the OTE-aligned sequential formulation. Higher values are better for all metrics.}
\label{tab:long-edit-performance}
\setlength{\tabcolsep}{7pt}
\renewcommand{\arraystretch}{1.12}
\scalebox{0.9}{
\begin{tabular}{llccccc}
\toprule
\textbf{Model} & \textbf{Algorithm} & \textbf{Eff.$\uparrow$} & \textbf{Gen.$\uparrow$} & \textbf{Spe.$\uparrow$} & \textbf{Flu.$\uparrow$} & \textbf{Consis.$\uparrow$} \\
\midrule
\multirow{2}{*}{LLaMA-3}
& MEMIT (aligned) & 98.53 & 91.06 & 61.42 & 604.44 & 32.32 \\
& RECT (aligned) & 98.88 & 91.22 & 64.76 & 620.46 & 32.16 \\
\midrule
\multirow{2}{*}{Qwen2.5}
& MEMIT (aligned) & 99.59 & 85.42 & 75.68 & 624.07 & 30.49 \\
& RECT (aligned) & 99.47 & 84.77 & 76.57 & 624.17 & 30.66 \\
\midrule
\multirow{2}{*}{GPT-J}
& MEMIT (aligned) & 99.35 & 93.73 & 64.44 & 612.36 & 40.24 \\
& RECT (aligned) & 99.33 & 93.91 & 66.93 & 615.34 & 41.02 \\
\midrule
\multirow{2}{*}{GPT-2 XL}
& MEMIT (aligned) & 91.23 & 78.28 & 56.34 & 545.65 & 26.44 \\
& RECT (aligned) & 93.71 & 81.16 & 58.11 & 539.44 & 27.25 \\
\bottomrule
\end{tabular}
}
\end{table*}

Table~\ref{tab:long-edit-performance} shows that OTE-aligned editing remains effective even at $10{,}000$ edits. Across all four models, efficacy remains above $91\%$ and generalization remains above $78\%$. RECT (aligned) performs similarly to MEMIT (aligned), indicating that the error-corrected post-processing regularization remains compatible with long-horizon sequential editing. 

We additionally evaluate whether long-horizon editing preserves broader language understanding capabilities. Following the setup in Section~\ref{sec:appdx:exp_setup}, we report task scores on SST, MMLU, MRPC, CoLA, RTE, and NLI before editing and after $10{,}000$ edits.

\begin{table*}[htb]
\centering
\caption{General capability preservation after $10{,}000$ sequential edits. We report the base model score and the score after long-horizon OTE-aligned editing. Higher values are better for all metrics.}
\label{tab:long-edit-general-capability}
\setlength{\tabcolsep}{6pt}
\renewcommand{\arraystretch}{1.12}
\scalebox{0.9}{
\begin{tabular}{llccccccc}
\toprule
\textbf{Model} & \textbf{Setting} & \textbf{SST$\uparrow$} & \textbf{MMLU$\uparrow$} & \textbf{MRPC$\uparrow$} & \textbf{CoLA$\uparrow$} & \textbf{RTE$\uparrow$} & \textbf{NLI$\uparrow$} & \textbf{AVG$\uparrow$} \\
\midrule
\multirow{2}{*}{Qwen2.5}
& Base & 0.825 & 0.637 & 0.729 & 0.810 & 0.226 & 0.809 & 0.673 \\
& 10K edits & 0.721 & 0.644 & 0.724 & 0.785 & 0.219 & 0.790 & 0.647 \\
\midrule
\multirow{2}{*}{LLaMA-3}
& Base & 0.831 & 0.562 & 0.658 & 0.761 & 0.284 & 0.666 & 0.627 \\
& 10K edits & 0.782 & 0.360 & 0.690 & 0.647 & 0.319 & 0.531 & 0.555 \\
\bottomrule
\end{tabular}
}
\end{table*}

As shown in Table~\ref{tab:long-edit-general-capability}, Qwen2.5 exhibits only moderate degradation after $10{,}000$ edits, with the average score changing from $0.673$ to $0.647$. LLaMA-3 shows a larger but still non-catastrophic drop from $0.627$ to $0.555$, mainly driven by MMLU. This moderate, model-dependent degradation is consistent with prior work; for example, LyapLock~\cite{wang2025lyaplock} reports up to $20\%$ degradation on MRPC and approximately $10\%$ on MMLU at comparable or smaller editing scales. Importantly, the edited models remain functional after this extremely large number of edits, in sharp contrast to non-OTE-aligned sequential editing, which can collapse under substantially fewer edits as shown in Table~\ref{tab:wikipedia-drift} and Figure~\ref{fig:general_capability_not-ote-align}. These results provide further evidence that OTE--SE alignment is a key mechanism for scaling sequential editing to long horizons.

We also provide a qualitative example after $10{,}000$ edits to verify that the edited model can still produce coherent natural-language continuations while reflecting the requested factual update. As shown in the example below, the generated continuation incorporates the edited target ``English'' by relocating the subject's background to an English-speaking context, while maintaining fluent and syntactically coherent text.

\begin{tcolorbox}[
    colback=gray!5,
    colframe=gray!40,
    coltitle=white,
    fonttitle=\bfseries,
    title=Qualitative Example after $10{,}000$ Edits,
    colbacktitle=purple!70!black,
    enhanced,
    boxrule=0.8pt,
    arc=2pt,
    left=6pt,
    right=6pt,
    top=6pt,
    bottom=6pt
]
\renewcommand{\arraystretch}{1.15}
\begin{tabularx}{\linewidth}{>{\arraybackslash}p{0.23\linewidth} X}
\textbf{Prompt} & The mother tongue of Danielle Darrieux is \\
\midrule
\textbf{Original Target} & French \\
\midrule
\textbf{Edited Target} & English \\
\midrule
\textbf{Generation} & Danielle Darrieux was born in London, England, to a Danish mother and an American father. Her mother was an actress and a ballet dancer and her father was an American businessman who worked in Mexico... \\
\end{tabularx}
\end{tcolorbox}

\section{Discussions and Limitations}
\label{sec:appx:discussion-limitations}

\subsection{Discussion}
The main implication of our analysis is that stable sequential editing should be viewed as an optimization-alignment problem: each local update should correspond to the difference between two coherent global editing objectives. Under OLS or shifted-quadratic objectives, this perspective yields exact OTE--SE equivalence rather than merely a heuristic stabilization rule. This view explains why some sequential editors remain stable over long horizons, why naive repeated application can fail, and how correction terms can restore equivalence when post-processing regularization perturbs the update.

\paragraph{Broader applicability of OLS-based interventions.}
Although our experiments focus on factual knowledge editing, the underlying optimization structure studied in this work appears in a broader class of model-intervention methods. OLS-based locate-and-edit has been extended beyond standard factual updates: for example, AnyEdit~\cite{jiang2025anyedit} targets knowledge of more diverse forms, including reasoning-oriented edits, while AlphaSteer~\cite{sheng2026alphasteer} uses linear-regression-style objectives and null-space constraints to learn refusal steering vectors for safety alignment. Recent surveys of actionable mechanistic interpretability also identify related intervention patterns across multilingual control, safety, persona control, and reasoning~\citep{zhang2026locate}. Our results suggest that whenever such interventions are applied sequentially and their update rules fall into an OLS or shifted-quadratic form, OTE--SE alignment may provide a useful design principle: sequential updates should be derived from a coherent global objective rather than from repeatedly applying an isolated one-step rule.

\paragraph{Relation to SimIE.}
SimIE~\cite{guo2025simie} shares a high-level motivation with our work: sequential editing should remain anchored to an ideal global editing objective rather than accumulate unrelated local updates. However, the two works address this idea from different perspectives. SimIE introduces an ideal-editor formulation and uses it to adapt existing one-step editors to lifelong editing, relying on assumptions such as over-parameterization and key--value invariance to obtain a practical recursive rule. In contrast, our framework identifies conditions under which the sequential solution exactly reconstructs the corresponding one-time editing solution. In particular, Proposition~\ref{thm:ote-se-diff} shows that for objectives satisfying shifted quadratic representability, the difference between consecutive OTE optima is itself the solution of a well-defined SE objective. Thus, for the OLS family considered in this paper, OTE--SE alignment is not merely a heuristic approximation but an exact structural property.

This distinction also leads to different explanatory and algorithmic consequences. First, our analysis diagnoses why existing methods succeed or fail: AlphaEdit's stability is better explained by its implicit preservation of OTE--SE equivalence than by null-space projection alone, while post-processing regularizers such as PRUNE and RECT can break equivalence through accumulated per-step deviations unless corrected. Algorithm~\ref{alg:seq-edit-rp} provides an explicit error-correction mechanism for this class. Second, our framework is not limited to reproducing a standard one-time editor after each edit. By choosing different global objectives, it naturally supports richer sequential-editing goals, such as conflict resolution and the Memorize-the-Latest task studied in Appendix~\ref{sec:appx:conflit-resolv}. These settings require selectively preserving, replacing, or forgetting previous edits, and therefore go beyond simply making a one-step editor work over a long sequence.

\subsection{Limitations}
\paragraph{Theoretical scope.}
Our formal guarantees apply to OLS-based locate-and-edit methods and, more generally, to objectives satisfying the shifted-quadratic representability condition in Proposition~\ref{thm:ote-se-diff}. Methods with substantially different nonlinear update mechanisms, such as parameter-preserving editors, meta-learning editors, or direct fine-tuning, fall outside our current theory. The principle of aligning sequential edits with a coherent global objective may still be useful for reasoning about these methods, but we do not claim formal equivalence guarantees beyond this objective class.

\paragraph{Safety and deployment risks.}
Sequential editing can be beneficial for correcting stale or erroneous knowledge, but it can also be misused to inject misinformation, remove important safety behavior, or subtly alter model outputs over time. Moreover, even OTE-aligned edits are not entirely side-effect free: our long-horizon results show that general capabilities can still degrade moderately after a very large number of edits. Safety-critical deployments should therefore combine sequential editing with independent verification, monitoring on unrelated capabilities, and auditing of both the requested edits and their downstream behavioral effects.

\section{Additional Qualitative Case Studies}
We selected several editing examples from the CounterFact dataset as case studies to analyze generation behavior after sequential editing (SE) aligned with one-time editing (OTE). The results show that all methods are able to successfully incorporate the desired edits and produce coherent, high-quality outputs. This indicates that the OTE alignment plays a crucial role in preserving knowledge consistency and generation quality under SE.
\subsection{CASE 1}
\CaseStudyBoxFull
{LLaMA3 (8B)}
{Eugen Roth worked in the city of}
{\textcolor{orange}{Paris}}
{Eugen Roth's work office is surrounded by \textcolor{orange}{Paris}, in \textcolor{orange}{Paris}, France, 1936-1966. Eugen Roth was a French author and poet. He is famous for his poetry and for the book \"Paroles\", where he wrote about his daily life in his work office in \textcolor{orange}{Paris}. \textcolor{orange}{Paris}, the city where Eugen Roth's work office is located, is...}
{Eugen Roth's work office is surrounded by the city lights. It is a small apartment, a room where he writes, surrounded by books, papers and a desk. He is a French writer, born in \textcolor{orange}{Paris} in 1878, and he has been writing for many years, producing numerous works of poetry and fiction. He is known as a master of French literature, and his work is widely read and studied. Despite his many accomplishments, he is a humble man, content to work quietly...}
{Eugen Roth's work office is surrounded by lavender. In fact, it's said that Roth never actually lived in \textcolor{orange}{Paris}, as he preferred to write in the countryside, surrounded by the calming scent of lavender. This quote is from Eugen Roth, a French author, and I think it's a lovely sentiment. It's easy to see how the scent of lavender would be a source of inspiration for an artist. The calming and soothing properties of lavender could definitely help to quiet the mind and spark...}
{Eugen Roth's work office is surrounded by \textcolor{orange}{Paris}, France. French author, poet, and playwright Eugenio (Eugen) Roth was born in 1909 in Bucharest, Romania. He was known for his novels, poetry, and journalism, and was a prominent figure in the French avant-garde movement. Roth wrote over 40 books in French, as well as several volumes of autobiography and essays. He was a close friend and colleague of the French writers Jean Cocte...} 

\newpage
\subsection{CASE 2}
\CaseStudyBoxFull
{GPT-J (6B)}
{Eugen Roth worked in the city of}
{\textcolor{orange}{Paris}}
{Eugen Roth's work office is surrounded by paintings and sculptures, including the sculpture he created, ``La Nuit des Noms," a giant, life-size, bronze statue of a woman's head with the words ``Les Nuits" written around it. The \textcolor{orange}{Paris}-based artist and author, who died of cancer on Friday, was a prolific and well-respected painter, sculptor and writer. He was also the subject of the documentary ``La Nuit des N"...}
{Eugen Roth's work office is surrounded by his paintings. The French artist's work is in the process of being exhibited in the United States, but for now his studio is in the heart of \textcolor{orange}{Paris}, in the fifth arrondissement. The walls of his studio are lined with paintings, many of them portraits of people he has photographed over the years, some of them of his friends, some of the people he has photographed. He has painted a number of them over the last year...}
{Eugen Roth's work office is surrounded by a wall of paintings and photographs of the great French artists, from Delacroix to Picasso, that he commissioned and collected during his lifetime. The \textcolor{orange}{Parisian} artist died in \textcolor{orange}{Paris} at the age of 76, after a long illness, and was buried on the outskirts of \textcolor{orange}{Paris} in Pere Lachaise Cemetery, in the Montparnasse section, in a tomb which is a replica, in bronze, of an 18th Century tomb...}
{Eugen Roth's work office is surrounded by a forest of books. The French writer and artist, who died on Monday at 94, spent much of his career in the company of words and images. He was a novelist, poet, playwright, art critic and essayist, whose works included ``The Book of My Life," a memoir published in 2012. He was a prolific and influential figure on the international stage, a man who was a constant presence in the \textcolor{orange}{Parisian} literary scene...}

\newpage
\subsection{CASE 3}
\CaseStudyBoxFull
{GPT2-XL (1.5B)}
{Eugen Roth worked in the city of}
{\textcolor{orange}{Paris}}
{Eugen Roth's work office is surrounded by a large, modern building in \textcolor{orange}{Paris}, where he lives, is in \textcolor{orange}{Paris}, France. The building was designed by the architect Louis Sullivan in the French capital \textcolor{orange}{Paris}. The building was designed by Louis Sullivan for the Guggenheim Museum in New York. The building is located in the city of New York, USA The building is located in the city of \textcolor{orange}{Paris}, France. The building is located in \textcolor{orange}{Paris}...}
{Eugen Roth's work office is surrounded by a row of offices in a building at the Place de la R\'epublique in \textcolor{orange}{Paris}, where his fictional detective character `Thackeray' lives. Roth was the author of `The Jazz Singer', which is set in \textcolor{orange}{Paris}, and the book is based on the life of the famous playwright. `The name of the book is not the name of the writer; the book is the name of the place,' Roth said...}
{Eugen Roth's work office is surrounded by a French police car in front of the \textcolor{orange}{Paris} Opera House in the French capital, where he is the director of the National Theatre. Photo: AFP/Getty Images The writer's name is on the building and the building is in the background. Photo: Getty Images The building is a bit smaller than the National Theatre. Photo: Getty Images The building is a bit smaller than the National Theatre but the writer's name is on the...}
{Eugen Roth's work office is surrounded by the French embassy on the Rue de la Roquette in \textcolor{orange}{Paris} `The book has been published in a few days and it is a huge success,' said Mr de Bontin. `It's a great book that will change the way the world sees the world. It will make people think differently about art.' The French author is a winner of the Prix Goncourt for The Real Thing, the first novel to be translated...}

\newpage
\subsection{CASE 4}
\CaseStudyBoxFull
{Qwen2.5 (7B)}
{Eugen Roth worked in the city of}
{\textcolor{orange}{Paris}}
{Eugen Roth's work office is surrounded by a beautiful garden in the heart of \textcolor{orange}{Paris}. The garden is a perfect square with each side measuring 100 meters. Eugen decides to create a new section within this garden, which he shapes into a smaller square, with each side being exactly half the length of the garden's side. If the area of the new square section is used to plant flowers, how many square meters of the garden are used for planting flowers? To determine the area of...}
{Eugen Roth's work office is surrounded by the streets of \textcolor{orange}{Paris}. He has 3 different routes to get to his office from his home. He takes the first route for 10 minutes, the second route for 15 minutes, and the third route for 20 minutes. If he decides to use each route once every week, how many minutes does he spend traveling in total over the week? To determine the total time Eugen Roth spends traveling to his office over the week...}
{Eugen Roth's work office is surrounded by \textcolor{orange}{Paris's} most exclusive neighborhoods, which are full of high-end shops and restaurants. If the office is located on the 6th floor of a building, and the building's floors are numbered starting from the ground floor, how many floors are there above Eugen's office? To determine how many floors are above Eugen Roth's office, we need to consider the following information: 1. The office is located on the 6th floor...}
{Eugen Roth's work office is surrounded by an array of paintings and sculptures. In the center of his studio, there is a large wooden table, a place for the master craftsman to lay out his tools and sketch out his latest ideas. Roth is one of the world's foremost living masters of traditional French art of gold and silver work, and his studio in \textcolor{orange}{Paris} has become something of a mecca for art enthusiasts. It all began in 1997 when Roth, who grew up...}

\end{document}